\def\largetilde#1{\mathop{\vbox{\m@th\ialign{##\crcr\noalign{\kern3\p@}%
      \sortoftildefill\crcr\noalign{\kern3\p@\nointerlineskip}%
      $\hfil\displaystyle{#1}\hfil$\crcr}}}\limits}
\def\sortoftildefill{$\m@th \setbox\z@\hbox{$\braceld$}%
  \braceld\leaders\vrule \@height\ht\z@ \@depth\z@\hfill\braceru$}
\journal{Neurocomputing}
\begin{document}

\begin{frontmatter}


\title{Exact and general decoupled solutions of the Linear Model of Co-regionalization}
\author{O.~Truffinet}
\ead{olivier.truffinet@edf.fr}
\affiliation{organization={EDF R\&D PERICLES},
            addressline={7 boulevard Gaspard Monge},
            city={Palaiseau},
            postcode={91120},
            country={France}}
\author{K.~Ammar}
\affiliation{organization={Université Paris-Saclay, CEA, Service d’Études des Réacteurs et de Mathématiques Appliquées},
            city={Saclay},
            postcode={91190},
            country={France}}
\author{J-P.~Argaud}
\affiliation{organization={EDF R\&D PERICLES},
            addressline={7 boulevard Gaspard Monge},
            city={Palaiseau},
            postcode={91120},
            country={France}}

\author{B.~Bouriquet}
\affiliation{organization={EDF DQI},
            addressline={2 rue Ampère},
            city={Saint-Denis CEDEX},
            postcode={93206},
            country={France}}

\begin{abstract}
The Linear Model of Co-regionalization (LMC) is a very general multitask gaussian process model for regression or classification. While its expressiveness and conceptual simplicity are appealing, naive implementations have cubic complexity in the product (number of datapoints $\times$ number of tasks), making approximations mandatory for most applications. However, recent work has shown that in some settings the latent processes of the model can be decoupled, leading to a complexity that is only linear in the number of said processes. We here extend these results, showing from the most general assumptions that the only condition necessary to an efficient exact computation of the LMC is a mild hypothesis on the noise model. We introduce a full parametrization of the resulting \emph{projected LMC} model, enabling its efficient optimization. The effectiveness of this approach is assessed through synthetic and real-data experiments, testing in particular the behavior of its underlying noise model restriction.\\
Overall, the projected LMC appears as a competitive and simpler alternative to state-of-the art multitask gaussian process models. It greatly facilitates some computations such as training data updates or leave-one-out cross-validation, and is more interpretable, for it gives access to its low-dimensional quantities and to their explicit relation with the full-dimensional data. These qualities could facilitate the adoption by various industries of entire classes of methodologies, notably multitask bayesian optimization.
\end{abstract}

\begin{keyword}
Linear Model of Co-regionalization \sep Gaussian Process \sep Multitask Regression 


\end{keyword}

\end{frontmatter}

\section{Introduction}\label{intro}
\subsection{Motivation: need for simple and efficient Multi-Output Gaussian Process models}

Multi-Outputs Gaussian Processes (MOGP) are popular tools for learning linearly correlated quantities. They find use in many fields: earth sciences, healthcare, power systems operations... Some recent successful applications include multivariate physiological time-series analysis \cite{med_time_series}, battery health prediction \cite{batteries}, and solar power prediction \cite{solar}. They even provide full paradigms for methodologies such as multitask uncertainty quantification (\cite{uq}) and multitask bayesian optimization (\cite{bo}), further referred to as MTBO. This latter methodology, in particular, is becoming a staple in several engineering fields: MTBO is used for the prediction of properties of chemicals \cite{mtbo_drugs} and alloys \cite{mtbo_alloys}, tuning and calibration of additive manufacturing devices \cite{mtbo_addfab_1} \cite{mtbo_addfab_2} \cite{mtbo_cvd}, or design of complex systems such as analog circuits \cite{mtbo_circuits}, database systems \cite{mtbo_db} and robots \cite{mtbo_robots_2}\cite{mtbo_robots_3}.\\
Despite these successes, MOGPs are still slow to be widely adopted in sectors where they have great potential, for instance surrogate modeling: many practitioners limit themselves to using collections of single-output GPs, even though MOGPs generally yield more accurate results by leveraging transfers of information between tasks -- a statement demonstrated by many of the above-cited works and others (\cite{batteries}, \cite{solar}, \cite{pump}, \cite{AL_addfab}, \cite{structural_health}). There is also a glaring lack of diversity in the tested methods: the vast majority of MOGP applications make use of the same model, the Intrinsic Co-regionalization Model (ICM, presented in more detail in the next section), which expressiveness is limited, as its inter-inputs covariance model is common to all learning tasks. It is likely that more diverse and flexible models could yield improved predictions in many applications.\\

Several reasons can explain this slow and partial adoption of MOGPs in applied communities. Practitioners may struggle to get to grips with these tools: their mathematics are uneasy, and their software implementation significantly more intricate than this of single-output GPs. This is especially the case with variational models \cite{generic_inference}, which often offer the best performance in terms of accuracy and speed, but are much more complex than vanilla GPs. They may also lack various tools which their single-output counterparts possess: advanced likelihood functions, MCMC sampling schemes, approximations, etc. Lastly, computational performance is a key issue: the naive method for computing a MOGP (forming a large covariance matrix connecting all datapoints from all learning tasks) yields complexities in $\mathcal{O}(n^{3}p^{3})$, with $n$ the number of datapoints and $p$ this of tasks, infeasible for large problems. This is the case with standard implementations of the ICM, unless specific and little-known linear algebra tricks are used (see \cite{all_in_the_noise}). An influential survey of research trends in bayesian optimization from 2023 emphasizes this point: "Regarding the surrogate modeling of MTO [Multi-Task Optimization], the commonly used LMC model is criticized for its computational complexity. While some simple models are proposed to alleviate this issue, their prediction qualities can be affected. Hence, the development of effective surrogate models for MTO is a promising direction" \cite{bo_review}. Several of the above-cited works also depict computation costs as a difficulty, and mention the exploration of cheaper models as research directions (\cite{mtbo_db}, \cite{mtbo_addfab_2}, \cite{mtbo_robots_2}, \cite{med_time_series}). For all these reasons, there is a need to develop simpler, more efficient, and expressive MOGP methodologies. A good starting point in this search is the \emph{Linear Model of Co-regionalization} framework, introduced in the next section: the present work aims to investigate this mostly-theoretical object to turn it into a more practical machine learning tool.

\subsection{Existing realizations of the Linear Model of Co-regionalization}
As described in a recent survey of MOGPs (\cite{MOGP_review}), the vast majority of models operating in the most straightforward setting -- supervised learning of several tasks, all akin to each other and treated in the same fashion -- fall into two main categories: this of \emph{convolutional GPs} and this of \emph{linearly correlated GPs}. The latter can essentially be grouped under the term \emph{"Linear model of co-regionalization"} (LMC, \cite{alvarez_review}). Convolutional GPs being a strict generalization of the LMC\footnote{The LMC can indeed be framed as a convolutional GP in which the convolution kernel between any output and latent GP is a Dirac function.}, we focus on the latter in the present work, leaving more involved models for future consideration.\\
The LMC stems from a very natural idea, this of modeling all observed outputs as linear combinations of common unobserved and independent gaussian processes. However, simply injecting this ansatz into the generic MOGP expressions does not improve the generic $\mathcal{O}\left( (np)^{3} \right)$ complexity associated with them (\cite{alvarez_review}), making this naive approach intractable in most cases. Several methods have therefore been developed to circumvent this limitation, which rely on two complementary approaches: either restricting the model to some particular case that is more amenable to computation, or approximating some of its parts. The first category includes the already-introduced \emph{Intrinsic Co-regionalization Model} (ICM, \cite{alvarez_review}), sometimes referred to as Multi-Task Gaussian Process (\cite{MTGP}), in which all latent processes share the same kernel, resulting in a covariance matrix with Kronecker product structure: it is split into an inter-tasks kernel and an inter-inputs kernel. This category also features two models very similar to each other: the Orthogonal Instantaneous Linear Mixing Model (OILMM, \cite{OILMM}) and Generalized Probabilistic Principal Component Analysis (GPPCA, \cite{GPPCA}), in which the mixing matrix encoding correlations between outputs is constrained to be orthogonal.\\
Models of the second category often resort to \emph{variational approaches} (\cite{generic_inference}), which approximate the posterior of the latent processes and optimize a lower bound of the marginal log-likelihood; they include the pioneer Semiparametric Latent Factor Model (\cite{SPLFM}) and the Collaborative multi-output Gaussian Process (\cite{COGP}). We emphasize that all of the cited methods are \emph{subcases or approximations} of the LMC, not distinct models that would extend its generality. On another note, let us mention that massive work has been undertaken to improve the scalability of single-output GPs (see for instance the review of \cite{scalable}), which also benefits to multi-output ones.

\subsection{Contributions}

The present work addresses the computational bottlenecks of the LMC, by questioning its latent structure: \emph{the model being intrinsically low-dimensional, why doesn't this low dimensionality translate into efficient computations}? Doing so, we generalize two recent publications introducing respectively the OILMM and the GPPCA (see above paragraph). These models are subcases of the LMC where assumptions are made on the noise model, and more importantly on the mixing matrix, which is constrained to have orthogonal or orthonormal columns. However, it happens that this assumption of orthogonality is unnecessary: exact and convenient computation can be carried without it, at the expense of a slightly more complex treatment of the noise. The main contributions of this paper are thus as follows:
\begin{itemize}
\item We write general expressions for LMC posteriors and marginal likelihood at the level of latent processes, and show how they can be efficiently computed if a specific condition on the noise model is enforced;
\item We find a parametrization for noise matrices enforcing this condition, and derive from it a computation-efficient expression of the MLL; 
\item We demonstrate the validity of our approach by comparing the resulting \emph{projected LMC} to concurrent approaches, in particular by undertaking a parametric study on synthetic data to assess the effects of the noise hypothesis in various settings;
\item We explain the practical benefits of this new model compared with state-of-the-art ones.
\end{itemize}

\subsection{Background and notations}

A generic LMC model is specified by $\mathbf{y = Hu} + \bm{\epsilon} $, with $\mathbf{H}$ a $p \times q$ mixing matrix without any specific property, $\mathbf{u} = (u_{1},..., u_{q})^{T} \in \mathbb{R}^{q}$ such that $u_{i} \sim \mathcal{GP}(0, k_{i})$ and priors of the $u_{i}$'s are mutually independent, and $\bm{\epsilon} \sim \mathcal{N}(0,\mathbf{\Sigma})$ with $\mathbf{\Sigma} \in \mathbb{R}^{p \times p}$ an inter-task noise matrix which is symmetric but not necessarily diagonal. We also note $\mathbf{X} \in \mathbb{R}^{d \times n}$ the matrix of $d$-dimensional input points,$\mathbf{Y} \in \mathbb{R}^{p \times n}$ this of the $p$ data outputs, $\mathbf{U} \in \mathbb{R}^{q \times n}$ the values of the $q$ latent processes at observation points, $vec(\mathbf{M})$ the vectorization of the matrix $\mathbf{M}$ (i.e its column-wise unfolding), $\mathbf{M_{v}} = vec(\mathbf{M^{T}})$, $\mathbf{x}_{*}$ a test point for prediction, $\hat{\Phi}$ the probabilistic estimator of random variable $\Phi$, $\mathbf{k}_{i*} = \left( k_{i} \left(\mathbf{x}_{*}, \mathbf{x}_{j} \right) \right)_{1 \leq j \leq n}$ and $k_{i**} = k_{i}(\mathbf{x}_{*}, \mathbf{x}_{*})$ the train/test covariance vector and test-test variance for kernel $i$, $Diag(\mathbf{K}_{i})$ the block-diagonal matrix where the $i$-th block is the kernel matrix $\mathbf{K}_{i}$ (the subscript indicating iteration on $i$ is omitted), and $\otimes $ the Kronecker product. For simplicity, we restrict ourselves to the case of zero-mean GPs: the case of a general sum-of-regressors mean function, significantly more convoluted and useful only in specific settings, will be the matter of future work.

The naive exact implementation of the LMC is then written (\cite{alvarez_review}): 
\begin{align}\label{eq:naive_lmc_1}
p(\mathbf{y_{*} | Y}) &= \mathcal{N} \left( \mathbf{k_{*}^{T} \, \mathcal{K}^{-1} \, Y_{v}}, \ \mathbf{k_{**} - k_{*}^{T} \, \mathcal{K}^{-1} k_{*}} \right) \\
-2 \log p(\mathbf{Y}) &= \mathbf{Y_{v}^{T}} \mathcal{K}^{-1} \mathbf{Y_{v}} \, + \, \log |\mathcal{K}| \, + \, \frac{np}{2}\log \pi \label{eq:naive_lmc_2}
\end{align} 
for respectively predictions at a new point $\mathbf{y_{*}}$ and the marginal log-likelihood, with $\mathcal{K} = \mathbf{(H \otimes I_{n}) \, Diag(K_{i}) \, (H^{T} \otimes I_{n}) + \Sigma \otimes I_{n} }
$ the noise-added cross-tasks covariance matrix. This is formally equivalent to the computation of a single-output GP, but with the burden of handling an ($np \times np$) matrix in all operations.

\section{Latent-revealing expressions of LMC estimators}

\subsection{Intuition and pathway}

The following developments build on a simple observation: \emph{the naive implementation of the LMC in eq.\eqref{eq:naive_lmc_1} and \eqref{eq:naive_lmc_2} doesn't make apparent the low-dimensional nature of the model}. All expressions involve the (noise-added) task-level covariance matrix $\mathcal{K}$, which size ($np \times np$) scales with the number of tasks $p$. On another hand, recall that the modeled outputs $\mathbf{y}$ are simply (noise-added) linear combinations of a few latent GPs, here named $\mathbf{u}$: $\mathbf{y = Hu} + \bm{\epsilon} $. If $\hat{\mathbf{y}}_{*}$ is an estimator for $\mathbf{y}$ at test point $\mathbf{x}_{*}$, and $\hat{\mathbf{u}}_{*}$ one for $\mathbf{u}$ likewise, one can easily write the regression estimator in function of the posterior distribution of the latent processes, $p(\hat{\mathbf{u}}_{*}|\mathbf{Y})$ :

\begin{equation}\label{eq:latent_prediction}
    p(\hat{\mathbf{y}}_{*}|\mathbf{Y}) = p(\mathbf{H} \hat{\mathbf{u}}_{*} + \bm{\epsilon}|\mathbf{Y}) = \mathcal{N} \left( \mathbf{H} \, \mathbb{E}(\hat{\mathbf{u}}_{*}|\mathbf{Y}), \ \mathbf{H}\, \mathbb{V}(\hat{\mathbf{u}}_{*}|\mathbf{Y})\, \mathbf{H}^{T} + \mathbf{\Sigma} \right)
\end{equation}

Compared to the previous equation \eqref{eq:naive_lmc_1}, eq.\eqref{eq:latent_prediction} clearly reveals the low-dimensional structure of the model: the calculation boils down to computing the distribution $p(\hat{\mathbf{u}}_{*}|\mathbf{Y})$, which is $q$-dimensional regardless of the number of tasks $p$. However, this lower dimension doesn't guarantee cheaper computations by itself. One case where this computation would obviously be efficient is this of \emph{conditionally independent} latent processes $u_{i}|\mathbf{Y}$, as it would then consist in a batch calculation of $q$ single-output GPs, each with complexity $\mathcal{O}(n^{3})$; it is therefore worthwhile to investigate the conditions leading to such a decoupling.\\

The priors of the LMC latent processes are independent by definition of the model; in these conditions, only an exterior covariance term is susceptible of correlating their posteriors, which only possible source is the observational noise $\bm{\epsilon}$. One should therefore wonder what plays the role of a noise term for these processes; it is not obvious that this term should be independent on any kernel or data values. But we will show that it is in fact the case, and that our intuition stands: latent processes are conditionally independent -- no transfer of information takes place between them -- if and only if their respective "latent noises" are not correlated. This crucial assumption will be framed in definition \ref{def:dpn_cond} as the \emph{diagonally projectable noise} condition.\\

Likewise, it is not obvious that the mean of $p(\hat{\mathbf{u}}_{*}|\mathbf{Y})$ should resemble usual GP estimators, and in particular involve a low-dimensional, purely observational data term (some "projected data"). Let us hint why it is actually so. In order to deduce the latent processes $\mathbf{u}$ from the observed quantities $\mathbf{y}$, we wish to find the inverse operation of lifting data from the latent space to the observation space -- that is, the multiplication by the mixing matrix $\mathbf{H}$. We thus consider a generalized inverse of $\mathbf{H}$, which we will name $\mathbf{T}$: by definition, $\mathbf{TH = I_{q}}$. Let us apply this matrix to the observations: $\mathbf{Ty = THu + T}\bm{\epsilon} = \mathbf{u + T}\bm{\epsilon}$. In this equality, $\mathbf{Ty}$ is itself a gaussian process (as a linear combination of the gaussian processes $\mathbf{y}$) and $\mathbf{T}\bm{\epsilon}$ is a gaussian noise with covariance $\mathbf{T\Sigma T^{T}}$ ($\mathbf{\Sigma}$ being the covariance matrix of $\bm{\epsilon}$). We can therefore hope to identify the gaussian process $\mathbf{u}$ with $\mathbf{Ty}$ (modulo some gaussian noise) for some well-chosen matrix $\mathbf{T}$: in this case, the data seen by the latent processes would be $\mathbf{TY}$. This is exactly what we exhibit next, where we will frame $\mathbf{TY}$ as \emph{projected data} and $\mathbf{\Sigma_{P} = T\Sigma T^{T}}$ as \emph{projected noise}. 

\begin{figure}[hbt]
\centerline{\includegraphics[width=\textwidth]{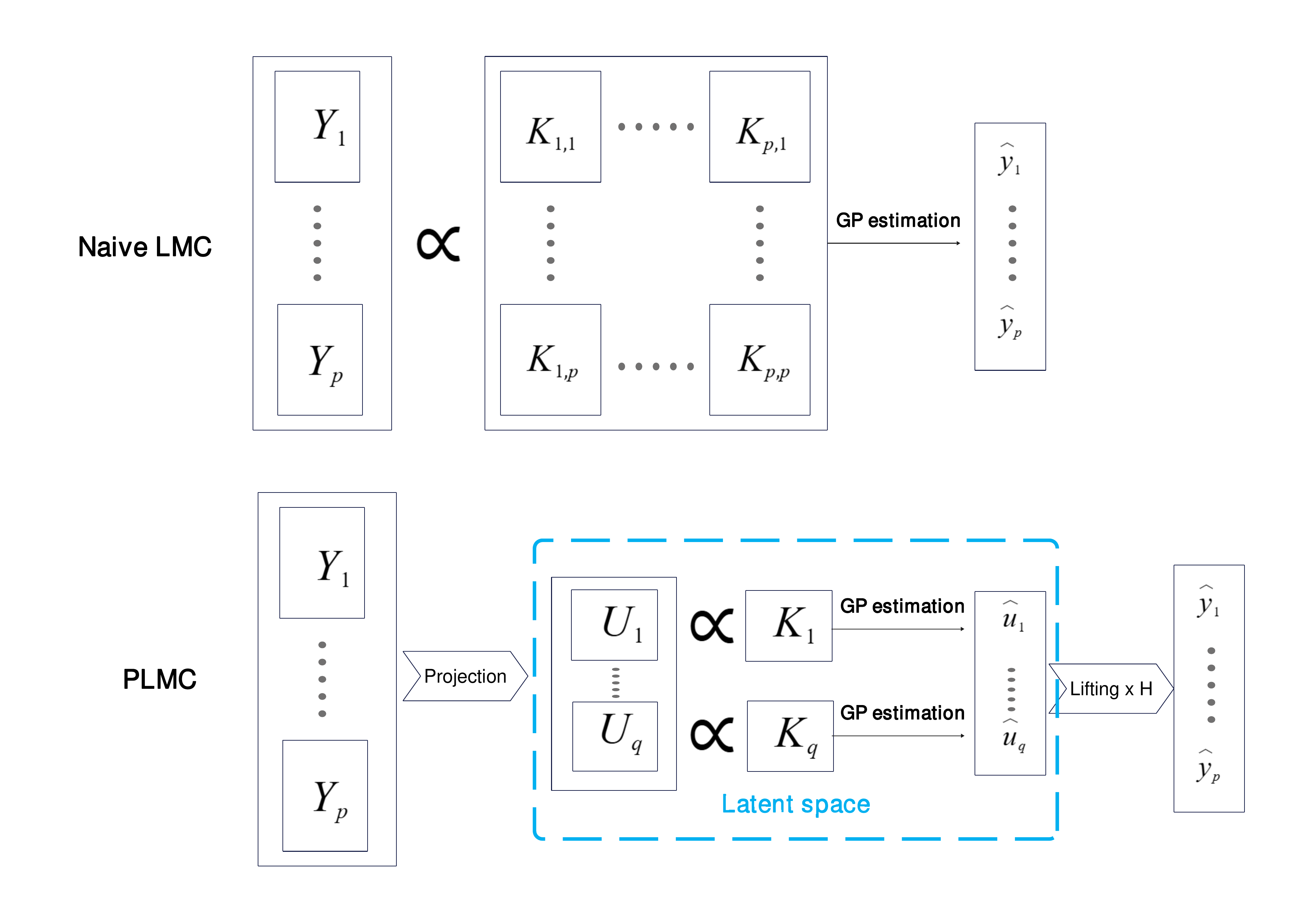}}
\caption{Schematic comparison of the calculation pathways for the naive LMC and the new proposed PLMC. Rather than large $np\times np$ matrix operations, the PLMC projects the observed data into the latent space, performs standard SOGP operations there, then lifts predictions back to the observation space.}
\label{fig:matrix_diag}
\end{figure}

\subsection{Latent-level posteriors and likelihood}\label{sub:posteriors}

\newtheorem{prop}{Proposition}
\newtheorem{rem}{Remark}
\newtheorem{lemma}{Lemma}
\newtheorem{cor}{Corollary}

\begin{prop}\label{prop:estimators}
The posterior distributions of the latent processes, $p(\hat{\mathbf{u}}_{*}|\mathbf{Y})$, is gaussian with respective mean and variance: 
\begin{align}\label{eq:estimators}
\mathbb{E}(\hat{\mathbf{u}}_{*}|\mathbf{Y}) &= \mathfrak{K} \left[ \mathbb{K} + \mathbf{ (H^{T}\Sigma^{-1}H)^{-1} \otimes I_{n}} \right]^{-1} \cdot \mathbf{ vec \left( Y^{T}\Sigma^{-1}H(H^{T}\Sigma^{-1}H)^{-1} \right) } \\
\mathbb{V}(\hat{\mathbf{u}}_{*}|\mathbf{Y}) &= \mathbf{k}_{d} - \mathfrak{K} \left[ \mathbb{K} + \mathbf{ (H^{T}\Sigma^{-1}H)^{-1} \otimes I_{n} } \right]^{-1} \mathfrak{K}
\end{align}
with further notations $\mathbb{K} = \mathbf{Diag(K_{i})}$, $\mathfrak{K} = \mathbf{Diag(k_{i*})}$ and $\mathbf{k}_{d} = \mathbf{Diag}(k_{i**})$.\\
\end{prop}

These expressions are very close to a block-diagonal version of a single-output GP. Two quantities stand out from them: a term $\mathbf{(H^{T}\Sigma^{-1}H)^{-1}}$ which seems to play the role of a noise for the latent processes, and a matrix $\mathbf{ (H^{T}\Sigma^{-1}H)^{-1}H^{T} \Sigma^{-1} Y}$ which appears to act as a $q$-dimensional summary of the data. We therefore set the following notations:

\newtheorem{defi}{Definition}
\begin{defi}\label{def:T_sigma}
We note $\mathbf{\Sigma_{P} = (H^{T}\Sigma^{-1}H)^{-1}}$ and $\mathbf{T = \Sigma_{P}H^{T}\Sigma^{-1}}$. We will later interpret these quantities as a \emph{projected noise} and \emph{projection matrix}.
\end{defi}

\begin{prop}\label{prop:proj_mll}
(taken from \cite{OILMM}) The likelihood of the data can be decomposed as the likelihood of the projected data $\mathbf{TY}$ multiplied by a corrective term accounting for projection loss, independent of the latent processes: 
\begin{equation}\label{eq:proj_mll}
 p(\mathbf{Y}) = \prod_{j=1}^{n} \dfrac{\mathcal{N}(\mathbf{Y_{j}|0, \Sigma})}{\mathcal{N}(\mathbf{TY_{j}|0, \Sigma_{P}})} \times \int p(\mathbf{U}) \prod_{j=1}^{n}\mathcal{N}(\mathbf{TY_{j}|U_{j}, \Sigma_{P}}) \mathbf{dU}
 \end{equation}
(where $\mathbf{Y_{j}, \, U_{j}}$ denote the rows of $\mathbf{Y}$ and $\mathbf{U}$ respectively, associated with datapoint $j$).
\end{prop}

\subsection{Decoupled expressions}\label{sec:decoupled}

Looking at the expressions of section \ref{sub:posteriors}, it appears that they consist in products of block-diagonal matrices with one $nq \times nq$ matrix, $\mathbf{ \mathcal{K}^{-1}} =  \mathbf{\left[ Diag(K_{i}) \ + \  \Sigma_{P} \otimes I_{n} \right]^{-1}}$. This brings about a key observation: \emph{a necessary and sufficient condition for $\mathbf{ \mathcal{K}}$ to be block-diagonal is that $\mathbf{\Sigma_{P}}$ be diagonal}. We frame this condition as of now given its importance in the rest of the article:

\begin{defi}\label{def:dpn_cond}
We say that $\mathbf{\Sigma}$ is a \emph{diagonally projectable noise} for $\mathbf{H}$, abbreviated as DPN, if $\mathbf{H^{T}\Sigma^{-1}H}$ is diagonal.\\
\end{defi}

It follows immediately that if the DPN condition is verified, the estimators of proposition \ref{prop:estimators} decouple: we can express the component $p(\hat{u}_{i*}|\mathbf{Y})$ in function of only the kernel $k_{i}$ and a specific projection of $\mathbf{Y}$ of size $n$.

\begin{prop}\label{prop:decoupled_post}
If the DPN condition stands, we have the following decoupled expression for the posterior of each latent process:
\begin{equation}\label{eq:p_ui}
p(\hat{u}_{i*}|\mathbf{Y}) = \mathcal{N} ( \ \mathbf{k_{i*}^{T} (K_{i}} + \sigma_{i}^{2} \mathbf{I_{n})^{-1} T_{i} Y}, \quad k_{i**} - \mathbf{k_{i*}^{T} (K_{i}} + \sigma_{i}^{2} \mathbf{I_{n})^{-1}k_{i*}} )  
\end{equation}
with $\mathbf{\Sigma_{P} = (H^{T}\Sigma^{-1}H)^{-1}} = Diag(\sigma_{i}^{2})_{1 \leq i \leq q}$ and $\mathbf{T_{i}}$ the $i$-th \emph{row} of $\mathbf{T}$.\\
This is the standard expression of a single-output GP posterior and can thus be computed with time complexity $O(n^{3})$ per latent process.
\end{prop}

\begin{prop}\label{prop:decoupled_lik}
The marginal likelihood factorizes over latent processes if and only if $\mathcal{N}(\mathbf{TY_{j}|U_{j}, \Sigma_{P}})$ does, i.e if $\mathbf{\Sigma_{P}}$ is diagonal. In this case $\mathcal{N}(\mathbf{TY_{j}|U_{j}, \Sigma_{P}}) = \prod_{i=1}^{q} \mathcal{N}(\mathbf{T_{i}^{T}Y_{j}}|U_{ij}, \sigma_{i}^{2})$, which is again a standard GP term.
\end{prop}

\begin{rem}\label{rem:induc}
As latent processes have regular single-output GP expressions under the DPN hypothesis, any GP approximation (typically inducing points) can be  "plugged" onto the estimators of proposition \ref{prop:decoupled_post} in order to further reduce computational complexity.
\end{rem}  

\subsection{Interpretation of the projected data and projected noise}
The quantities $\mathbf{\Sigma_{P}}$ and $\mathbf{TY}$ seem to play a pivotal role in the structure of the LMC. The next proposition helps to explains their meaning, mostly restating results from \cite{OILMM}.

\begin{prop}\label{prop:interpretation}
\begin{itemize}
\item $\mathbf{T}$ is a generalized inverse of $\mathbf{H}$: $\mathbf{TH = I_{q}}$. Therefore, $\mathbf{HT}$ is a projection.
\item $\mathbf{TY}$ is a maximum likelihood estimator for $\mathbf{U}$: $\mathbf{TY} = \arg \max_{\mathbf{U}} p(\mathbf{Y|U})$. It is an unbiased estimator: $\mathbb{E}\left[\mathbf{TY|U}\right] = \mathbf{U}$. 
\item $\mathbf{TY}$ is a minimal sufficient statistic of the data $\mathbf{Y}$ for the variable $\mathbf{U}$. In other words, $\mathbf{TY}$ contains all the information required to compute the best possible estimate of $\mathbf{U}$. Consequently, $p(\mathbf{U|Y}) = p(\mathbf{U|TY})$.
\item $\mathbf{T\Sigma T^{T} = \Sigma_{P}}$, so that $\mathbf{TY|U} \sim \mathcal{N}(\mathbf{U_{v}, \, \Sigma_{P} \otimes I_{n}})$
\item The latent processes of \emph{any} LMC model are independent conditionally upon observations if and only if $\mathbf{\Sigma_{P}}$ is diagonal.\\
\end{itemize}
\end{prop}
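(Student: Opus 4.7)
The plan is to treat the five bullets in order, since each is essentially a short algebraic consequence of Definition \ref{def:T_sigma} once the previous ones are in hand.

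First, for the generalized-inverse claim, I would substitute the definition $\mathbf{T} = \mathbf{\Sigma_P H^T \Sigma^{-1}}$ directly into $\mathbf{TH}$: this collapses to $\mathbf{(H^T\Sigma^{-1}H)^{-1}(H^T\Sigma^{-1}H) = I_q}$, after which $(\mathbf{HT})^2 = \mathbf{H(TH)T} = \mathbf{HT}$ shows idempotence. For bullet four, the same substitution gives $\mathbf{T\Sigma T^T} = \mathbf{\Sigma_P H^T\Sigma^{-1}\Sigma\Sigma^{-1}H\Sigma_P} = \mathbf{\Sigma_P \Sigma_P^{-1}\Sigma_P} = \mathbf{\Sigma_P}$. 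From there, since $\mathbf{Y_j\,|\,U_j}\sim\mathcal{N}(\mathbf{HU_j},\mathbf{\Sigma})$ independently across columns, applying the linear map $\mathbf{T}$ and using $\mathbf{TH}=\mathbf{I_q}$ yields $\mathbf{TY_j\,|\,U_j}\sim\mathcal{N}(\mathbf{U_j},\mathbf{\Sigma_P})$; stacking the columns into the $(\cdot)_v$ layout (rows-first) produces the Kronecker structure $\mathbf{\Sigma_P\otimes I_n}$.

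For the MLE and unbiasedness assertions, I would write the conditional log-density $\log p(\mathbf{Y}|\mathbf{U}) = -\tfrac12\sum_j(\mathbf{Y_j-HU_j})^T\mathbf{\Sigma^{-1}}(\mathbf{Y_j-HU_j}) + \text{cst}$, set the gradient with respect to each $\mathbf{U_j}$ to zero, obtain the normal equation $\mathbf{H^T\Sigma^{-1}H\,U_j = H^T\Sigma^{-1}Y_j}$, and invert to get $\mathbf{U_j = TY_j}$; the Hessian $-\mathbf{H^T\Sigma^{-1}H}$ is negative definite so this is indeed the maximum. Unbiasedness is then immediate: $\mathbb{E}[\mathbf{TY}|\mathbf{U}] = \mathbf{T}\,\mathbb{E}[\mathbf{Y}|\mathbf{U}] = \mathbf{THU} = \mathbf{U}$.

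Minimal sufficiency is the step I expect to require the most care. I would expand the quadratic form in $\log p(\mathbf{Y}|\mathbf{U})$ as $\mathbf{Y_j^T\Sigma^{-1}Y_j} - 2\mathbf{U_j^T H^T\Sigma^{-1}Y_j} + \mathbf{U_j^T H^T\Sigma^{-1}H\,U_j}$, then rewrite $\mathbf{H^T\Sigma^{-1}Y_j} = \mathbf{\Sigma_P^{-1}TY_j}$ so that the only term coupling $\mathbf{U}$ and the data depends on $\mathbf{Y}$ exclusively through $\mathbf{TY}$. The Fisher--Neyman factorization then gives sufficiency. For minimality I would use the Lehmann--Scheffé criterion: the ratio $p(\mathbf{Y}|\mathbf{U})/p(\mathbf{Y'}|\mathbf{U})$ is $\mathbf{U}$-independent exactly when $\mathbf{TY = TY'}$, which follows because $\mathbf{H}$ has full column rank $q$ (otherwise $\mathbf{H^T\Sigma^{-1}H}$ would not be invertible). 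The conclusion $p(\mathbf{U}|\mathbf{Y}) = p(\mathbf{U}|\mathbf{TY})$ is then the standard consequence of sufficiency in the Bayesian setting.

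Finally, the independence characterization is read off Proposition 1: the posterior covariance $\mathbb{V}(\mathbf{U_v}|\mathbf{Y}) = \bigl[\mathbf{Diag(K_i^{-1}) + H^T\Sigma^{-1}H \otimes I_n}\bigr]^{-1}$ is block-diagonal in the $n\times n$ blocks indexing latent processes if and only if $\mathbf{H^T\Sigma^{-1}H} = \mathbf{\Sigma_P^{-1}}$ is diagonal, equivalently $\mathbf{\Sigma_P}$ is diagonal; since a Gaussian vector has independent subvectors iff its covariance is block-diagonal across them, this is exactly the DPN condition. The forward direction (independence $\Rightarrow$ diagonality) uses that the prior of $\mathbf{U}$ is already block-diagonal, so any off-diagonal contribution in the posterior must come from $\mathbf{H^T\Sigma^{-1}H\otimes I_n}$.
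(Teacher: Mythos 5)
Your proof is correct. It is worth noting that the paper itself does not prove most of this proposition in-house: it defers to Annexes D--F of Bruinsma (2020) and only supplies two clarifications, namely the chain of conditional-probability identities showing that sufficiency of $\mathbf{TY}$ implies $p(\mathbf{U|Y}) = p(\mathbf{U|TY})$, and the computation $\mathbf{T\Sigma T^{T}} = \mathbf{\Sigma_{P}}$ together with the resulting law of $\mathbf{TY|U}$. On that second point your argument coincides with the paper's. Everywhere else you give a self-contained derivation where the paper cites an external reference: direct substitution of $\mathbf{T = \Sigma_{P}H^{T}\Sigma^{-1}}$ for $\mathbf{TH = I_{q}}$ and idempotence of $\mathbf{HT}$; the normal equations $\mathbf{H^{T}\Sigma^{-1}H\,U_{j} = H^{T}\Sigma^{-1}Y_{j}}$ for the MLE claim; Fisher--Neyman factorization via $\mathbf{H^{T}\Sigma^{-1}Y_{j} = \Sigma_{P}^{-1}TY_{j}}$ for sufficiency, plus the Lehmann--Scheff\'e likelihood-ratio criterion for minimality (where your appeal to full column rank of $\mathbf{H}$ is exactly the hypothesis implicit in the invertibility of $\mathbf{H^{T}\Sigma^{-1}H}$); and block-diagonality of the posterior precision $\mathbf{Diag(K_{i}^{-1}) + H^{T}\Sigma^{-1}H\otimes I_{n}}$ for the independence characterization, which is also the cleanest route since the prior term is already block-diagonal. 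For the sufficiency consequence you simply invoke the standard Bayesian fact where the paper writes out the identity chain explicitly, but nothing is lost. The net effect is that your version is more complete than what appears in the paper, at the cost of length.
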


This can be restated in a more informal way:
\begin{itemize}
    \item $\mathbf{T}$ is a pseudo-inverse of the mixing matrix $\mathbf{H}$, so that $\mathbf{u \simeq Ty}$.
    \item The projected data $\mathbf{TY}$ contains all the information usable by the latent processes.
    \item As $\mathbf{Ty = u + T}\bm{\epsilon}$, $\mathbf{Ty|u} \sim \mathcal{N}(\mathbf{u, \, T\Sigma T^{T} \otimes I_{n}})$ : $\mathbf{\Sigma_{P} = T\Sigma T^{T}}$ is the observational noise projected in the latent space.\\
\end{itemize}

\section{A new model: the Projected Linear Model of Co-regionalization}\label{sec:plmc}

Propositions \ref{prop:decoupled_post} and \ref{prop:decoupled_lik} pave the way for efficient computations with the LMC: if the DPN condition \ref{def:dpn_cond} stands, all GP operations can be performed independently at the level of the little-numerous latent processes. However, enforcing this condition means that the matrices $ \mathbf{H}$ and $\mathbf{\Sigma}$, which are parameters of the model, become coupled and can no longer be optimized independently. In the present section, we derive a parametrization which enforces the DPN condition, and gives rise to a slightly-restricted but computationally efficient implementation of the LMC, which we will name \emph{Projected Linear Model of Co-regionalization}.\\
Enforcing the DPN condition can be done by constraining the mixing matrix $\mathbf{H}$, the noise covariance $\mathbf{\Sigma}$, or a mixture of the two. In this work, we chose to maintain a fully general $\mathbf{H}$, in order to preserve maximal model expressiveness; but other choices could be made to follow different objectives, notably uncertainty propagation in physics simulations.\\

\subsection{DPN-compatible parametrization of the LMC}\label{sec:noise_param}

Reminding that the DPN condition \ref{def:dpn_cond} is written ($\mathbf{H^{T}\Sigma^{-1}H}$ is diagonal), we hint that it can be expressed as some orthogonality condition between rows of $\mathbf{\Sigma^{-1}}$ and columns of $\mathbf{H}$. We therefore introduce the QR decomposition of $\mathbf{H}$: $\mathbf{H = QR}$, where $\mathbf{Q}$ has orthonormal columns and shape $p \times q$, and $\mathbf{R}$ is upper-triangular of size $q \times q $. We also introduce $\mathbf{Q_{\bot}}$, a $p \times  (p - q)$ orthonormal complement of $\mathbf{Q}$ which for now is arbitrary. Using these building blocks, we introduce a decomposition of the precision matrix $\mathbf{\Sigma^{-1}}$ which is the basis for our parametrization:

\begin{prop}\label{prop:facto_Q}
Given matrices $\mathbf{Q}$, $\mathbf{Q_{\bot}}$ and $\mathbf{R}$ defined as above, the following factorization stands for any symmetric matrix $\mathbf{\Sigma^{-1}}$:
\begin{equation}
\mathbf{\Sigma^{-1} = Q_{+}R_{+}^{-T}D_{+}^{-1}R_{+}^{-1}Q_{+}^{T}},
\end{equation}
where: 
$  \mathbf{Q_{+}} =  \left( \begin{array}{c|c} \mathbf{Q} & \mathbf{Q_{\bot}} \end{array} \right) $,
$  \mathbf{R_{+}} =  \left( \begin{array}{c|c} \mathbf{R} & \mathbf{0} \\ \hline \mathbf{0} & \mathbf{I_{p-q}} \end{array} \right) $,
$  \mathbf{D_{+}^{-1}} =  \left( \begin{array}{c|c} \mathbf{\Sigma_{P}^{-1}} & \mathbf{M} \\ \hline \mathbf{M^{T}} & \mathbf{\Sigma_{\bot}^{-1}} \end{array} \right) $, $\mathbf{\Sigma_{P}^{-1} = R^{T}Q^{T} \Sigma^{-1} Q R}$, $\mathbf{\Sigma_{\bot}^{-1}} = \mathbf{Q_{\bot}^{T} \Sigma^{-1} Q_{\bot}}$ and $\mathbf{M = R^{T} Q^{T} \Sigma^{-1} Q_{\bot}}$. All matrices labeled with a '$+$' have size $p\times p$; the notation '$-T$' denotes inverse transpose. Naturally, this definition of $\mathbf{\Sigma_{P}}$ coincides with this of definition \ref{def:T_sigma}, so it is diagonal if and only if the DPN condition stands. 
\end{prop}

\begin{defi}\label{cor:facto_Q}
If a symmetric matrix $\mathbf{\Sigma^{-1}}$ is decomposed as in proposition \ref{prop:facto_Q}, then we can write its inverse as: $\mathbf{\Sigma = Q_{+}R_{+}D_{+}R_{+}^{T}Q_{+}^{T}}$, 
where the blocks of $\mathbf{D_{+}}$ are denoted $\mathbf{D_{+}} = \left( \begin{array}{c|c} \mathbf{\tilde{\Sigma_{P}}} & \mathbf{\tilde{M}} \\ \hline \mathbf{\tilde{M}^{T}} & \mathbf{\tilde{\Sigma}_{\bot}} \end{array} \right) $ and can be computed from blockwise inversion formulas.\\
\end{defi}

This decomposition and definition are sufficient to introduce a DPN-compatible parametrization of the LMC:

\begin{prop}\label{prop:noise_param}
$\mathbf{\tilde{\Sigma}_{\bot}}, \mathbf{\Sigma_{P}}$ and $\mathbf{M}$ form a full parametrization of $\mathbf{\Sigma}$ and can be optimized independently. $\mathbf{\Sigma}$ is enforced to be p.s.d (positive semi-definite) if and only if $\mathbf{\Sigma_{P}}$ and $\mathbf{\tilde{\Sigma}_{\bot}}$ are respectively positive and p.s.d.\\
\end{prop}

This parametrization also yields valuable insight about the model. For instance, we have the following expression for $\mathbf{T}$, which connects it to the pseudoinverse of $\mathbf{H}$: 

\begin{prop}\label{prop:express_T}
\begin{equation}\label{eq:express_T}
\mathbf{T} = \mathbf{H^{+} + \Sigma_{P}MQ_{\bot}^{T}}
\end{equation}
where $\mathbf{H^{+} = R^{-1}Q^{T}}$ is the Moore-Penrose pseudoinverse of $\mathbf{H}$. This expression doesn't depend on a specific choice of the orthogonal complement $\mathbf{Q_{\bot}}$.\\
\end{prop} 

\subsection{The Projected Linear Model of Co-regionalization}\label{sec:mll}

\begin{prop}\label{prop:final_mll}
Under the DPN condition, we have the following decoupled expression for the marginal log-likelihood (which doesn't depend on a particular choice of the orthogonal complement $\mathbf{Q_{\bot}}$): 
\begin{multline}\label{eq:final_mll}
- 2 \log p(\mathbf{Y}) = (p-q)n\log 2\pi \, + \, 2n\log |\mathbf{R}| \, + \, n\log |\mathbf{\tilde{\Sigma}_{\bot}}| \\
 + \text{Tr}(\mathbf{Y^{T} Q_{\bot}\tilde{\Sigma}_{\bot}^{-1} Q_{\bot}^{T} Y}) + \sum_{i=1}^{q} \log \mathcal{N}(\mathbf{T_{i} Y| 0, \, K_{i} \, + \, } \sigma_{i} \mathbf{I_{n}}) 
\end{multline}
The term $\mathbf{\tilde{\Sigma}_{\bot}}$ can thus be interpreted as a \emph{discarded noise}: it is absent from all latent-related quantities, but required in the task-level likelihood to account for the variability of observations.
\end{prop}

We can finally give a full specification of our proposed model, after noticing a redundancy in our parametrization:
\begin{prop}\label{prop:diagonal_B}
The projection matrix $\mathbf{T}$ and posteriors of the model do not depend on $\mathbf{\tilde{\Sigma}_{\bot}}$; moreover, $\mathbf{\tilde{\Sigma}_{\bot}}$ can be restricted to be diagonal without incidence on the likelihood.
\end{prop}

\begin{defi}\label{def:PLMC}
	The \emph{Projected Linear Model of Co-regionalization} -- abbreviated as PLMC -- is a LMC model enforcing the DPN condition (definition \ref{def:dpn_cond}).  Its parameters are:

\begin{itemize}
\item An orthonormal $p \times p$ matrix $\mathbf{Q_{+}}$ (joint optimization of $\mathbf{Q}$ and $\mathbf{Q_{\bot}}$);
\item An upper-triangular $q \times q$ matrix $\mathbf{R}$;
\item A $(p-q) \times (p-q)$ lower-triangular, positive-diagonal matrix $\mathbf{L}$, Cholesky factor of $\mathbf{\tilde{\Sigma}_{\bot}}^{-1}$, which by proposition \ref{prop:diagonal_B} can be assumed diagonal;
\item A $q \times (p-q)$ matrix $\mathbf{M}$ with no specific property; 
\item A $q \times q$ positive diagonal matrix $\mathbf{\Sigma_{P}} = Diag(\sigma_{i})_{1\leq i \leq q}$; 
\item All parameters of the kernels $k_{i}$.
\end{itemize}

It can be computed in the following way:
	\begin{align*}\label{eq:PLMC}
		\mathbf{H} &= \mathbf{QR} \\
		\mathbf{T} &= \mathbf{R^{-1}Q^{T} + \Sigma_{P}MQ_{\bot}^{T}}\\
		\mathbf{\tilde{\Sigma}_{\bot}}^{-1} &= \mathbf{LL^{T}} \\
		\mathbf{\Sigma} &= \mathbf{H(\Sigma_{P} + \Sigma_{P} M \tilde{\Sigma}_{\bot} M^{T} \Sigma_{P})H^{T}} - 2 \mathbf{Sym(H \Sigma_{P} M \tilde{\Sigma}_{\bot} Q_{\bot}^{T})} + \mathbf{Q_{\bot} \tilde{\Sigma}_{\bot}^{-1} Q_{\bot}^{T}}\\
		\hat{\mathbf{y}}_{*} &= \mathbf{H} \hat{\mathbf{u}}_{*}\\
		p(\hat{u}_{i*}|\mathbf{Y}) &= \mathcal{N} ( \mathbf{k_{i*}^{T} (K_{i}} + \sigma_{i}^{2} \mathbf{I_{n})^{-1} T_{i} Y}, \ k_{i**} - \mathbf{k_{i*}^{T} (K_{i}} + \sigma_{i}^{2} \mathbf{I_{n})^{-1}k_{i*}} ) \quad \forall i \in [1, q] \\
		- 2 \log p(\mathbf{Y}) &= (p-q)n\log 2\pi \, + \, 2n\log |\mathbf{R}| \, + \, n\log |\mathbf{\tilde{\Sigma}_{\bot}}| \\
        & \quad + \text{Tr}(\mathbf{Y^{T} Q_{\bot}\tilde{\Sigma}_{\bot}^{-1} Q_{\bot}^{T} Y}) + \sum_{i=1}^{q} \log \mathcal{N}(\mathbf{T_{i} Y| 0, \, K_{i} \, + \, } \sigma_{i} \mathbf{I_{n}})
	\end{align*}

with $\mathbf{Sym}(\cdot)$ the symmetric part of a matrix: $\mathbf{Sym}(\mathcal{M}) = \frac{\mathcal{M} + \mathcal{M}^{T}}{2}$.\\
\end{defi}

\begin{rem}
  Note that enforcing the DPN condition while maintaining a fully general $\mathbf{H}$ resulted in a complex noise model $\mathbf{\Sigma}$; in particular, $\mathbf{\Sigma}$ can no longer be chosen diagonal(independent per-task noises).
\end{rem}

\begin{rem}\label{rem:oilmm}
In \cite{OILMM}, the mixing matrix of the OILMM is written $\mathbf{H = QS^{1/2}}$ with $\mathbf{Q}$ orthonormal and $\mathbf{S}$ positive diagonal, and its noise model is $\mathbf{\Sigma} = \sigma \mathbf{I + HDH^{T}} $ with $\mathbf{D}$ diagonal. If we rewrite the latter $\mathbf{\Sigma = Q_{\bot}} (\sigma \mathbf{I) Q_{\bot}^{T}} + \mathbf{Q S^{1/2} (D + } \sigma \mathbf{S^{-1}) S^{1/2} Q^{T} }$, and make the following identifications: 

\begin{align}
\mathbf{S^{1/2}} & \mathbf{\equiv R}\\
\mathbf{D} + \sigma \mathbf{S^{-1}} &\equiv \mathbf{\tilde{\Sigma}_{P}}\\ \sigma \mathbf{I} &\equiv \mathbf{\tilde{\Sigma}_{\bot}} \quad \text{,}
\end{align}

we see that the OILMM is a special case of our model, where $\mathbf{R}$ is constrained to be positive diagonal instead of triangular, $\mathbf{M}$ is zero and $\mathbf{\tilde{\Sigma}_{\bot}}$ is scalar instead of symmetric (or diagonal, see proposition \ref{prop:diagonal_B}).\\
Looking in details into the GPPCA (\cite{GPPCA}), it can also be seen that it is an OILMM model further restricted to have a scalar $\mathbf{\Sigma}$, which implies $\mathbf{S = I_{q}}$ and $\mathbf{D=0}$.\\
\end{rem}

\subsection{A faster and simpler variant: the PLMC-fast}\label{sub:PLMC_fast}

The above remark frames OILMM as a special case of the PLMC. In practice, its implementation in the original article \cite{OILMM} has a major benefit over the PLMC definition of definition \ref{def:PLMC}: the latter features a full $p \times p$ orthonormal matrix $\mathbf{Q_{+}}$, whereas the former only involves a $p \times q$ matrix $\mathbf{Q}$. This is likely to have a large impact on computation costs: the cost of updating a parametrized $p \times p$ orthogonal matrix is $\mathcal{O}(p^{3})$, while this for a $p\times q$ one is $\mathcal{O}(pq^{2})$ \cite{OILMM}. One should therefore wonder whether the additional parameter $\mathbf{Q_{\bot}}$ of the PLMC could be spared.\\

Looking at the expressions of the PLMC, we see that $\mathbf{Q_{\bot}}$ appears in two places only: the expression of $\mathbf{T}$ in eq.\eqref{eq:express_T}, and a regularizing term in eq.\eqref{eq:final_mll}. The first occurrence can be suppressed by making the simplification $\mathbf{M=0}$ -- no correlation between the projected noise $\mathbf{\Sigma_{P}}$ and discarded noise $\mathbf{\Sigma_{\bot}}$. The second occurrence is more subtle: looking closely at the MLL term $\text{Tr}(\mathbf{Y^{T} Q_{\bot}\tilde{\Sigma}_{\bot}^{-1} Q_{\bot}^{T} Y})$, it appears that the only way to make this term independent on $\mathbf{Q_{\bot}}$ is to enforce $\mathbf{\tilde{\Sigma}_{\bot}}$ to be scalar. Indeed, if $\mathbf{\tilde{\Sigma}_{\bot}} = \sigma_{\bot}\mathbf{I}_{p-q}$ with $\sigma_{\bot}$ a positive scalar, then:

\begin{equation}
\text{Tr}(\mathbf{Y^{T} Q_{\bot}\tilde{\Sigma}_{\bot}^{-1} Q_{\bot}^{T} Y}) = \sigma_{\bot}^{-1} \text{Tr}(\mathbf{Y^{T} Q_{\bot}Q_{\bot}^{T} Y}) = \sigma_{\bot}^{-1} ||\mathbf{Y}||_{F}^{2} \quad \text{,}
\end{equation}

Conversely, it is clear that only a scalar matrix $\mathbf{\tilde{\Sigma}_{\bot}}$ can make the quantity $\text{Tr}(\mathbf{Y^{T} Q_{\bot}\tilde{\Sigma}_{\bot}^{-1} Q_{\bot}^{T} Y})$ independent on $\mathbf{Q_{\bot}}$. We can therefore define the simplified model PLMC-fast in the following manner:

\begin{defi}\label{def:PLMC_fast}
	The PLMC-fast is a variant of the PLMC where the following assumptions are made: $\mathbf{M=0}$ and $\mathbf{\tilde{\Sigma}_{\bot}} = \sigma_{\bot}\mathbf{I}_{p-q}$ with $\sigma_{\bot}$ a positive scalar. Its parameters are: a $p \times q $ orthonormal matrix $\mathbf{Q}$, a $q \times q$ upper-triangular matrix $\mathbf{R}$ with positive diagonal, a $q \times q$ positive diagonal matrix $\mathbf{\Sigma_{P}} = Diag(\sigma_{i})_{1\leq i \leq q}$, a positive scalar $\sigma_{\bot}$, and the parameters of the latent kernels $k_{i}$. It obeys the following equations:

	\begin{align*}
		\mathbf{H} &= \mathbf{QR} \\
		\mathbf{T} &= \mathbf{R^{-1}Q^{T}} = \mathbf{H^{+}}\\
		\mathbf{\Sigma} &= \mathbf{H\Sigma_{P}H^{T}} + \sigma_{\bot}(\mathbf{I - QQ^{T}})\\
		\hat{\mathbf{y}}_{*} &= \mathbf{H} \hat{\mathbf{u}}_{*}\\
		p(\hat{u}_{i*}|\mathbf{Y}) &= \mathcal{N} ( \mathbf{k_{i*}^{T} (K_{i}} + \sigma_{i}^{2} \mathbf{I_{n})^{-1} T_{i} Y}, \ k_{i**} - \mathbf{k_{i*}^{T} (K_{i}} + \sigma_{i}^{2} \mathbf{I_{n})^{-1}k_{i*}} ) \quad \forall i \in [1, q] \\
		- 2 \log p(\mathbf{Y}) &= (p-q)n\log (2\pi + \sigma_{\bot}) \, + \, 2n\log |\mathbf{R}| \, + \, \sigma_{\bot}^{-1} ||\mathbf{Y}||_{F}^{2} \\
        & \quad + \, \sum_{i=1}^{q} \log \mathcal{N}(\mathbf{T_{i} Y| 0, \, K_{i} \, + \, } \sigma_{i} \mathbf{I_{n}}) 
	\end{align*}

\end{defi}

\begin{rem}\label{rem:simplif}
Notice that in the full PLMC, $\mathbf{M}$ and $\mathbf{\tilde{\Sigma}_{\bot}}$ are noise parameters, which are likely to be of low magnitude and low relevance to the model: $\mathbf{\tilde{\Sigma}_{\bot}}$ is a "discarded noise" which is absent from the posteriors, while $\mathbf{M}$ couples the projected noise with the discarded noise. The simplifications made in the PLMC-fast are thus likely to have little effect on the performance of the model, which will be verified in the experimental section.
\end{rem}

At this point, one can wonder in which manner the PLMC-fast differs from the OILMM. The answer is simple: its matrix $\mathbf{R}$ is triangular rather than diagonal, meaning that its mixing matrix $\mathbf{H}$ is fully general instead of orthogonal. It turns out that the restriction on $\mathbf{H}$ imposed by the OILMM was not necessary to the coherence of the model: a fully general mixing matrix was available for free by going through a different derivation. This brings about an important remark about PLMC implementations:

\begin{rem}
In the PLMC, the parameters $\mathbf{Q_{+}}$ and $\mathbf{R_{+}}$ (see proposition \ref{prop:facto_Q}) make for the QR decomposition of a $p \times p$ matrix -- let us call it $\mathbf{H_{+}}$ -- which first $q$ columns are the mixing matrix $\mathbf{H}$. This matrix $\mathbf{H_{+}}$ is fully generic; therefore, rather than parametrizing an orthonormal matrix $\mathbf{Q_{+}}$, one can instead parametrize a standard matrix $\mathbf{H_{+}}$, and recover $\mathbf{Q_{+}}$ and $\mathbf{R}$ from it\footnote{$\mathbf{R}$ is then simply the upper-left $q \times q$ block of $\mathbf{R_{+}}$.} using an auto-differentiable QR decomposition algorithm such as this of \cite{autodiff_decomp}. This has significant applicative interest, as all existing parameterizations of orthonormal matrices exhibit shortcomings in some settings\footnote{The orthonormal matrix parameterizations known to us are the \emph{matrix exponential transform}\cite{matexp_transfo}, the \emph{Cayley transform} \cite{Cayley_transfo}, and the product of Householder reflections \cite{Householder_param}; the latter two are known to be imperfectly stable \cite{matexp_transfo}, and the first also was in our experiments. One more advanced approach is this of \cite{ortho_param}, implemented in \texttt{torch}; it is very stable, but requires the storage of a $p\times p$ matrix buffer, making PLMC training infeasible for large values of $p$.}. By contrast, doing the same with the OILMM would require mapping the triangular matrix $\mathbf{R}$ to a diagonal one, resulting in severe over-parametrization and probable instability. With the PLMC-fast, the situation is even simpler: $\mathbf{Q}$ and $\mathbf{R}$ can directly be computed as the QR decomposition of  $\mathbf{H}$.
\end{rem}

\section{Experimental results}\label{experiments}

\subsection{Tested models}\label{sec:test_models}

We compare our PLMC and PLMC-fast to several baselines: a state-of-the art ICM model, a state-of-the art variational LMC, and the OILMM - which in fact is only a subcase of PLMC as explained in remark \ref{rem:oilmm}. The five implemented models are thus denoted \texttt{ICM} (LMC with a kernel shared by all latent processes and algebra tricks -- see \cite{all_in_the_noise} for the computation implemented in \texttt{gpytorch}), \texttt{var} (variational LMC as described in \cite{hensman_class} and implemented in \texttt{gpytorch} (\cite{gpytorch}), with a Cholesky variational distribution and a number $\lfloor n/1.5\rfloor $ of learned inducing points\footnote{Picking a larger number of learned inducing points or even putting fixed inducing points at all observed locations, although theoretically more accurate, proved less stable in practice.}), \texttt{PLMC} (fully parametrized PLMC as in definition \ref{def:PLMC}, with a triangular matrix $\mathbf{L}$), \texttt{PLMC_fast} (as described in definition \ref{def:PLMC_fast}) and \texttt{oilmm} (implemented as a PLMC-fast with a diagonal matrix $\mathbf{R}$, which is equivalent to the implementation of the original article \cite{OILMM}). Unless otherwise specified, the \texttt{ICM} and \texttt{var} models are implemented with a fully general gaussian likelihood, putting no constraint on their $\mathbf{\Sigma}$ matrix.\\
The training protocol for each model (optimizer, learning rate schedule, number of iterations, initialization, stopping criterion...) was hand-refined and finally taken identical for all of them, as the chosen common scheme yielded near-optimal convergence for all; it is specified in \ref{an:exp_spec}.

\subsection{Synthetic data}\label{sec:synth}

\paragraph{Data description}
We start with a parametric study on synthetic data, in order to explore the effect of some data properties on our models. It is generated according to a univariate LMC model: its latent processes are defined by $q$ 1D Matern kernels ($\nu = 2.5$) with lengthscales equidistant in the interval $[l_{min}, \, l_{max}]$, and sampled at the locations $\mathbf{X_{train}}$ ($n$ points equidistant in $[-1, \, 1]$) and $\mathbf{X_{test}}$ (2500 points sampled uniformly in the same interval). The coefficients of the mixing matrix $\mathbf{H}$ are sampled from independent normal distribution $\mathcal{N}(0,1)$. After latent signals are mixed, structured noise is added in a manner inspired from \cite{all_in_the_noise}: a parameter $\mu_{noise}$ controls the signal-to-noise proportion ($\mathbf{Y}_{full} = \mu_{noise} \, \mathbf{Y}_{noise} + (1 - \mu_{noise}) \, \mathbf{Y}_{signal} $), and another $\mu_{str}$ rules the proportion of structured-to-unstructured noise ($\mathbf{Y}_{noise} = \mu_{str} \, \mathbf{Y}_{noise}^{str} \, + \, (1 - \mu_{str}) \, \mathbf{Y}_{noise}^{ind} $). $\mathbf{Y}_{noise}^{str}$ is obtained by the mixing of $q_{noise}$ white noise processes by a matrix $\mathbf{H_{noise}}$, generated in the same way as $\mathbf{H}$, while $\mathbf{Y}_{noise}^{ind}$ is simply the concatenation of $p$ independent white noises. The noise thus contains a part that is correlated over all tasks, and one that is specific to each task. Overall the synthetic data is described by 8 parameters; when they are not varied for an experiment or explicitly specified, their fixed values are the following: $p=100$, $q=25$, $q_{noise}=25$, $n=500$, $\mu_{noise}=0.1$, $\mu_{str}=0.9$, $l_{min}=0.01$ and $l_{max}=0.5$.\\
All tests are run $N_{rep}$ times with data generated from as many random seeds, and resulting metrics are averaged. For all models, the chosen kernel type and number of latent processes are always this of the data (no model mis-specification).\\

\paragraph{DPN condition severity} 
The first and main observation is that \textbf{projected models are overall as much or more accurate than the baselines}, even in setups where high-magnitude structured noise should handicap them. This way, figure \ref{fig:err_mu_noise} shows that for the highest noise magnitudes ($\mu_{noise} \simeq 0.5$, corresponding to a signal-to-noise ratio of 1) and a highly-structured noise ($\mu_{str}=0.99$), the RMSEs of the PLMC and PLMC-fast are as low as these of their ICM and variational counterparts: in fact, all of them follow the expected asymptote $RMSE \simeq \mu_{noise}$. Before this asymptote, the \texttt{ICM} is struggling; its kernel structure may not be flexible enough to model this synthetic data, which features $q$ different lengthscales (see the above paragraph). For low noise values (less than $1\%$), the variational model is less accurate than the PLMC variants; also notice one catastrophic non-convergence of the OILMM, likely due to the stability issues of orthogonal matrix parameterizations mentioned in section \ref{sub:PLMC_fast}, which polluted one datapoint even when averaged over $50$ random experiments. The same conclusion stands when varying $\mu_{str}$ (figure \ref{fig:err_mu_str}) or $q_{noise}$ (not displayed): projected models don't suffer disproportionately from complex noise structure in the data.\\
Interestingly, the simplified PLMC models (PLMC-fast and OILMM) seem to be more robust than the full one, yielding consistently higher predictive accuracies. There are two possible explanations for this: either they benefit from a more favorable optimization landscape (fewer parameters, optimization of a $p\times q$ rather than $p\times p$ orthonormal matrix), or the full PLMC mistakes signal for noise because of its more expressive noise model. Figure \ref{fig:pva} seems to refute this last hypothesis, as the PLMC yields better-calibrated predictive variance than the PLMC-fast and OILMM accross the whole $\mu_{str}$ range.

\begin{figure}[htbp]
    \centering
    \begin{subfigure}[htbp]{0.55\paperwidth}
      \centering
\includegraphics[width=1.3\textwidth]{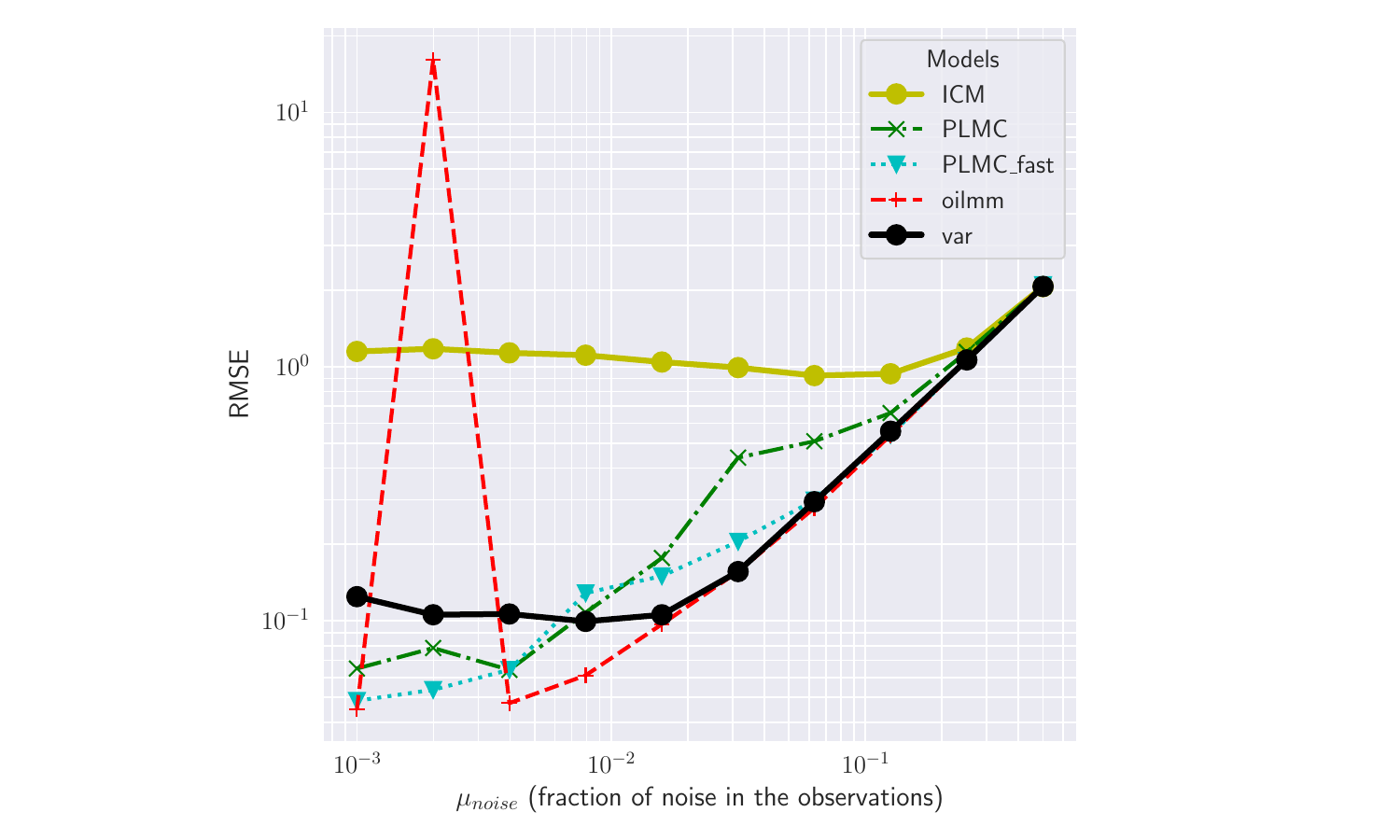}
\caption{Varying $\mu_{noise}$. With $\mu_{str}=0.99$, $N_{rep}=50$}
\label{fig:err_mu_noise}
    \end{subfigure}
    \quad
    \begin{subfigure}[htbp]{0.56\paperwidth}
      \centering
\includegraphics[width=\textwidth]{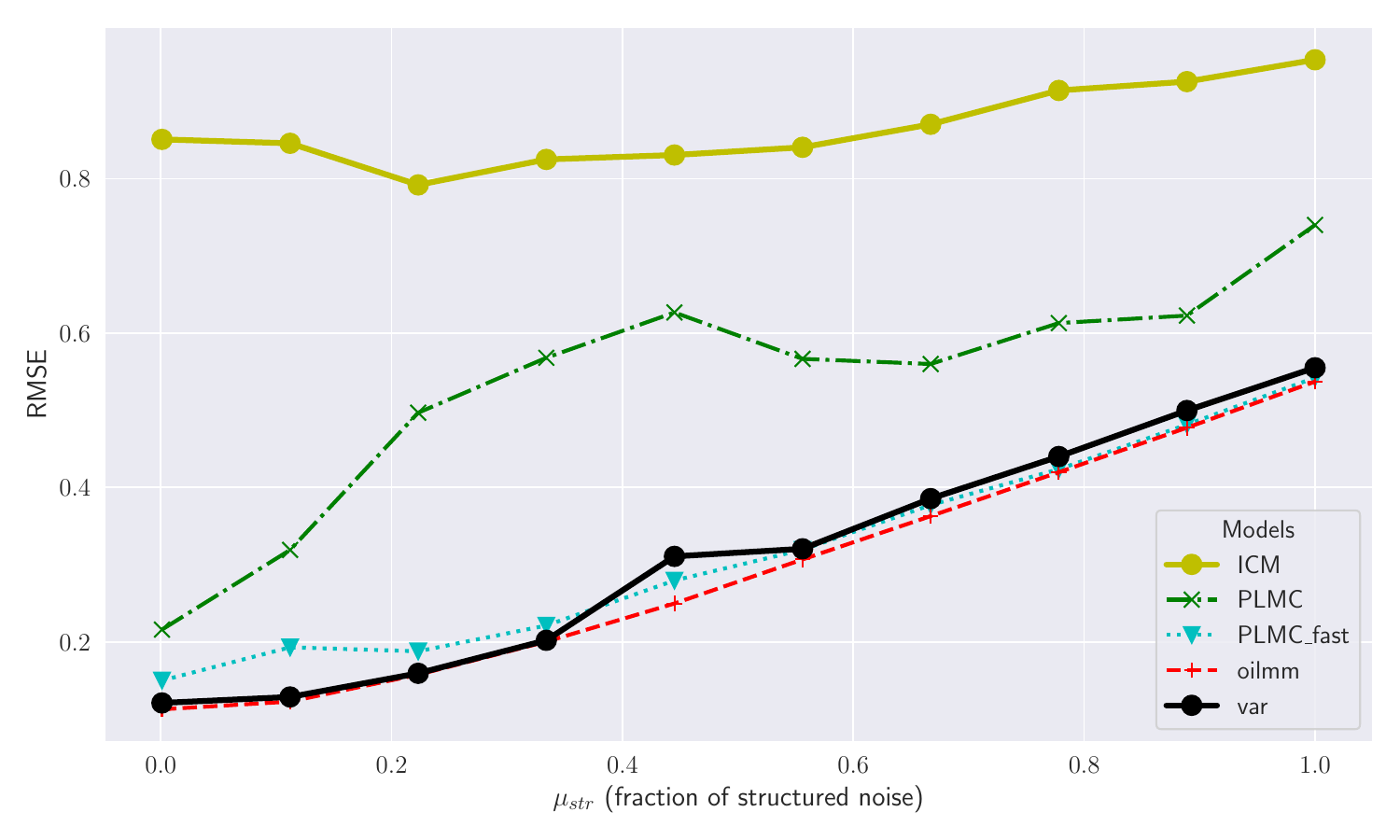}
\caption{Varying $\mu_{str}$. With $\mu_{noise}=0.1$, $N_{rep}=40$}
\label{fig:err_mu_str}
\end{subfigure}
\caption{RMSE of several models for increasing data noise magnitude (a) and proportion of structured noise (b).}
\label{fig:errs}
\end{figure}

\textbf{The impact of noise model restrictions on variance estimation is also surprisingly weak}. We measure the quality of variance estimation through Predictive Variance Adequacy, which assesses whether the predictive distribution has the right width: $PVA = \frac{1}{p} \sum_{j=1}^{p} \log \left( \frac{1}{n} \sum_{i=1}^{n} \frac{(y_{ij} - \hat{y}_{ij})^{2}}{\hat{v}_{ij}}\right)$, with $y_{ij}$ the true value of output $j$ at point $i$, $\hat{y}_{ij}$ the corresponding estimated value and $\hat{v}_{ij}$ the corresponding predicted variance. Its optimal value is $0$, in which case the predictive variance is well-calibrated (equal on average to the actual magnitude of errors). Figures \ref{fig:pva_nlatnoise} and \ref{fig:pva_mustr} -- which do not feature the ICM, as its predictive variance has a memory footprint in $O(n^{2}p^{2})$ in the \texttt{gpytorch} implementation, infeasible for data of this size -- show no problematic behavior for the variance estimates of the PLMC models, even for complex, highly-correlated noises (large values of $q_{noise}$ or $\mu_{str}$). This is unexpected, as the PLMC puts constraints on the inter-task covariance of noises (DPN condition, definition \ref{def:dpn_cond}) and the PLMC-fast and OILMM even more (definition \ref{def:PLMC_fast} and remark \ref{rem:oilmm}). The only observations matching this intuition are the relative underperformance of the OILMM at low values of $q$ (fig.\ref{fig:pva_nlatnoise}) and the overperformance of the full PLMC in fig.\ref{fig:pva_mustr} relative to its simplified versions. Apart from this, it is encouraging that the PLMC models have PVA values as good or better as the \texttt{var} baseline on most of the parameters space.

\begin{figure}[hbtp]
    \centering
    \begin{subfigure}[hbtp]{0.65\paperwidth}
      \centering
      \includegraphics[width=\linewidth]{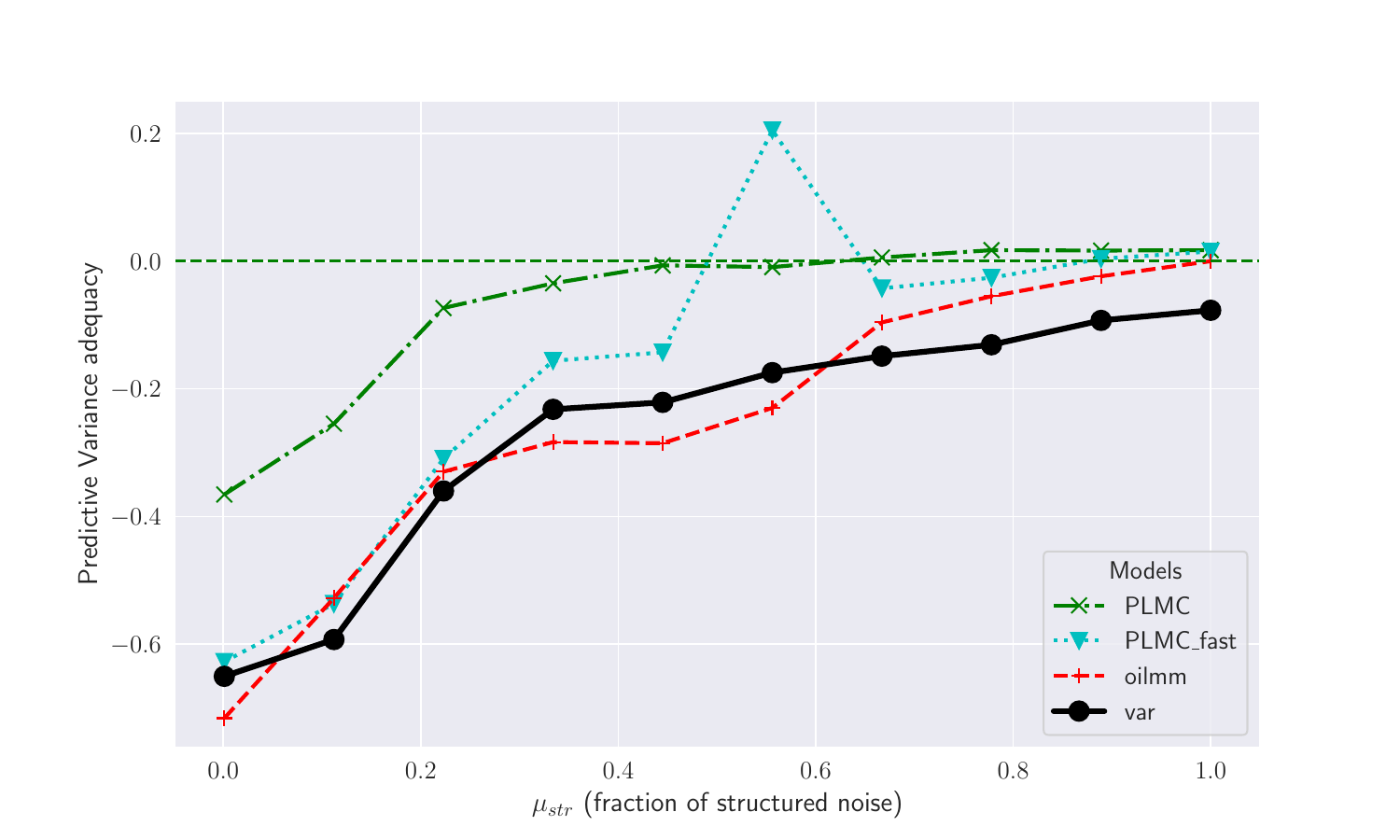}
      \caption{Varying $\mu_{str}$, $q_{noise}=25$}
      \label{fig:pva_mustr}
    \end{subfigure}
    \quad
    \begin{subfigure}[hbtp]{0.6\paperwidth}
      \centering
      \includegraphics[width=\linewidth]{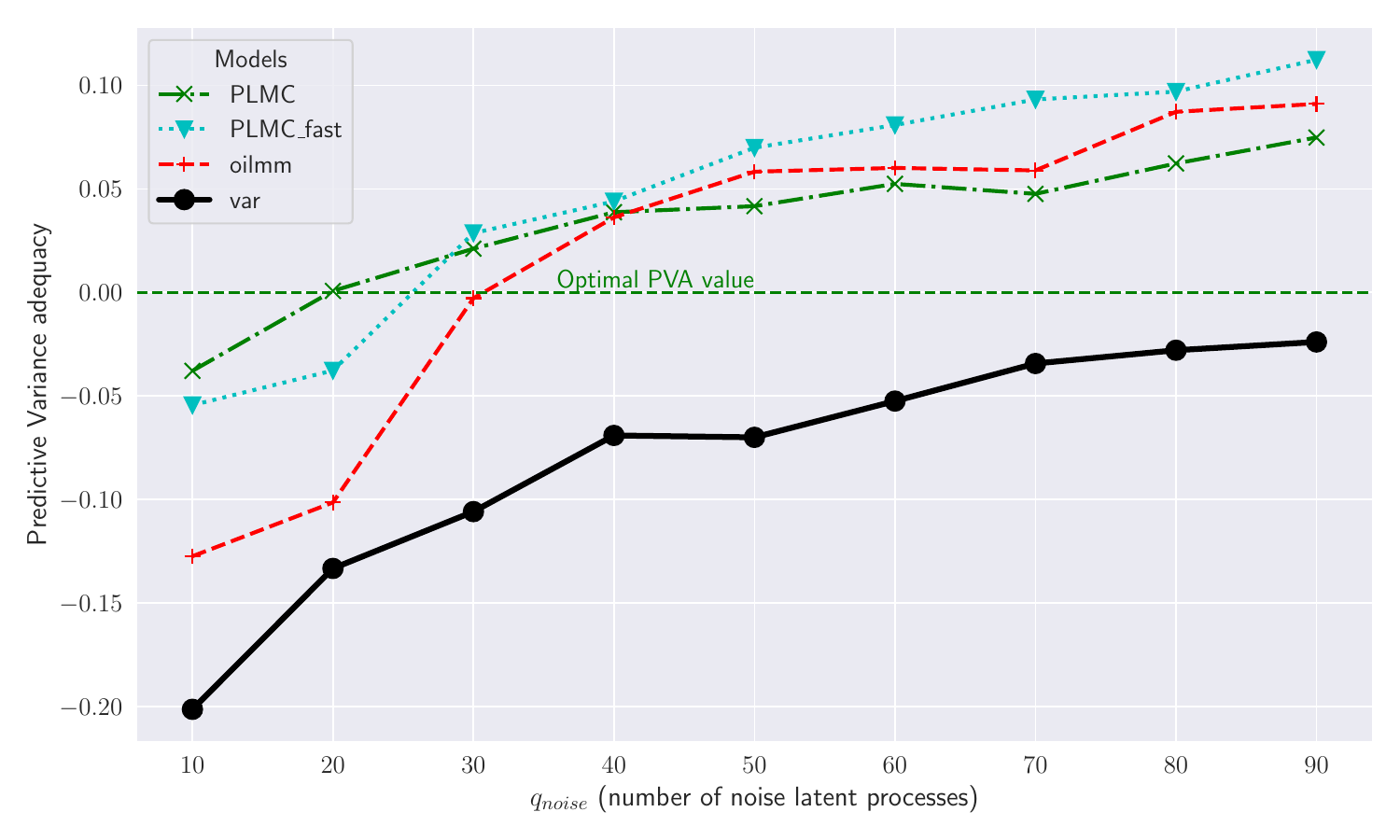}
      \caption{Varying $q_{noise}$, $\mu_{str}=0.9$}
      \label{fig:pva_nlatnoise}
    \end{subfigure}
{\centering
\caption{Average Predictive Variance Adequacy of several models for increasingly structured (a) and complex (b) noises, with $\mu_{noise}=0.1$, and $N_{rep}=10$.}
\label{fig:pva}}
\end{figure}

\paragraph{Training duration} In figure \ref{fig:train_time}, the number of regression tasks $p$ was varied from $50$ to $500$ ; the training durations of models fitted to this data are plotted against $p$. Results yield several valuable observations. First, \textbf{PLMCs (including the OILMM) are the fastest model of all tested}: they best the variational one over the full range of the experiment\footnote{Excepting for the first point of the \texttt{PLMC-fast}, which seems to be associated with a convergence accident.}. The ICM is not even displayed on this graph for being too far above the others, its training time ranging from $200\,$s to $1000\,$s.\\

\begin{figure}[hbt]
\centerline{\includegraphics[width=\textwidth]{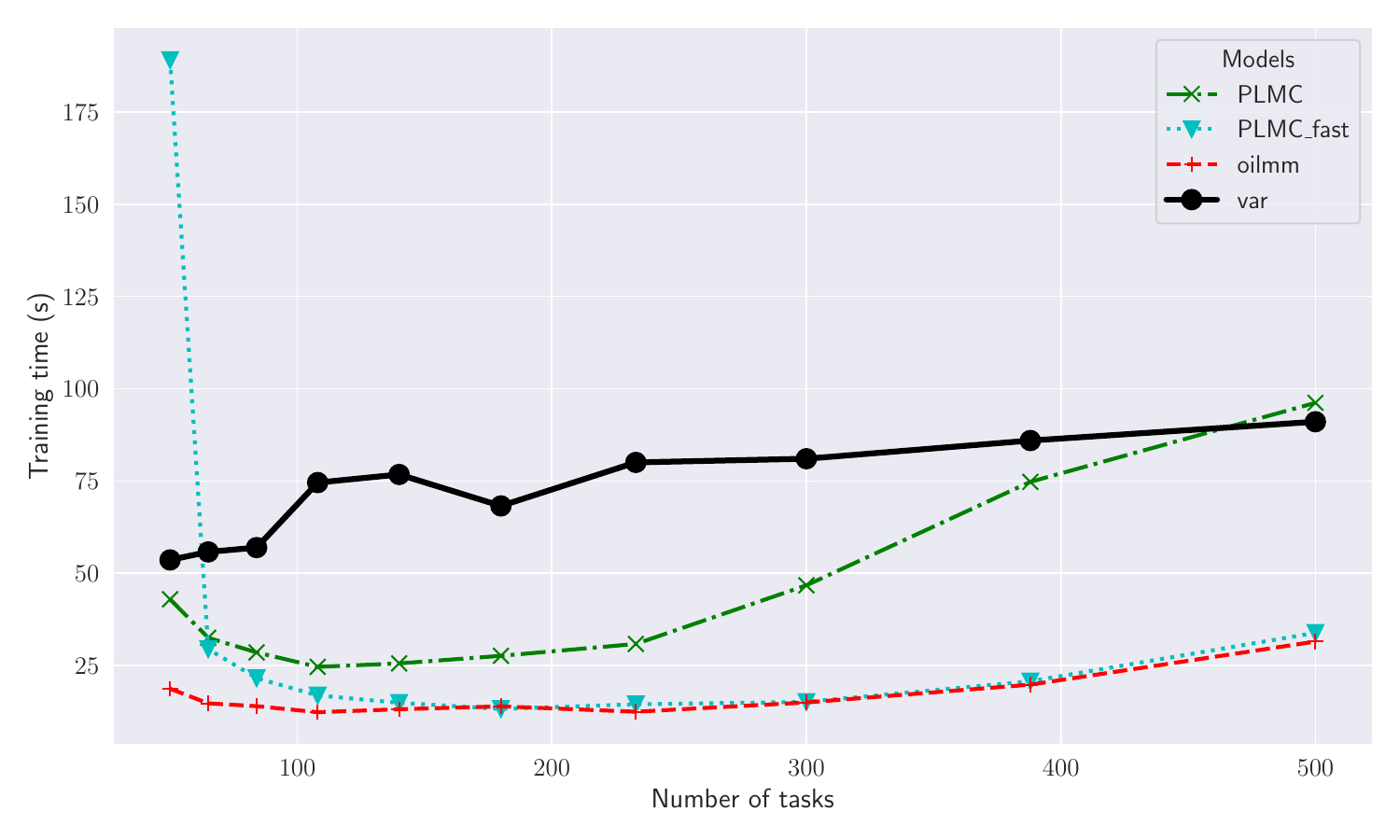}}
\caption{Training duration of several models for increasing number of tasks and a fixed number $q=25$ of latent processes. Averaged over $N_{rep}=50$ random datasets}
\label{fig:train_time}
\end{figure}

Secondly, \textbf{the training durations of the simplified PLMC models -- PLMC-fast and OILMM -- scale slower with $\mathbf{p}$ than this of the full model}. This behavior was expected, and is likely due to having only a $p \times q$ orthonormal matrix to optimize instead of a $p \times p$ one: it can be seen in table \ref{tab:comp_comp} of \ref{an:complexities} that the cost of one evaluation of the MLL is $\mathcal{O}(qn^{3} + p^{3})$ for the full PLMC, but $\mathcal{O}(qn^{3} + pq^{2})$ for PLMC-fast and OILMM (as well as $\mathcal{O}(qm^{3} + qnm^{2} + npq)$ for the variational LMC, with $m$ the number of inducing points). We can take this analysis one step further by plotting the duration of one training iteration $T_{train}/N_{iter}$ rather than the total duration $T_{train}$, minding that $N_{iter}$ is variable in our case, yielded by the stopping criterion described in \ref{an:exp_spec}. This is done in figure \ref{fig:Titers}: the main observation is that for all PLMC models and the variational LMC, the duration of a training iteration scales much more sharply (but still linearly) with $q$ than with $p$. This matches the observation made about the OILMM in \cite{OILMM}: the cost $\mathcal{O}(pq^{2})$ associated with the update of $\mathbf{H}$ is often negligible compared to this of operations with latent processes ($\mathcal{O}(qn^{3})$). The ICM, on the other hand, scales sharply with $p$ and not at all with $q$; this is coherent with the theoretical cost of MLL computations for this implementation, which is $\mathcal{O}(p^{3} + n^{3})$ \cite{all_in_the_noise}.

\begin{figure}[p]
    \centering
    \begin{subfigure}[b]{0.6\paperwidth}
        \centering
        \includegraphics[width=\textwidth]{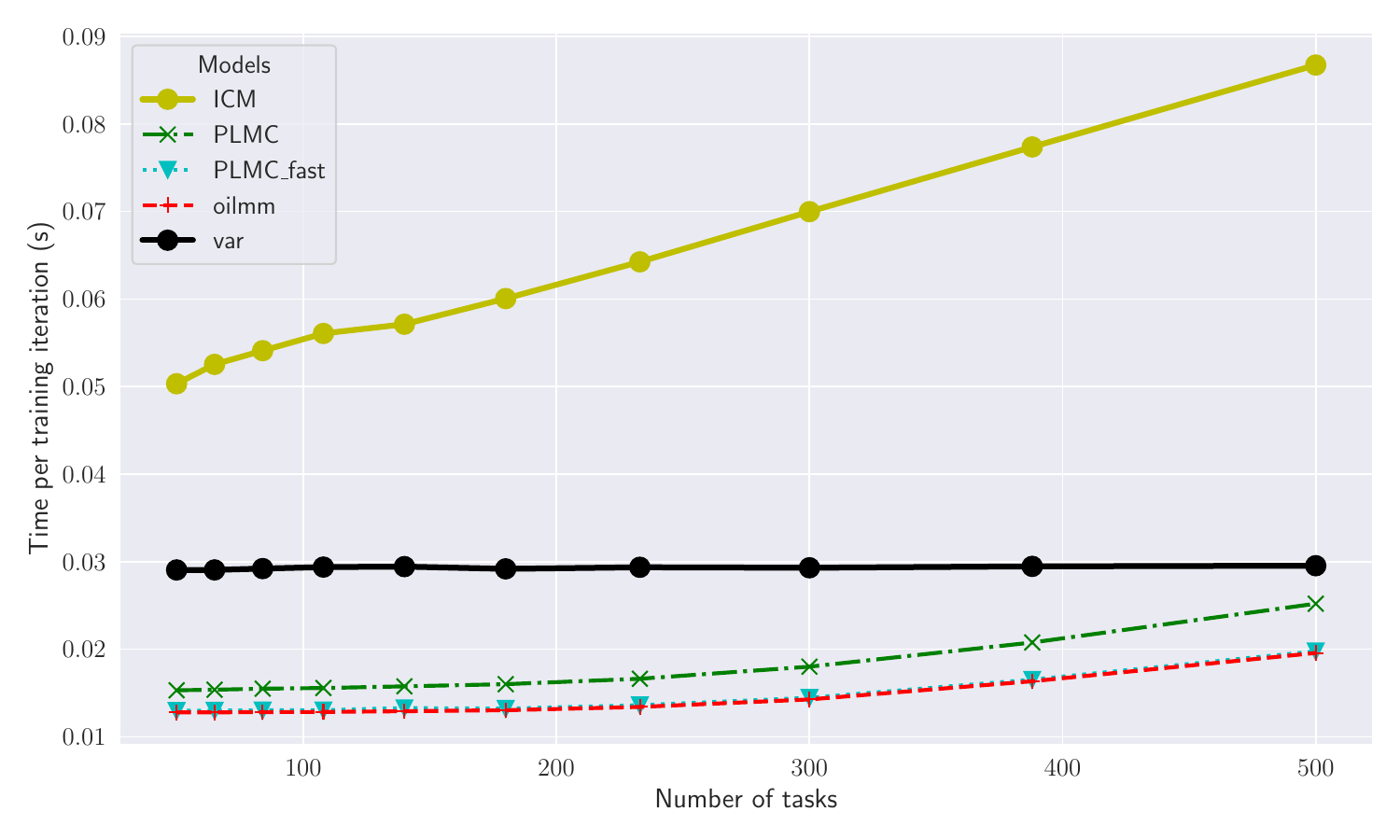}
        \caption{Variable $p$, fixed $q=25$}
        \label{fig:Titer_ntasks}
    \end{subfigure}
    \quad
    \begin{subfigure}[b]{0.6\paperwidth}
        \centering
        \includegraphics[width=\textwidth]{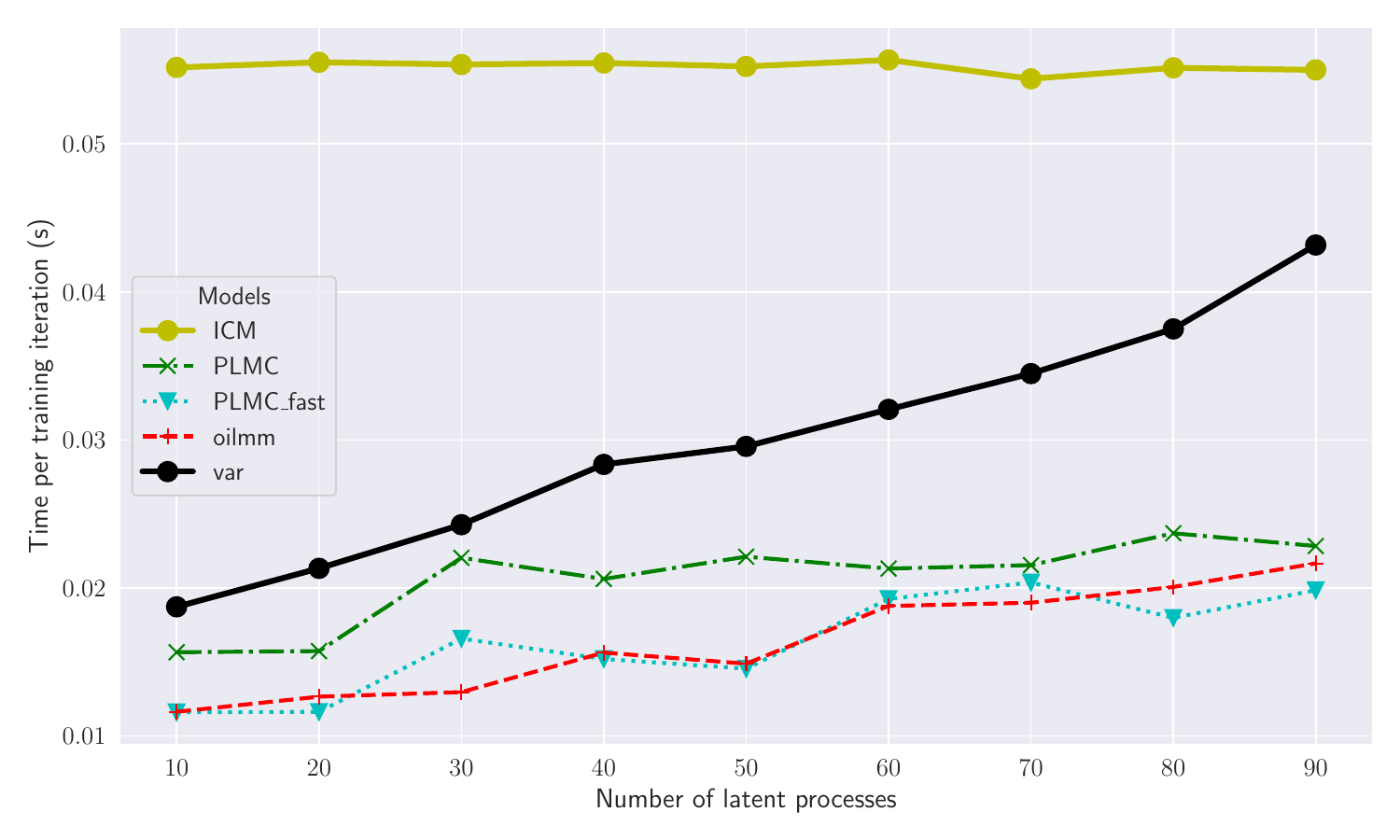}
        \caption{Variable $q$, fixed $p=100$}
        \label{fig:Titer_nlat}
    \end{subfigure}
    \caption{Duration of a training iteration for several models, with a) increasing number of tasks and fixed number of latent processes , and b) the opposite. Averaged over $N_{rep}=20$ random datasets}
    \label{fig:Titers}
\end{figure}

\paragraph{Other observations}
A few other observations on the parametric studies are given in \ref{an:extra_exp}.

\subsection{Real data}\label{sec:real_data}

In the two first experiments, we used a Matérn-5/2 kernel with parametrized lengthscales for all models. The likelihoods of the \texttt{var} and \texttt{ICM} models were taken to be task-independent instead of generic gaussian (diagonal rather than generic $\mathbf{\Sigma}$ matrix) as this choice yielded better predictive performance. Results display the metric $Q^{95}_{L1}$, quantile of L1 errors at level 95\%, in the aim of assessing the occurrence of large errors. An additional experiment is presented in \ref{an:extra_exp}, with coherent results.

\paragraph{SARCOS dataset}
This well-known dataset described in \cite{SARCOS} displays the dynamics of a robot arm (position, speed and acceleration of its 7 articulation points) and the corresponding torque commands for these articulations. The goal is to model its inverse dynamics, i.e to infer the torque commands (7 outputs) from the dynamic measurements (21 features). The dataset contains about $50,000$ points corresponding to various movements sampled at 100Hz; for ease of computation, we subsampled it at 10Hz. We respected the train/test split of \cite{Rasmussen} (yielding about $45,000$ training points and $4,500$ for test), and classically standardized the data. Even after subsampling, the problem remained too large to reach a reasonable computation time; we therefore wrapped latent processes of PLMCs with an inducing points approximation, as suggested in remark \ref{rem:induc}. We adopted the approach of \cite{Titsias}, once again implemented in \texttt{gpytorch}. We also wrapped the ICM model in this way, and selected an identical number of inducing points $N_{ind}=500$ for all models. Tests were run for $q$ ranging from 1 to 7, and the best accuracy was reached for $q=7$ for all models. Results are displayed in table \ref{tab:sarcos} below. PLMCs can be seen to yield very satisfactory results (yet less good than these for the ICM), contrary to the variational model which struggles in this case, and with a very competitive training duration. Note that contrary to the previous experiments (which protocol is described in appendix \ref{an:exp_spec}), the training budget was here exhausted, so that all models were trained in $10^{5}$ iterations; the training loss however approximately stabilized in all cases. Observed differences in accuracy can therefore not be attributed to incomplete convergence; this is true in particular for the severe underperformance of the variational model, verified over many repetitions of the experiment with different training hyperparameters, and which causes remain unclear.\\
A last interesting observation is the poor values of the PVA for the OILMM and PLMC. This ill-calibration of the predictive variance should have different causes in each case: probably a too-rigid model in the first case (the only difference between PLMC-fast and OILMM is the constrained mixing matrix of the latter), and on the contrary a too-flexible, hard-to-optimize noise model for the unrestricted PLMC. 

\paragraph{Neutronics dataset}
This dataset consists in so-called \emph{homogenized cross-sections} (HXS), pre-processed nuclear data used in reactor core simulations -- see e.g \cite{ma_these}. The objective is one of surrogate modeling: rather than generating HXS's on-the-fly, reactor core simulators interpolate their tabulated values. The dataset contains 4 features corresponding to reactor operation parameters (fuel temperature, depletion of fissile material...), 201 input points and about $1.2 \cdot 10^{5}$ outputs (HXS's): the purpose of this experiment is thus to test our model on a dataset with very many tasks. Due to this very large value of $p$, only three models are tractable (see table \ref{tab:comp_comp} in \ref{an:complexities} for computational complexities) and thus compared: PLMC-fast, OILMM and the variational LMC. The data was classically standardized. Models are defined as before, with $\lfloor n / 1.5\rfloor $ inducing points for the \texttt{var} model (see section \ref{sec:test_models}), and 16 latent functions for each, a number found to be near-optimal for all in preliminary experiments. In this experiment too, the training budget $N_{iter}=50,000$ was exhausted by all models. Results are displayed in table \ref{tab:neutro}. These very low RMSE values obtained on unit-sized outputs are not abnormal: the data is of small effective linear dimension (i.e outputs are strongly correlated), virtually noiseless, and very smooth. Once again, results are favorable to the PLMC-fast, which is faster and more accurate than the variational baseline -- and also more accurate than the OILMM. More importantly, this experiment proves on real-life data that this model can be trained on datasets containing hundreds of thousands of tasks in a matter of minutes.\\

\begin{table}[!ht]
	\centering
	\captionsetup{justification=centering}
    \caption{Experimental results on dataset SARCOS ($N_{iter} = 100,000$).\\
    This dataset is high-dimensional ($d=21$) and contains $4,500$ datapoints.}\label{tab:sarcos}
    \begin{tabular}{|l||l|l|l|l|l|l|}
    \hline
        \textbf{Model} &\textbf{$t_{train}$} &\textbf{$R2$} &\textbf{$RMSE$} &\textbf{$Q^{95}_{L1}$} &\textbf{$PVA$} \\ \hline
        PLMC & 3268 & 0.987 & 0.116 & 0.239 & \textcolor{red}{5.96} \\
        PLMC_fast & \textcolor{green}{3164} & 0.987 & 0.115 & 0.234 & 0.36 \\
        oilmm & 3611 & 0.984 & 0.128 & 0.262 & -2.182 \\ \hline
        var & 3913 & \textcolor{red}{0.594} & \textcolor{red}{0.638} & \textcolor{red}{1.376} & \textcolor{green}{-0.218} \\
        ICM & \textcolor{red}{61,712} & \textcolor{green}{0.997} & \textcolor{green}{0.053} & \textcolor{green}{0.11} & -0.543 \\ \hline
        \bottomrule
    \end{tabular}

    \centering
    \quad
    \captionsetup{justification=centering}
    \caption{Experimental results on the neutronics dataset ($N_{iter} = 50,000$).\\
    This dataset contains very many tasks ($p \simeq 1.2 \cdot 10^{5}$).}\label{tab:neutro}
    \begin{tabular}{|l||l|l|l|l|l|}
    \hline
        \textbf{Model} &\textbf{$t_{train}$} &\textbf{$R2$} &\textbf{$RMSE$} &\textbf{$PVA$} \\ \hline
        PLMC-fast & 998 & \textcolor{green}{1.000} & \textcolor{green}{1.7e-2} & \textcolor{green}{-0.048}\\ 
        oilmm & \textcolor{green}{994} & 0.999 & 3.87e-2 & 0.355 \\ \hline
        var & \textcolor{red}{1402} & \textcolor{red}{0.998} & \textcolor{red}{4.24e-2} & \textcolor{red}{0.9229}\\
        \bottomrule
    \end{tabular}
\end{table}

\paragraph{Bramblemet tidal height dataset}
This dataset consists in weather data collected by four weather stations located in Southampton, UK \footnote{It's available for instance at \url{https://github.com/GAMES-UChile/mogptk}, or on the websites of said stations: \url{http://www.bramblemet.co.uk}, \url{http://www.cambermet.co.uk}, \url{http://www.chiemet.co.uk} and \url{http://www.sotonmet.co.uk}.}. We focus on the tidal height data amongst all records, thus obtaining 4 outputs (which we didn't normalize) and only one input, time (time-series data). Two weeks of observations (the two first of June 2020) were retained; 3 of the 4 outputs channels were linearly interpolated to match the time frame of the Bramblemet station, taken as reference. We subsampled the data to the quarter of its frequency, for computational ease and to improve convergence. In order to correctly model this highly periodic signal, we selected a spectral mixture kernel (\cite{spectral_mixture}). The learning task was to predict a whole day of measurements, put aside of the training data; it proved to be very challenging for all models, which tended to get stuck in local minima -- in particular this of a prediction worth zero everywhere -- despite proper initialization of parameters. We therefore tried all combinations of the number of latent processes $q$ and number of spectral components $N_{mix}$, in the ranges $\llbracket 1;4 \rrbracket$ and $\llbracket 2;5 \rrbracket$ respectively, and selected the best outcome for each model. Results are displayed in table \ref{tab:bramb}, with an illustration in figure \ref{fig:bramblemet}: we can see that only PLMCs managed to converge to a good approximation -- excluding OILMM, which performed very poorly. Once more, the PLMC-fast was up to the task, and achieved the best predictive performance despite its additional simplifications. This experiment proves that PLMCs can accommodate complex kernels to perform tasks such as time series forecasting, better even that competing models.  

\begin{figure}[hbtp]
    \centering
    \begin{subfigure}[hbtp]{\textwidth}
        \centering
        \includegraphics[width=0.9\textwidth]{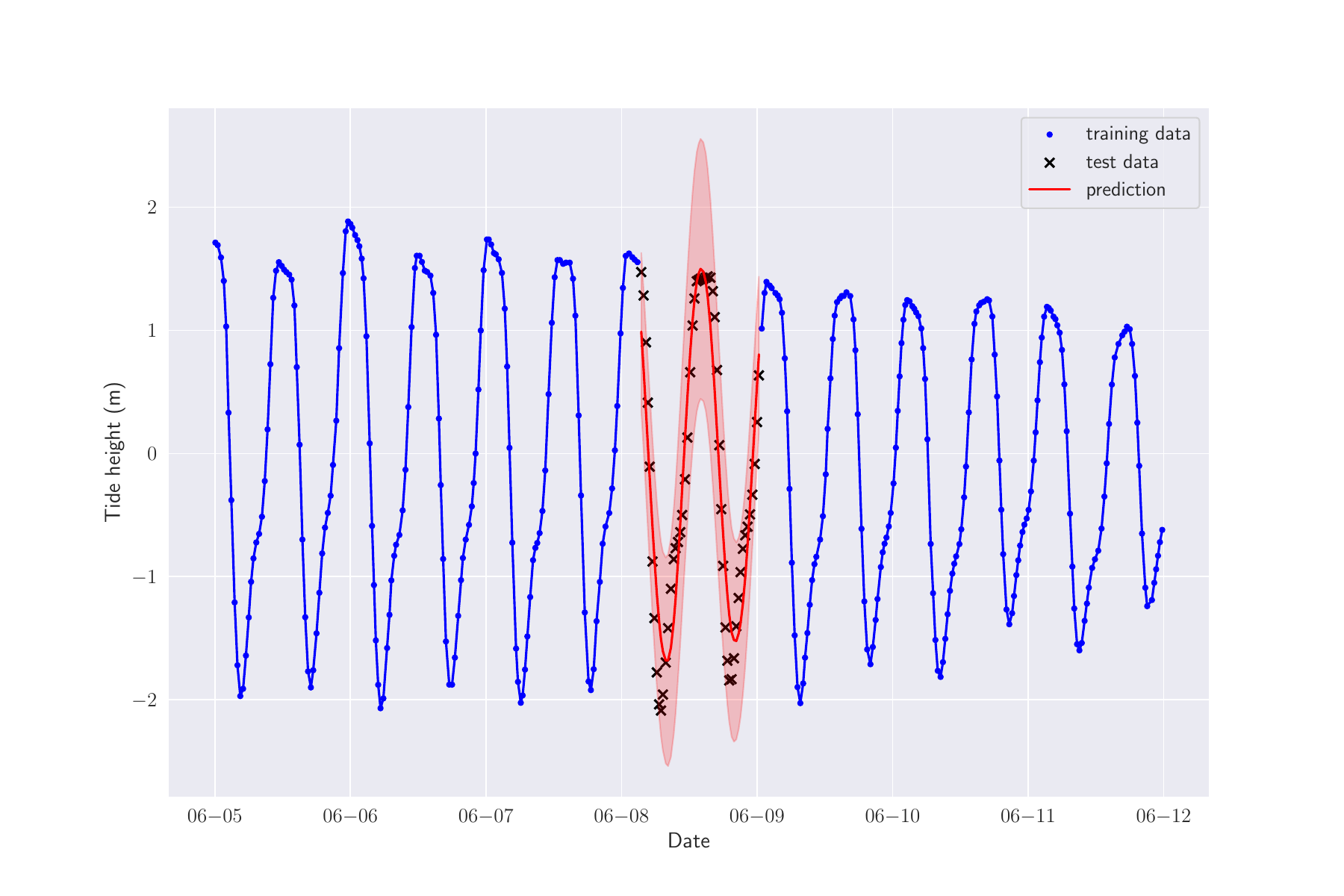}
        \caption{Bramblemet station}
    \end{subfigure}
    \begin{subfigure}[hbtp]{0.9\textwidth}
        \centering
        \includegraphics[width=\textwidth]{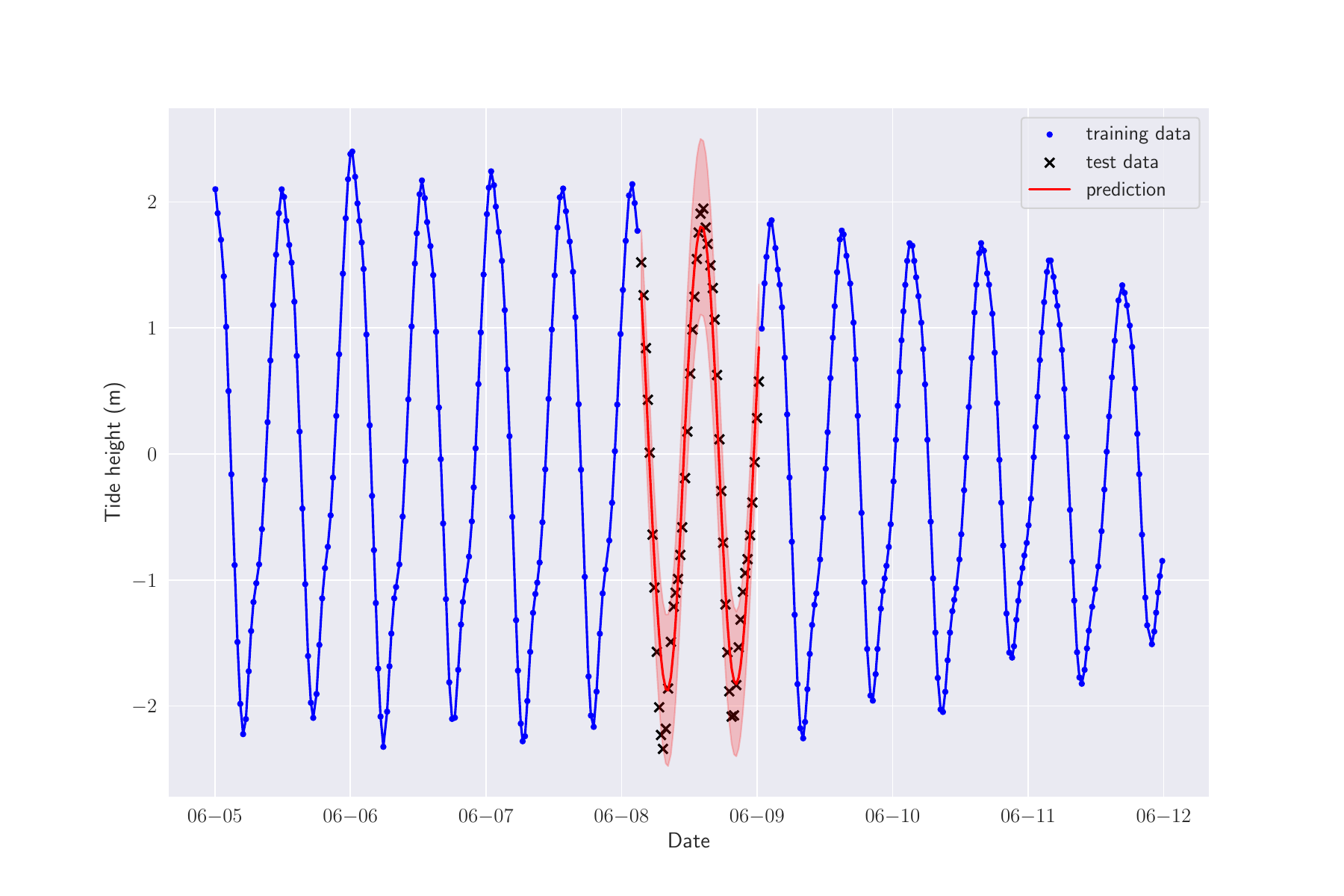}
        \caption{Cambermet station}
    \end{subfigure}
    \caption{Illustration of predictions of the PLMC-fast model on the tidal height dataset.}
    \label{fig:bramblemet}
\end{figure}

\begin{table}[hbtp]
	\centering
	\captionsetup{justification=centering}
	\caption{Results of the \emph{Bramblemet} experiment\\
	 ($p=4$, $n=156$, $d=1$)}
    \begin{tabular}{|l||l|l|l|l|l|l||l|l|}
    \hline
        \textbf{Model} & \textbf{$N_{iter}$} &\textbf{$t_{train}$} & $R2$ &\textbf{$RMSE$} & \textbf{$Q_{L1}^{95}$}&\textbf{$PVA$} &\textbf{$q$} &\textbf{$N_{mix}$}\\ \hline
        PLMC & 781 & \textcolor{green}{25} & 0.914 & 0.365 & \textcolor{green}{0.643} & 0.42 & 1 & 2 \\
        PLMC-fast & \textcolor{red}{2,652} & \textcolor{red}{292} & \textcolor{green}{0.921} & \textcolor{green}{0.348} & 0.652 & \textcolor{red}{-2.17} & 4 & 5 \\
        oilmm & 928 & 74 & 0.068 & 1.200 & \textcolor{red}{2.08} & 0.29 & 3 & 4 \\ \hline
        ICM & 2,255 & 27 & 0.103 & \textcolor{red}{1.271} & 1.99 & \textcolor{green}{0.23} & 1 & 2 \\ 
        var & \textcolor{green}{500} & 277 & \textcolor{red}{-0.030} & 1.187 & \textcolor{red}{2.08} & 0.67 & 1 & 5 \\
        \bottomrule 
    \end{tabular}
    \label{tab:bramb}
\end{table}

\section{Discussion: practical benefits and shortcomings of the PLMC and PLMC-fast}

\subsection{Conceptual simplicity and ease of implementation}
Simplicity is a strong suit of the PLMC-fast. As definition \ref{def:PLMC_fast} and figure \ref{fig:matrix_diag} show it, the model can be computed and understood with only three operations: project the data onto the latent space using the pseudoinverse of $\mathbf{H}$; perform independent GP computations with the latent processes; lift the results to the original space by multiplying them with $\mathbf{H}$. The only conceptual difficulty is the necessity of a complex noise model (DPN condition). The fact that the core of all computations is just standard GP manipulations on latent processes means that \textbf{the model can reuse most GP machinery developed for single-output GPs} -- approximations, mini-batch training, computation of specific quantities... -- at the expense of minor adaptations.\\
By contrast, an informed use of variational LMCs requires delving into an elaborate set of approximations -- see for instance \cite{gpflow}. Implementing such models is also more complex, and cannot be built upon standard GP implementations. As for the ICM, although its definition is straightforward, its efficient implementations such as this of \cite{all_in_the_noise} are much less, and significantly depart from standard GP computations: they involve clever factorizations, Kronecker product operations, eigen-decompositions, etc. This means that adding any of the above-mentioned functionalities to it requires mathematical and software work to adapt it to this specific framework.

\subsection{Training data modifications}
With the PLMC, a practitioner can access the projected data $\mathbf{TY}$, and latent-level posteriors and MLL of the model. This is also the case with variational models, where the variational distribution plays the role of this low-dimensional data (see \ref{an:comparison}); but this distribution is a black-box, parametrized quantity with no explicit relation to the training data, whereas in the PLMC it is derived from it through a simple projection. The ICM, for its part, offers no equivalent functionality: its latent posteriors could theoretically be computed using eq.\eqref{eq:estimators}, but this would require an additional implementation (with prior mathematical work to make it efficient) unrelated to the internal machinery of the model.\\
Such access makes it possible to adapt data modification operations such as training set updates, crucial in the contexts of active learning and bayesian optimization, to the case of multi-task processes. To add new data to the PLMC, one must simply:

\begin{itemize}
    \item Project the new output vector $\mathbf{Y_{n+1}}$ into the latent space with $\mathbf{T}$;
    \item Stack the resulting vector with the existing projected data;
    \item Update all latent kernel matrices $\mathbf{K_{i}}$ with rank-1 update formulas, or equivalent constructions.
\end{itemize}
Regarding the last point, kernel matrix updates are trivial to perform with exact single-output GPs: noting $\mathbf{K_{n}}$ a kernel matrix for the datapoints up to $\mathbf{x_{n}}$, the inverse matrix with an added point $\mathbf{x_{n+1}}$ is simply \cite{rank1_updates}:

\begin{equation}
    \mathbf{K_{n+1}}^{-1} = \left[ \begin{array}{c c} \mathbf{K_{n}}^{-1} & -c \mathbf{v} \\
    -c \mathbf{v^{T}} & c\end{array}\right]
\end{equation}
with $\mathbf{v}=\mathbf{K_{n}}^{-1} k(\mathbf{X_{n}, x_{n+1}})$ and $c=(k(\mathbf{x_{n+1}, x_{n+1}}) - \mathbf{v}^{T}k(\mathbf{X_{n}, x_{n+1}}))^{-1}$. Similar expressions exist for removing a point from the data, or for adding several points simultaneously \cite{rank1_updates}; and similar updates exist for different computation schemes, for instance when $\mathbf{K_{n}}$ is handled through its Cholesky decomposition \cite{osborne_thesis} or in the LOVE framework \cite{LOVE_BO}.\\

On the other hand, to our best knowledge, no methodology has yet been proposed for incorporating new data to a multitask variational GP; this is mentioned as a research direction by a recent influential article addressing the single-output case \cite{online_conditioning}. What is lacking to achieve this generalization seems to be precisely the absence of a mechanism for converting new tasks-level observations into latent-level ones. Moreover, training set updates -- often framed as "online conditioning" in this context -- are very intricate to perform even in the single-output case, and constitute a topic of research in itself, as the above-mentioned article illustrates it\footnote{In \cite{online_conditioning}, adding new data to a variational GP requires converting its black-box parameters to a pseudo-data representation, building a new GP out of the new data and pseudo-data, retrieving sensible values of the black-box parameters from this GP, and updating inducing points in a sensible manner. Several of these steps involve heuristics, and therefore possible accuracy losses.}.\\

Another useful data manipulation is leave-one-out cross-validation: with exact single-output GPs, LOO estimates can be obtained from efficient closed-form expressions given for instance in chapter 5 of \cite{Rasmussen}. These expressions directly translate to the PLMC thanks to the linear relation between latent and task-level data; calling $\mathbf{u}^{LOO} $ the leave-one-out errors of the model and $\vartheta^{LOO}$ its leave-one-out predictive variance, it stands:

\begin{align}\label{eq:PLMC_LOO}
	\left[ \mathbf{TY} - \mathbf{u}^{LOO}\right]_{ij} &= \left[ (\mathbf{K_{j}} + \sigma_{j} \mathbf{I_{n}})^{-1}\mathbf{YT_{j}^{T}} \right]_{i} / \vartheta_{ij}^{LOO} \quad \forall j \in \left[ 1;q \right]\\
	\vartheta_{ij}^{LOO} &= \left[ (\mathbf{K_{j}} + \sigma_{j} \mathbf{I_{n}})^{-1} \right]_{ii} \quad \forall j \in \left[ 1;q \right]\\
	\mathbf{Y} - \mathbf{Y^{LOO}} &= (\mathbf{I_{p} - HT})\, \mathbf{Y} + \mathbf{H}\, (\mathbf{TY - u^{LOO}})\\
	\mathbf{v^{LOO}} &= \mathbf{H} \vartheta^{LOO} \mathbf{H^{T}}\label{eq:PLMC_LOO_end}
\end{align}

Meanwhile, no such expressions can exist for variational models, which black-box parameters cannot forget the information brought by a datapoint.

\subsection{Model and data interpretation}
The latent quantities provided by the model can yield valuable insights about the data. The projected data $\mathbf{TY}$, for instance, is analogous to the principal components (PCA) of $\mathbf{Y}$, with a notable difference: here, the values of the $i$-th latent component $\left[ \mathbf{TY}\right]_{i}$ are correlated by a gaussian process, whereas they are considered independent in a (probabilistic) PCA. These latent factors are thus expected to be smoother and more physically meaningful than principal components; they could be used to identify significant variation modes in climate data, physics simulations... This was the purpose of the already-cited Generalized Probabilistic PCA \cite{GPPCA}, which the PLMC generalizes. It is also noteworthy that the model explicitly splits the data into two constituents: $\mathbf{HTY}$, which it uses\footnote{$\mathbf{TY}$ is a sufficient statistic of the data for the latent processes, see proposition \ref{prop:interpretation}.}, and $\mathbf{(I_{p} - HT)Y}$, which it discards.

\subsection{Scalability with the number of tasks}
Computational costs of the PLMC-fast scale less than linearly with the number of outputs $p$ (see appendix \ref{an:complexities} for more details), which authorizes to use it in datasets with very many tasks. This is unlike the ICM, for which even optimized implementations involve computations with complexity $\mathcal{O}(p^{3})$ (see appendix \ref{an:complexities} and \cite{all_in_the_noise}). Low-rank versions of the ICM can be devised to break this cubic complexity -- this is done for instance in \cite{MTGP} -- but this once again requires dedicated mathematical and software optimizations, which may not be compatible with other computational devices (approximations, computation of entropy, etc). Variational LMCs, on the other hand, also scale linearly with $p$, but some of their features prevent them from being used in some applications -- see previous paragraphs.

\subsection{Limitations}
A first limitation to mention is that the PLMC doesn't natively fix the cubic cost in the number of datapoints of exact GPs, contrary to variational models. Of course, the PLMC can be easily wrapped with approximations to lift this bottleneck, as it was done in several experiments of this article; but such approximations may negate some of the benefits of the model, such as the possibility to modify the training data.\\
Another issue is the constraint imposed on the noise covariance: this prevents using the PLMC for uncertainty propagation, for in this setting the upstream uncertainty covariance wouldn't respect the DPN condition and couldn't be injected into the model. To solve this problem, one could derive an alternative parametrization of the model, maintaining a fully general noise covariance $\mathbf{\Sigma}$ and displacing the DPN constraint on the mixing matrix $\mathbf{H}$. This mathematical work has been undertaken and will be the subject of future publication.

\subsection{Ethical considerations}
The application domains of MOGPs include sensitive fields (healthcare, nuclear engineering, robotics, etc). In some use cases, computing faithful uncertainty estimates may be crucial for decision-making. The PLMC having a restricted noise model, it should not be used in such applications, unless the implications of this restriction are perfectly understood by the practitioner in his or her specific setting, and the consequences proven acceptable. 

\section{Conclusion}
In this article, we showed that the equations of the Linear Model of Co-regionalization could be reformulated in a way that reveals its low-rank latent structure. The new expressions have the striking property of decoupling if and only if a certain condition on the noise -- already formulated in \cite{OILMM} -- is verified, enabling computations to be linear in the number of latent processes rather than cubic in the number of tasks. We gave a general parametrization of LMC models enforcing this noise assumption, thus bringing out a new model called \emph{Projected LMC} (PLMC), as well as a simplified and more efficient version of it (PLMC-fast). We tested these models on real and synthetic data against concurrent approaches, and studied their behavior as data parameters vary. PLMCs showed themselves to compete very well with state-of-the-art variational LMC and Intrinsic Co-regionalization Model, being faster and equally or more accurate, even in setups where they should theoretically be challenged (large and highly-structured noises).\\

In addition to yielding theoretical insights\footnote{See for instance \ref{an:comparison} for a comparison of the approximations made by several LMC models.}, the PLMC appears as a viable way to make the LMC framework scalable with the number of tasks, and a simpler alternative to its variational and Kronecker-covariance counterparts. It can be implemented as a mere overlay of a batch single-output GPs, and reuse all computational machinery (approximations, computation of probabilistic quantities, etc) devised for the latter. Moreover, contrary to its competitors, it preserves an explicit relation -- which is fortunately linear -- between the observed data and its low-dimensional summary used in computations. This greatly facilitates data-related operations such as incorporation of new observations, which are not even available as of now for multitask variational models, making methodologies like multitask bayesian optimization and multitask active learning accessible to much wider research and engineering communities.  


\newpage

\appendix

\section{Table of symbols}

\begin{itemize}
	\item $n$ : number of training datapoints
	\item $p$ : number of modeled tasks
	\item $q$ : number or latent processes
	\item $d$ : dimension of the inputs (number of variables)
 	\item $\mathbf{H}$ : mixing matrix of the LMC model, of size $p \times q$
 	\item $\mathbf{T}$ : projection matrix, pseudoinverse of $\mathbf{H}$ of size $q \times p$. Expression given in definition \ref{def:T_sigma}
 	\item $\mathbf{Q, R}$ : QR decomposition of $\mathbf{H}$
 	\item $\mathbf{Q_{\bot}}$ orthonormal complement of $\mathbf{H}$
	\item $\bm{\epsilon}$ : noise process, correlated across all tasks
	\item $\mathbf{\Sigma}$ : cross-tasks noise (covariance of $\bm{\epsilon}$), of size $p\times p$
	\item $\mathbf{\Sigma_{P}}$ : projected noise, i.e $q\times q$ covariance of the noise process $\mathbf{T}\bm{\epsilon}$
	\item $\mathbf{\Sigma_{\bot}}$ : $(p-q)\times (p-q)$ covariance of the discarded noise (noise projected onto $Span(H)^{\bot}$)
	\item $\tilde{\mathbf{\Sigma}}_{\bot}$ : blockwise inverse of $\mathbf{\Sigma_{\bot}}$ inside the matrix $\mathbf{D_{+}}$ (see definition \ref{cor:facto_Q})
	\item $\sigma_{\bot}$ : discarded noise of the PLMC-fast. For this model, $\mathbf{\Sigma_{\bot}} = \sigma_{\bot} \mathbf{I_{p-q}}$ 
	\item $\mathbf{M}$ : noise term coupling $\mathbf{\Sigma_{P}}$ and $\mathbf{\Sigma_{\bot}}$
	\item $\mathbf{X}$ : matrix containing all training data inputs, of size $d \times n$
	\item $\mathbf{Y}$ : matrix containing all training data outputs, of size $p \times n$
	\item $\mathbf{U}$ : $q \times n$ matrix of values of the latent processes at the observation points $\mathbf{X}$
	\item $x_{*}$ : input point at which a prediction is made
	\item $k_{i}$ : kernel function of the $i$-th latent process
	\item $\mathbf{k_{i*}}$ : vector $k_{i}(\mathbf{X, x_{*}})$ ($=[k(\mathbf{x_{j}, x_{*}})]_{1\leq j \leq n}$)
	\item $\mathbf{K_{i}}$ : matrix $k_{i}(\mathbf{X, X})$ ($=[k(\mathbf{x_{j}, x_{l}})]_{1\leq j \leq n}^{1\leq l \leq n}$)
	\item $k_{i**}$ : scalar $k_{i}(\mathbf{x_{*}, x_{*}})$
\end{itemize}

\section{Code}

Model implementation and experiments are available at \url{https://github.com/QWERTY6191/projected-lmc} .

\section{Computational complexities}\label{an:complexities}

We here compare the computational complexities, both in time and memory space, of inference and MLL computation for all models considered in this article. References for the stated values are given in the last column of table \ref{tab:comp_comp}. For inference, we make the distinction between computation of caches (inverse kernel matrices, projection matrix...) which values are independent of test points and can thus be precomputed, and test-dependent values themselves. We also denote by $m$ the number of inducing points for the variational model; we recall that the variational approach considered here is this of \cite{hensman_class}.

\begin{table}[!ht]
    \centering
    \caption{Computational complexities of several models.}\label{tab:comp_comp}
    \resizebox{\columnwidth}{!}{%
    \begin{tabular}{|l||l|l||l|l||l|l||l|}
    \hline
        \textbf{Model} & \makecell{Time \\ (inference)} & \makecell{Memory\\ (inference)} & \makecell{Time\\ (inf. caches)} & \makecell{Memory\\ (inf. caches)} & \makecell{Time\\ (MLL)} & \makecell{Memory\\ (MLL)} & References \\ \hline
		\makecell{PLMC} & $O(pq + qn)$ & $O(pq + qn)$ & $O(qn^{3} + pqn + p^{3})$ & $O(\max(qn^{2}, p^{2}))$ & $O(qn^{3} + pqn + p^{3})$ & $O(qn^{2} + p^{2})$ & - \\ \hline
		\makecell{PLMC-fast} & $O(pq + qn)$ & $O(pq + qn)$ & $O(qn^{3} + pqn + pq^{2})$ & $O(\max(qn^{2}, pq))$ & $O(qn^{3} + pqn + pq^{2})$ & $O(qn^{2} + pq)$ & - \\ \hline
		\makecell{OILMM} & $O(pq + qn)$ & $O(pq + qn)$ & $O(qn^{3} + pqn + pq^{2})$ & $O(\max(qn^{2}, pq))$ & $O(qn^{3} + pqn + pq^{2})$ & $O(qn^{2} + pq)$ & - \\ \hline
		\makecell{ICM} & $O(pn)$ & $O(pn)$ & $O(n^{3} + p^{3})$ & $O(n^{2} + p^{2})$ & $O(n^{3} + p^{3})$ & $O(n^{2} + p^{2})$ & \cite{all_in_the_noise} \\ \hline
		\makecell{Variational} & $O(pq + qm)$ & $O(pq + qm)$ & $O(qm^{3})$ & $O(qm^{2})$ & $O(qnm^{2} + pqn)$ & $O(qm^{2})$ & \makecell{\cite{hensman_class}, \\ \cite{generic_inference}}\ \\ \hline
    \end{tabular}%
    }
\end{table}

The main differences between models are:

\begin{itemize}
\item The effect of the inducing points approximation for the variational model: all kernel matrices of size $n\times n$ are replaced by inducing points matrices of size $m\times m$, replacing all complexity terms in $n^{k}$ by terms in $m^{k}$. Note that the same gains are achievable when coupling PLMCs with inducing points approximations.
\item The extra term $O(pqn)$ in the cache computation of the PLMCs, which represents data projection (computation of $\mathbf{TY}$).
\item The term $O(pqn)$ in the MLL duration of the variational model, which comes from the computation of likelihood terms in the observation space, with no calculation trick.
\item The fact that the ICM comprises terms in $\mathcal{O}(n^{3})$ rather than $\mathcal{O}(qn^{3})$: this comes from the latent processes sharing the same kernel in this model.
\item The fact that the general PLMC and the ICM with general $\mathbf{K^{f}}$ feature terms in $p^{3}$ rather than $pq^{2}$, showing that they don't fully leverage the low-dimensional structure of the LMC.\\
\end{itemize}

\section{Experimental specifications}\label{an:exp_spec}

For each model, the training dynamics were investigated individually on the synthetic data in order to verify that performance discrepancies weren't due to improper convergence. A list of optimization options were tried out; we found a standard set of items to perform well for all models, yielding well-behaved and mostly monotonous training in all cases. We therefore used these common hyperparameters for all models and experiments:

\begin{itemize}
 \item The optimizer is the \texttt{AdamW} algorithm of the \texttt{torch.optim} package (\cite{torch}, with default hyperparameters.
 \item We adopted a linear decay scheduler: the global learning rate (parameter $\gamma$) of AdamW decreases linearly from a maximal to a minimal learning rate in $N_{lin}=10^{4}$ iterations.
 \item The maximal learning rate is set to $10^{-2}$, and the minimal to $10^{-3}$.
 \item A stopping criterion was adopted: training stops when differences between the average over $N_{patience}=500$ iterations of relative loss differences ($1 - \mathcal{L}_{i+1} / \mathcal{L}_{i}$) became smaller than a given threshold $\delta \mathcal{L}_{thresh}$. For real-data experiments,  $\delta \mathcal{L}_{thresh} = 10^{-7}$, while it was set to $\delta \mathcal{L} = 2.5 \cdot 10^{-6}$ for synthetic data.
 \item The maximal number of iterations $N_{iter}^{max}$ was set to $10^{5}$ in some experiments and $5 \cdot 10^{4}$ in others. In synthetic data experiments and the Bramblemet one, these values are unimportant as no model attained the limit. On the contrary, in 3 out of the 4 real-data experiments, the budget was exhausted by all models; in this case, it is clearly indicated with the presentation of results.
 \item The LMC coefficients of the PLMC were initialized by setting $\mathbf{Q_{+}^{start}=V}$, $\mathbf{R_{start}=S_{\left[ 1:q \right]}}$ with $\mathbf{Y=USV^{T}}$ the SVD of the data. Likewise, the mixing matrix of the ICM\footnote{That is, the low-rank factor of its cross-tasks covariance matrix.} and this of the variational LMC was initialized to $\mathbf{SV^{T}}_{\left[ 1:q \right]}$. Initialization of other model parameters is straightforward: all characteristic quantities of the toy data being of unit order, setting kernel parameters to $\sim 1$ and noise parameters to $\sim  10^{-2}$ always yields satisfactory results.
\end{itemize}

Regarding hardware specifications, all synthetic data experiments were run on a Xeon Skylake 5218 2.3 GHz GPU with 12 Go of RAM, while real-data ones were performed on a 80-CPU (2.10GHz Intel(R) Xeon(R) Gold 6230) machine.

\section{Additional experiments}\label{an:extra_exp}
\subsection{Additional observations on the parametric study}

All models didn't respond in the same way to an increase in $p$ with fixed $q$, or conversely to an increase in $q$ with fixed $p$; such experiments are displayed in figure \ref{fig:p_q_var}. First, for the PLMC and PLMC-fast, there are signs of instabilities\footnote{Corroborated by a sharp increase in the training duration, not displayed here.} when $q$ approaches $p$; it may be that the optimization landscape becomes too flat, with many close configurations of latent processes yielding similar results. This would explain that the simpler, more regularized models perform best in this setting: OILMM bests PLMC-fast, which bests PLMC. A more surprising observation is that the \texttt{ICM} and PLMC are sensitive to the \emph{absolute number of tasks}. For the ICM this result is perhaps not surprising, as its kernel structure is unsuited to our synthetic data, which contains as many lengthscales as there are latent processes\footnote{Yet the range of these lengthscales is kept constant, whatever $p$ and $q$ may be.} (see data description in section \ref{sec:synth}). For the full PLMC, it may rather be a problem of optimization: because of its many parameters, the optimization landscape may become too intricate in many-tasks complex problems.\\
Otherwise, varying the other data parameters (modifying the number of training points, reducing the range of kernel lengthscales...) offered very little discriminative power over all models.\\

\begin{figure}[hbtp]
    \centering
    \begin{subfigure}[hbtp]{0.6\paperwidth}
        \centering
        \includegraphics[width=\textwidth]{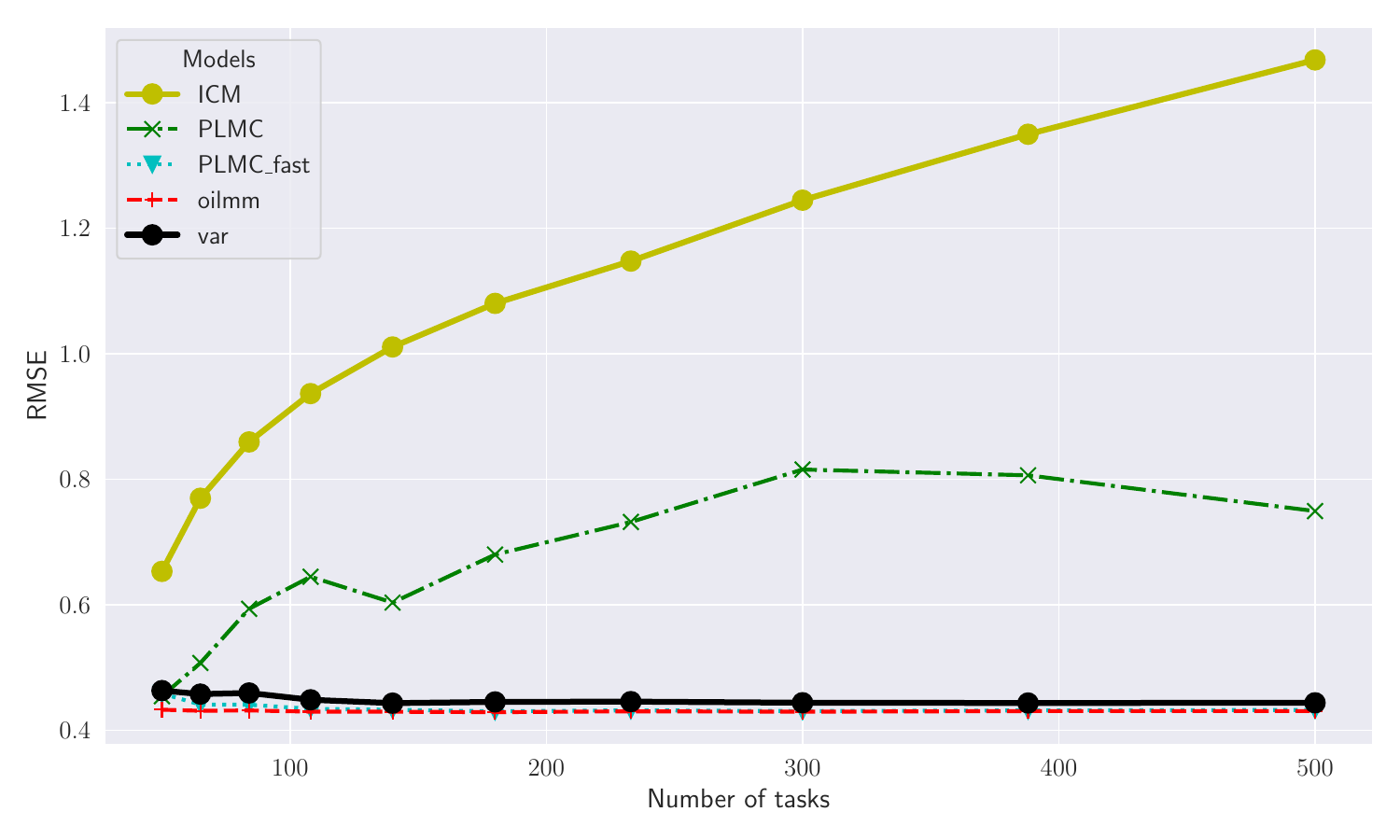}
        \caption{Variable $p$, fixed $q=25$}
        \label{fig:RMSE_ntasks}
    \end{subfigure}
    \quad
    \begin{subfigure}[hbtp]{0.6\paperwidth}
        \centering
        \includegraphics[width=\textwidth]{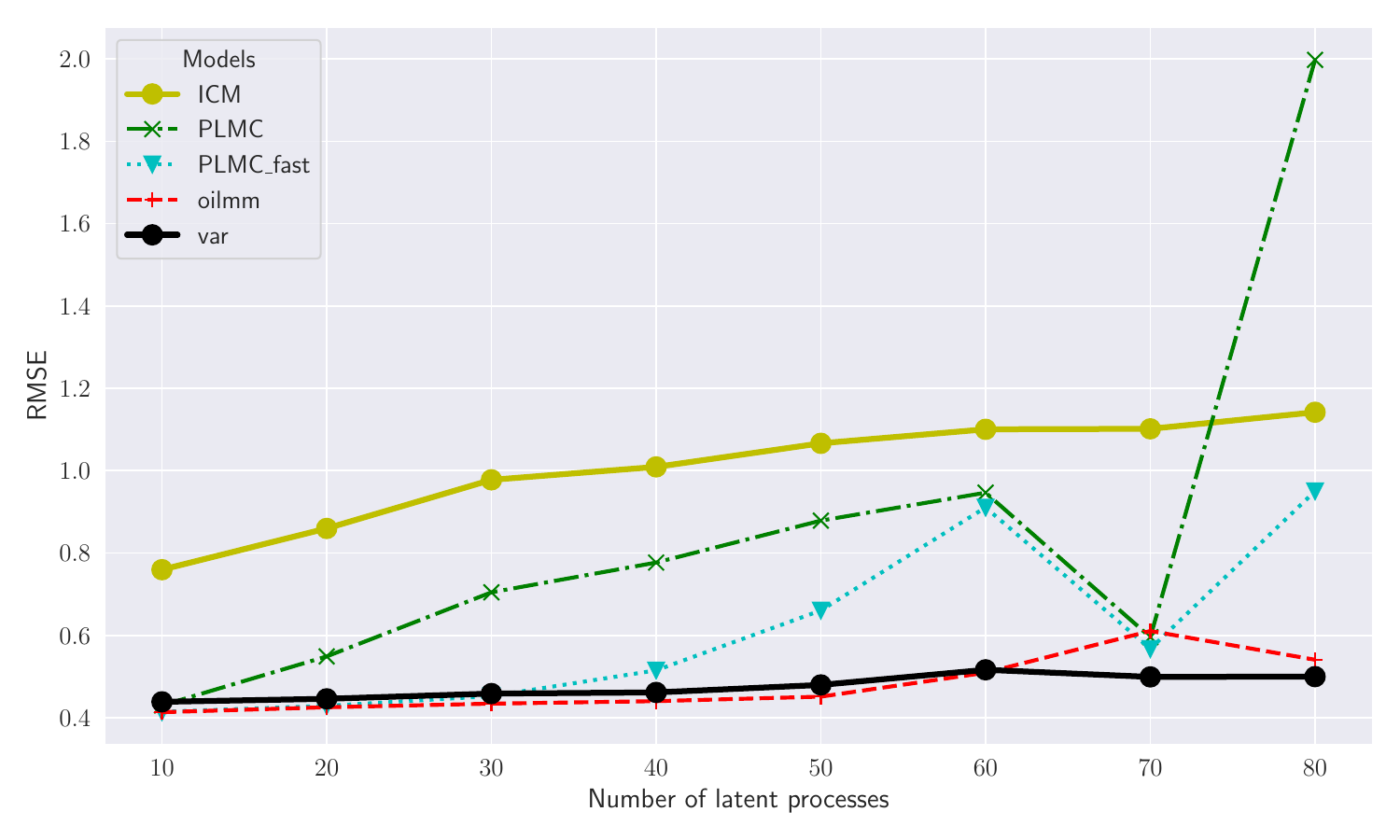}
        \caption{Variable $q$, fixed $p=100$}
        \label{fig:RMSE_nlat}
    \end{subfigure}
    \caption{RMSE of several models, with a) increasing number of tasks and fixed number of latent processes , and b) the opposite. Default noise parameters ($\mu_{noise}=0.1$, $\mu_{str}=0.9$, $q_{noise}=25$). Averaged over $N_{rep}=50$ random datasets}
    \label{fig:p_q_var}
\end{figure}

\subsection{Ship maintenance dataset}

This dataset was introduced in \cite{ship}, and is available at the UCI machine learning repository \footnote{\url{http://archive.ics.uci.edu/dataset/316/condition+based+maintenance+of+naval+propulsion+plants}}. It consists in numerical simulations of the propulsion system of a naval vessel, fine-tuned on real data. Inputs of the dataset are ship speed and two simulation coefficients reflecting the condition of the engine. These three inputs are sampled on a uniform grid with a good granularity. Outputs are 14 variables describing the operational state of the system: torques, pressures, fuel flow... We discarded two of these outputs presenting zero or near-zero variance, and standardized the others. For computational convenience, we subsampled the fine input grid by a factor of 5, and stored apart the last 100 points of the dataset for testing, ending up with 2287 training points. We once again used a Matérn-5/2 kernel with input-specific parametrized lengthscales for all models, and independent task noises (diagonal $\mathbf{\Sigma}$) for models \texttt{var} and \texttt{ICM}. Results are diplayed in table \ref{tab:ship}: once more, the PLMC-fast is faster than all models (it converges faster than the OILMM in particular). It achieves an accuracy very similar to this of the OILMM and variational model; all of them are however bested by the ICM. However, the full PLMC does not seem to have fully converged at the end of the training budget ($50,000$ iterations), resulting in subpar performance. Its larger number of parameters may have hindered its fitting to such high-dimensional and very smooth data.

\begin{table}[!ht]
    \centering
    \captionsetup{justification=centering}
	\caption{Results of the \emph{Ship maintenance} experiment\\
 ($p=12$, $n=2287$, $d=3$, $q=3$, $N_{ind}=500$)}
    \begin{tabular}{|l||l|l|l|l|l|l|}
    \hline
         \textbf{Model}& \textbf{$N_{iter}$} &\textbf{$t_{train}$} & $R2$ &\textbf{$RMSE$} &\textbf{$Q^{95}_{L1}$} &\textbf{$PVA$} \\ \hline
		\texttt{PLMC} & \textcolor{red}{50,000} & 1052 & \textcolor{red}{0.990} & \textcolor{red}{9.85e-2} & \textcolor{red}{1.96e-1} & -0.49 \\
		\texttt{PLMC-fast} & \textcolor{green}{7,060} & \textcolor{green}{139} & 0.994 & 7.57e-2 & 1.34e-1 & -0.66 \\
		\hline \texttt{oilmm} & 13,490 & 340 & 0.994 & 7.76e-2 & 1.63e-1 & -0.63 \\
		\texttt{var} & 14,541 & 248 & 0.994 & 7.45e-2 & 1.60e-1 & \textcolor{green}{-0.24} \\
		\texttt{ICM} & 14,213 & \textcolor{red}{1904} & 0.\textcolor{green}{997} & \textcolor{green}{5.42e-2} & \textcolor{green}{7.42e-2} & \textcolor{red}{-0.74} \\
        \bottomrule
    \end{tabular}
    \label{tab:ship}
\end{table}

\newpage
\section{Proofs}\label{an:proofs}

\subsection{Preliminary considerations}

\paragraph{Notation} We adopt the following standard notation for any matrix $\mathbf{M}$: $\quad \mathbf{Sym(M) = \frac{M + M^{T}}{2}}$. 
\paragraph{Kronecker product manipulation} \textit{We recall elementary properties of the Kronecker product}:
\begin{align}
\mathbf{(A \otimes B)(C \otimes D)} &= \mathbf{(AC \otimes BD)} \quad \text{for all matrices} \ \mathbf{A, B, C, D} \ \text{for which the products are defined;} \label{fact1}\\
\mathbf{(C^{T} \otimes A)vec(B)} &= \mathbf{vec(ABC)} \label{fact2} \\
\mathbf{(A \otimes B)^{-1}} &= \mathbf{A^{-1} \otimes B^{-1} \quad \text{(the inverse exists if and only if} \ A^{-1}, \, B^{-1} \ \text{exist)} } \label{fact3}
\end{align}

\paragraph{An identity on gaussian processes} 
\begin{lemma}\label{lem:cond_expect}\begin{equation}
\mathbb{E}_{U|Y} \left[ \mathbb{E}_{u^{*}} ( \mathbf{u^{*} | Y,U} ) \right] =  \mathbf{ Diag(k_{i*}^{T}) \, Diag(K_{i}^{-1})} \, \mathbb{E}_{U|Y}(\mathbf{U}_{v}|\mathbf{Y}) 
\end{equation}\end{lemma}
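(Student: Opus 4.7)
The plan is to reduce the identity to the elementary per-process GP prediction formula, then repackage the result in block matrix form and apply the outer expectation by linearity.

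First I would argue that $\mathbf{u}^{*}$ and $\mathbf{Y}$ are conditionally independent given $\mathbf{U}$. This follows directly from the generative structure of the LMC: the observation equation $\mathbf{Y = HU} + \bm{\epsilon}$ shows that $\mathbf{Y}$ depends on $\mathbf{u}^{*}$ only through $\mathbf{U}$ (the latent processes at test points play no role in the likelihood of the training data, and the noise $\bm{\epsilon}$ is independent of $\mathbf{u}^{*}$). Hence $\mathbb{E}(\mathbf{u}^{*}|\mathbf{Y},\mathbf{U}) = \mathbb{E}(\mathbf{u}^{*}|\mathbf{U})$.

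Next I would exploit the a priori independence of the latent processes $u_{1},\dots,u_{q}$. Since their priors factorize and they appear only through their respective rows in $\mathbf{U}$, the conditional expectation decouples component-wise: for each $i$,
\begin{equation*}
\mathbb{E}(u_{i}^{*}|\mathbf{U}) \;=\; \mathbb{E}(u_{i}^{*}|\mathbf{U}_{i}) \;=\; \mathbf{k}_{i*}^{T}\mathbf{K}_{i}^{-1}\mathbf{U}_{i},
\end{equation*}
where $\mathbf{U}_{i}$ denotes the $i$-th row of $\mathbf{U}$ and we apply the standard single-output GP posterior mean formula.

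Assembling these $q$ scalar identities into a $q$-vector is then a purely notational step. Because $\mathbf{U}_{v} = vec(\mathbf{U}^{T})$ stacks precisely the rows $\mathbf{U}_{1},\dots,\mathbf{U}_{q}$ on top of each other, the map $\mathbf{U}_{v} \mapsto (\mathbf{k}_{i*}^{T}\mathbf{K}_{i}^{-1}\mathbf{U}_{i})_{1 \leq i \leq q}$ is exactly the block-diagonal operator $\mathbf{Diag}(\mathbf{k}_{i*}^{T})\,\mathbf{Diag}(\mathbf{K}_{i}^{-1})$, giving
\begin{equation*}
\mathbb{E}(\mathbf{u}^{*}|\mathbf{Y},\mathbf{U}) \;=\; \mathbf{Diag}(\mathbf{k}_{i*}^{T})\,\mathbf{Diag}(\mathbf{K}_{i}^{-1})\,\mathbf{U}_{v}.
\end{equation*}

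Finally I would take $\mathbb{E}_{U|Y}$ on both sides: since the prefactor depends only on the kernels and the test point (deterministic given the model), linearity of expectation immediately yields the claimed identity. The only genuine subtlety is the conditional independence argument in the first step; all remaining work is bookkeeping on the vectorization convention.
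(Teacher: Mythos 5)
Your proof is correct and follows essentially the same route as the paper's: drop $\mathbf{Y}$ from the conditioning by the conditional-independence structure of the model, apply the standard single-output GP posterior mean per latent process (which yields the block-diagonal operator $\mathbf{Diag(k_{i*}^{T})\,Diag(K_{i}^{-1})}$ acting on $\mathbf{U_{v}}$), and pull the deterministic prefactor through the outer expectation by linearity. You merely spell out the per-process decoupling and vectorization bookkeeping more explicitly than the paper does.
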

\begin{proof}\begin{align*}
\mathbb{E}_{U|Y} \left[ \mathbb{E}_{u^{*}} ( \mathbf{u^{*} | Y,U} ) \right] &= \int \mathbb{E}_{u^{*}} ( \mathbf{u^{*} | \cancel{Y},U} ) \, p(\mathbf{U|Y}) \, \mathbf{dU} \\
&= \int \mathbf{ Diag(k_{i*}^{T}) \, Diag(K_{i}^{-1}) \, U_{v} \, } p(\mathbf{U|Y}) \, \mathbf{dU} \ \text{} \\
&= \mathbf{ Diag(k_{i*}^{T}) \, Diag(K_{i}^{-1})} \, \mathbb{E}_{U|Y}(\mathbf{U}_{v}|\mathbf{Y}) 
\end{align*}
The second equality holds by the conditioning property of GPs, and because $ \mathbb{E}_{u^{*}} ( \mathbf{u^{*} | Y,U} ) $ doesn't depend on $\mathbf{Y}$ (conditionnally on their corresponding latent values, observed values of a GP are independent of all other variables of the model).
\end{proof}

\paragraph{A useful matrix identity}
\begin{lemma}\label{fact4}
For all invertible matrices $ \mathbf{A, B}$ and all matrix $ \mathbf{C}$ conformable with them,\\
 $\mathbf{C^{T}(CAC^{T} + B)^{-1}C = (A + C^{T}BC)^{-1}}$. Note that $ \mathbf{A}$ and $ \mathbf{B}$ are not necessarily of the same size, so $\mathbf{C}$ can be rectangular.
\end{lemma}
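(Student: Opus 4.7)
The plan is to invoke the Woodbury matrix inversion lemma on the middle factor $(\mathbf{CAC^T + B})^{-1}$, then sandwich the result between $\mathbf{C^T}$ and $\mathbf{C}$ and collapse the resulting expression. The hypotheses of the lemma — invertibility of both $\mathbf{A}$ and $\mathbf{B}$ — are precisely what Woodbury needs, which reassures me that this is the intended route.

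Concretely, first I write
\begin{equation*}
(\mathbf{B + CAC^T})^{-1} = \mathbf{B^{-1} - B^{-1}C(A^{-1} + C^T B^{-1}C)^{-1}C^T B^{-1}},
\end{equation*}
using the standard form of Woodbury (which requires $\mathbf{A^{-1} + C^T B^{-1}C}$ to be invertible; in the applications of this paper $\mathbf{B = \Sigma}$ is positive definite and $\mathbf{A}$ is a kernel matrix, so this is automatic). I then pre-multiply by $\mathbf{C^T}$ and post-multiply by $\mathbf{C}$, abbreviate $\mathbf{S := C^T B^{-1}C}$, and obtain
\begin{equation*}
\mathbf{C^T(CAC^T + B)^{-1}C} \;=\; \mathbf{S} \;-\; \mathbf{S(A^{-1} + S)^{-1}S}.
\end{equation*}

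The remaining step is the algebraic collapse of $\mathbf{S - S(A^{-1} + S)^{-1}S}$. I would verify by right-multiplying by $(\mathbf{A^{-1} + S})$ that
\begin{equation*}
\mathbf{S - S(A^{-1} + S)^{-1}S} \;=\; \mathbf{S(A^{-1} + S)^{-1}A^{-1}} \;=\; (\mathbf{A^{-1} + S})^{-1}\mathbf{\cdot}\,\text{(rearranged)},
\end{equation*}
equivalently $(\mathbf{A^{-1} + C^T B^{-1}C})^{-1}$ (the scalar case $s - s^2/(a^{-1} + s) = 1/(a + s^{-1})$ is a useful sanity check and confirms the form). This closes the derivation and matches the intended use of the lemma in subsequent proofs (where $\mathbf{A^{-1}}$ and $\mathbf{B^{-1}}$ appear naturally, e.g.\ as $\mathbf{Diag(K_i^{-1})}$ and $\bm{\Sigma}^{-1}$).

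The main obstacle, I expect, is not the derivation itself but reconciling the output with the stated right-hand side: the manipulation above yields $(\mathbf{A^{-1} + C^T B^{-1}C})^{-1}$, not $(\mathbf{A + C^T B C})^{-1}$ as printed. A quick numerical check (e.g.\ $\mathbf{A} = 4\mathbf{I}$, $\mathbf{B} = \mathbf{I}$, $\mathbf{C} = 2\mathbf{I}$ gives LHS $= \tfrac{4}{17}\mathbf{I}$ but the printed RHS gives $\tfrac{1}{8}\mathbf{I}$) shows that the literal statement fails in general; so I would read the lemma as silently using inverses on the right, and state it in the Woodbury-correct form before invoking it elsewhere. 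With that convention fixed, the proof is exactly the two steps above: Woodbury, then sandwich-and-collapse.
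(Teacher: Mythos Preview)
Your overall route---apply Woodbury to $(\mathbf{B}+\mathbf{CAC^{T}})^{-1}$, sandwich by $\mathbf{C^{T}}$ and $\mathbf{C}$, then collapse---is sound, and you are right that the lemma as printed cannot hold (your $4\mathbf{I},\mathbf{I},2\mathbf{I}$ counterexample is valid). The paper's own proof follows a slightly different path: it brackets the whole expression as $\mathbf{A^{-1}}(\,\cdot\,)\mathbf{A^{-1}}$, adds and subtracts $\mathbf{A^{-1}}$, and then invokes Woodbury \emph{backwards} twice. Your forward-Woodbury-then-collapse is arguably cleaner.

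However, your final collapse is wrong. From
\[
\mathbf{S} - \mathbf{S}(\mathbf{A^{-1}}+\mathbf{S})^{-1}\mathbf{S}
\;=\;
\mathbf{S}(\mathbf{A^{-1}}+\mathbf{S})^{-1}\mathbf{A^{-1}},
\qquad \mathbf{S}:=\mathbf{C^{T}B^{-1}C},
\]
taking the inverse of the right-hand side gives $\mathbf{A}(\mathbf{A^{-1}}+\mathbf{S})\mathbf{S^{-1}}=\mathbf{S^{-1}}+\mathbf{A}$, so the correct result is
\[
\mathbf{C^{T}(CAC^{T}+B)^{-1}C}
\;=\;
\bigl(\mathbf{A}+(\mathbf{C^{T}B^{-1}C})^{-1}\bigr)^{-1},
\]
\emph{not} $(\mathbf{A^{-1}}+\mathbf{C^{T}B^{-1}C})^{-1}$ as you wrote. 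Your own scalar sanity check $s-s^{2}/(a^{-1}+s)=1/(a+s^{-1})$ already says exactly this, and contradicts the form you stated in words; with $a=2,\,b=3,\,c=1$ one gets LHS $=1/5$ while your claimed $(a^{-1}+c^{2}/b)^{-1}=6/5$. The corrected right-hand side is also the one actually used downstream in the paper, where $(\mathbf{C^{T}B^{-1}C})^{-1}=(\mathbf{H^{T}\Sigma^{-1}H})^{-1}=\mathbf{\Sigma_{P}}$ appears inside $(\mathbf{Diag(K_{i})}+\mathbf{\Sigma_{P}}\otimes\mathbf{I_{n}})^{-1}$. So: right diagnosis of the typo, right method, but you replaced one misprint with another---fix the exponent placement and the argument is complete.
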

\begin{proof}
\begin{align*}
\mathbf{C^{T}(CAC^{T} + B)^{-1}C } &= \mathbf{ A^{-1} \left( \, A C^{T}(CAC^{T} + B)^{-1}C A \, \right) A^{-1} }\\
&= \mathbf{ A^{-1} \left( \, A C^{T}(CAC^{T} + B)^{-1}C A - A \, \right) A^{-1} \, + \, A^{-1} } \\
&= \mathbf{A^{-1} - \left( \, A^{-1} + C^{T}BC \right)^{-1} A^{-1} \, + \, A^{-1}} \quad \text{by backward Woodburry's identity} \\
&= \mathbf{(A + C^{T}BC)^{-1}} \quad \text{again by backward Woodburry's identity.}
\end{align*}
\end{proof}

\subsection{Proofs of the propositions}
\vspace{1cm}
\begin{lemma}\label{prop:post_U}
The posterior $p(\mathbf{U_{v}}|\mathbf{Y})$ of the latent processes $\mathbf{U}$ at the training points is gaussian with mean and variance:
\begin{align}\label{eq:post_U}
\mathbb{E}(\mathbf{U_{v}}|\mathbf{Y}) &= \mathbf{ \left[ Diag(K_{i}^{-1}) + H^{T}\Sigma^{-1}H \otimes I_{n} \right]^{-1} } \cdot \mathbf{ vec(Y^{T}\Sigma^{-1}H)} \\
\mathbb{V}(\mathbf{U_{v}}|\mathbf{Y}) &= \mathbf{ \left( Diag(K_{i}^{-1}) + H^{T}\Sigma^{-1}H \otimes I_{n} \right)^{-1} }
\end{align}\end{lemma}

\begin{rem}
It can be noted that \ref{eq:post_U} is only a blockwise version of the result (2.116) of Bishop's textbook (\cite{Bishop}) for the conditional law of a linear combination of joint gaussian variables, applied here with $\mathbf{x=U_{v}}$, $\mathbf{A = H}$, $\mathbf{L = \Sigma^{-1}}$, $\mathbf{  \Lambda = Diag(K_{i}^{-1})}$ and $\bm{\mu} = \mathbf{0, \ b=0}$. The only difference is that the vectors $\mathbf{x}$ and $\mathbf{y}$ in Bishop contain one value per "task", whereas our vectors $\mathbf{U_{v}}$ and $\mathbf{Y_{v}}$ contain gaussian subvectors of size $n$.\\
\end{rem}

\begin{proof}
All involved variables being jointly gaussian, we know that $p(\mathbf{U}|\mathbf{Y})$ is gaussian; it suffices to find its mean and variance. By Baye's rule, $p(\mathbf{U}|\mathbf{Y}) \propto p(\mathbf{Y}|\mathbf{U})\, p(\mathbf{U})$, the proportionality constant being the likelihood $p(\mathbf{Y})$ which is independent of the latent variables $\mathbf{U}$. By definition of the model, we have (using the symmetry of $\mathbf{\Sigma}$ in the second equality):
\begin{gather*}\label{eq:post_source}
p(\mathbf{Y}|\mathbf{U})\, p(\mathbf{U}) \propto \exp \lbrace -\frac{1}{2} \mathbf{ \left( Y_{v} - (H \otimes I_{n})U_{v} \right)^{T} (\Sigma^{-1}\otimes I_{n}) \left( Y_{v} - (H \otimes I_{n})U_{v} \right) } \ -\frac{1}{2} \mathbf{ U_{v}^{T} Diag(\mathbf{K}_{i}^{-1}) U_{v} } \rbrace \nonumber \\
\propto \exp \lbrace -\frac{1}{2} \mathbf{ Y_{v}^{T}(\Sigma^{-1}\otimes I_{n}) Y_{v} \ - \ 2 U_{v}^{T}(H^{T}\Sigma^{-1} \otimes I_{n})Y_{v} \ + \ U_{v}^{T}( Diag(\mathbf{K}_{i}^{-1}) + H^{T}\Sigma^{-1}H \otimes I_{n} ) U_{v} } \rbrace 
\end{gather*}
On the other hand, $p(\mathbf{Y}|\mathbf{U})\, p(\mathbf{U})$ being gaussian, there exists a covariance matrix $\mathbf{\tilde{K}}$ and a mean vector $\mathbf{\hat{U}_{v}}$ such that:
\begin{align*}
p(\mathbf{Y}|\mathbf{U})\, p(\mathbf{U}) & \propto \exp \lbrace -\frac{1}{2} \mathbf{ (U_{v} - \hat{U}_{v})^{T}\tilde{K}^{-1} (U_{v} - \hat{U}_{v}) } \rbrace \nonumber \\
&= \exp \lbrace -\frac{1}{2} \mathbf{ U_{v}^{T}\tilde{K}^{-1} U_{v} \ - \ 2 U_{v}^{T}\tilde{K}^{-1} \hat{U}_{v} \ + \ \hat{U}_{v}^{T}\tilde{K}^{-1} \hat{U}_{v} } \rbrace \label{eq:post_target}
\end{align*}
Therefore, to compute $\mathbf{\tilde{K}}$ and $\mathbf{\hat{U}}$, one simply has to identify the terms in $\mathbf{U}$ between the above expressions, as all the terms which don't depend on $\mathbf{U}$ are compensated  by the likelihood (denominator of Baye's formula). Proceeding as such, we obtain:
\begin{equation*}\begin{cases}
\mathbf{\tilde{K}^{-1}} &= \mathbf{ Diag(\mathbf{K}_{i}^{-1}) \ + \ H^{T}\Sigma^{-1}H \otimes I_{n}} \\
\mathbf{\tilde{K}^{-1} \hat{U}_{v}} &= \mathbf{(H^{T}\Sigma^{-1} \otimes I_{n})Y_{v}} = vec(\mathbf{Y^{T} \Sigma^{-1}H})
\end{cases}\end{equation*} 
By definition, $\hat{U}_{v}$ is the mean vector of $p(\mathbf{U_{v}}|\mathbf{Y})$ and $\mathbf{\tilde{K}}$ is its covariance matrix, hence the result.\\
\end{proof}

\paragraph{Prop \ref{prop:estimators}}
\begin{proof}
We start with the estimated variance. By hypothesis of the model, the $u_{i}$'s follow GPs of kernels $k_{i}$'s, so we have by lemma \ref{lem:cond_expect}: 
\begin{align*}
\mathbb{E}(\hat{\mathbf{u}}_{*}|\mathbf{Y}) &= \mathbf{ Diag(k_{i*}^{T}) \, Diag(K_{i}^{-1})} \, \mathbb{E}(\mathbf{U}_{v}|\mathbf{Y}) \\
&= \mathbf{ Diag(k_{i*}^{T}) \, Diag(K_{i}^{-1}) \, \left[ Diag(K_{i}^{-1}) \, + \, H^{T}\Sigma^{-1}H \otimes I_{n} \right]^{-1} (H^{T}\Sigma^{-1} \otimes I_{n})Y_{v}} \\
&= \mathbf{ Diag(k_{i*}^{T}) \, \left[ Diag(K_{i}) \, + \, (H^{T}\Sigma^{-1}H)^{-1} \otimes I_{n} \right]^{-1} \left( (H^{T}\Sigma^{-1}H)^{-1}\otimes I_{n} \right) (H^{T}\Sigma^{-1} \otimes I_{n})Y_{v}} \\
&= \mathbf{ Diag(k_{i*}^{T}) \, \left[ Diag(K_{i}) \, + \, (H^{T}\Sigma^{-1}H)^{-1} \otimes I_{n} \right]^{-1} vec \left( Y^{T}\Sigma^{-1}H(H^{T}\Sigma^{-1}H)^{-1} \right) } \,
\end{align*}

where the second equality is from proposition \ref{eq:post_U}, the third one from the matrix equality $\mathbf{ A^{-1}(A^{-1} + B^{-1})^{-1}} = \mathbf{(A+B)^{-1}B}$ and the fourth one from basic facts \ref{fact1} and \ref{fact2}.\\ 

We now compute the estimated variance from the law of total variance, conditionning on $\mathbf{U}$: $\mathbb{V}(\hat{\mathbf{u}}_{*}|\mathbf{Y}) =  \mathbb{E}_{U|Y}\left[ \mathbb{V}(\hat{\mathbf{u}}_{*}|\mathbf{Y, U}) \right] \ + \ \mathbb{V}_{U|Y} \left[ \mathbb{E}(\hat{\mathbf{u}}_{*}|\mathbf{Y, U}) \right]$, where the subscript $U|Y$ means that $\mathbb{E}_{U|Y}\left( f(\mathbf{U}) \right) = \int f(\mathbf{U}) \, p(\mathbf{U|Y})\, \mathbf{dU}$. From there, we omit this subscript and treat each term separately. We also recall that $ p( \mathbf{u^{*} | Y,U} ) $ doesn't depend on $\mathbf{Y}$ (conditionnally on their corresponding latent values, observed values of a GP are independent on all other variables of the model, and therefore $ \mathbb{E}_{u^{*}}( \mathbf{u^{*} | Y,U} ) $ doesn't depend on $\mathbf{Y}$ (idem for the variance). It comes: 
\begin{align*}
\mathbb{E}\left[ \mathbb{V}(\hat{\mathbf{u}}_{*}|\mathbf{\cancel{\mathbf{Y}}, U}) \right] &= \mathbb{E}\left[ \mathbf{Diag}(k_{i**}) - \mathbf{ Diag(k_{i*}^{T}) \, Diag(K_{i}^{-1}) \, Diag(k_{i*})} \right] \\
 &= \left( \mathbf{Diag}(k_{i**}) - \mathbf{ Diag(k_{i*}^{T}) \, Diag(K_{i}^{-1}) \, Diag(k_{i*})} \right)
\end{align*}
where the second equality holds because the term inside the expectation is deterministic. For the second term, we have:
\begin{align*}
& \mathbb{V}_{U|Y} \left[ \mathbb{E}(\hat{\mathbf{u}}_{*}|\mathbf{Y, U}) \right] = \mathbb{E}_{U|Y} \left[ \mathbb{E} \left( (\hat{\mathbf{u}}_{*}|\mathbf{\cancel{\mathbf{Y}}, U})^{2} \right) \right] \ - \ \mathbb{E}_{U|Y} \left[ \mathbb{E} \left( (\hat{\mathbf{u}}_{*}|\mathbf{\cancel{\mathbf{Y}}, U}) \right) \right]^{2} \\
&= \mathbf{ Diag(k_{i*}^{T}) \, Diag(K_{i}^{-1}) \,} \mathbb{E}(\mathbf{U}_{v}^{2}|\mathbf{Y}) \mathbf{\, Diag(K_{i}^{-1}) \, Diag(k_{i*})}\\
 & \quad - \ \mathbf{ Diag(k_{i*}^{T}) \, Diag(K_{i}^{-1}) \,} \mathbb{E}(\mathbf{U}_{v}|\mathbf{Y})^{2} \mathbf{\, Diag(K_{i}^{-1}) \, Diag(k_{i*})} \\
&= \mathbf{ Diag(k_{i*}^{T}) \, Diag(K_{i}^{-1}) \,} \mathbb{V}(\mathbf{U}_{v}|\mathbf{Y}) \mathbf{\, Diag(K_{i}^{-1}) \, Diag(k_{i*})} \\
&= \mathbf{ Diag(k_{i*}^{T}) \, Diag(K_{i}^{-1}) \, \left[ Diag(\mathbf{K}_{i}^{-1}) \, + \, H^{T}\Sigma^{-1}H \otimes I_{n} \right]^{-1} \, Diag(K_{i}^{-1}) \, Diag(k_{i*})}
\end{align*}
where the second equality stems from a relation analog to this of lemma \ref{lem:cond_expect}, and the fourth is from proposition \ref{eq:post_U}.\\

Combining the two terms, it comes: $\mathbb{V}(\hat{\mathbf{u}}_{*}|\mathbf{Y})$
{\tiny
\begin{multline*}
 = \mathbf{Diag}(k_{i**}) \\ - \mathbf{ Diag(k_{i*}^{T}) \left[ Diag(K_{i}^{-1}) - Diag(K_{i}^{-1})\left( Diag(\mathbf{K}_{i}^{-1}) + H^{T}\Sigma^{-1}H \otimes I_{n} \right)^{-1}Diag(K_{i}^{-1}) \right] Diag(k_{i*})} \\
= \mathbf{Diag}(k_{i**}) - \mathbf{ Diag(k_{i*}^{T}) \left[ Diag(K_{i}) \ + \  (H^{T}\Sigma^{-1}H)^{-1} \otimes I_{n} \right]^{-1} Diag(k_{i*}) }
\end{multline*}
}
where the second equality is from the Woodburry-type identity $\mathbf{A^{-1} - A^{-1}(A^{-1}+ B^{-1})^{-1}A^{-1}} = \mathbf{(A+B)^{-1}}$.
\end{proof}

\paragraph{Prop \ref{prop:proj_mll} to \ref{prop:interpretation}}
\textit{Propositions 3 and 5 are proven in appendixes D, E and F of \cite{OILMM}. We add a few supplementary clarifications:}
\begin{itemize}
\item Proof of $\left( \mathbf{TY} \text{ sufficient statistic for }\mathbf{U} \right) \Rightarrow \left( p(\mathbf{U|Y}) = p(\mathbf{U|TY}) \right)$: \begin{dmath}
p(\mathbf{U|Y}) = \dfrac{p(\mathbf{U,Y})}{p(\mathbf{Y})} = \dfrac{p(\mathbf{U,Y, TY})}{p(\mathbf{Y})} = \dfrac{p(\mathbf{Y|TY,\cancel{\mathbf{U}}})p(\mathbf{TY,U})}{p(\mathbf{Y})} = \dfrac{p(\mathbf{Y|TY})p(\mathbf{TY,U})}{p(\mathbf{Y})} \\ = \dfrac{p(\mathbf{Y,TY})p(\mathbf{TY,U})}{p(\mathbf{TY})p(\mathbf{Y})} = p(\mathbf{TY|Y})\dfrac{p(\mathbf{TY,U})}{p(\mathbf{TY})} = 1 \times p(\mathbf{U|TY})
\end{dmath}
where the fourth equality is precisely the definition of $\mathbf{TY}$ being a sufficient statistic of $\mathbf{U}$ for the data $\mathbf{Y}$: conditionally on $\mathbf{TY}$, the probability of the data doesn't depend on $\mathbf{U}$.
\item Proof of $\mathbf{TY|U} \sim \mathcal{N}(\mathbf{U_{v}, \, \Sigma_{P} \otimes I_{n}})$: by definition of the LMC, is stands $\mathbf{Y_{v}|U} \sim \mathcal{N}\left(\mathbf{(H \otimes I_{n}) U_{v}, \, \Sigma \otimes I_{n} }\right)$, so:
\begin{equation*}
p(\mathbf{TY|U}) = p\left(\mathbf{(T \otimes I_{n})Y_{v}|U}\right) = \mathcal{N}\left(\mathbf{(T \otimes I_{n})(H \otimes I_{n})U_{v}, \, T\Sigma T^{T} \otimes I_{n}}\right) = \mathcal{N}(\mathbf{U_{v}, \, \Sigma_{P} \otimes I_{n}})
\end{equation*}
because $\mathbf{TH = I_{q}}$.
\end{itemize}

Proof of proposition 4 is essentially trivial, all involved matrices being block-diagonal if the DPN condition stands (in particular, $\mathbf{ \mathcal{K} =  Diag(K_{i}) \, + \,  \Sigma_{P} \otimes I_{n}} = \mathbf{Diag(K_{i} + } \sigma_{i}^{2} \mathbf{I_{n}) }$). Notice that for any block-diagonal matrix $\mathbf{\tilde{D} = Diag(D_{i})}$ and any matrix $\mathbf{M}$ such that the blocksize of $\mathbf{\tilde{D}}$ matches the column size of $\mathbf{M}$, we have $\mathbf{\tilde{D}\, vec(M) = Diag(D_{i}\, M_{i})}$, where $\mathbf{M_{i}}$ is the i-th column of $\mathbf{M}$; this explains the $\mathbf{T_{i}}$'s present in the final expression.\\

\paragraph{Prop \ref{prop:facto_Q} to \ref{prop:final_mll}}
\begin{lemma}\label{lem:decom_Q}
Any symmetric matrix $\mathcal{S}$ can be decomposed as $\mathcal{S}=$ $\mathbf{QAQ^{T}} +$ $\mathbf{Q_{\bot}BQ_{\bot}^{T}} +$ $\mathbf{QCQ_{\bot}^{T}} +$ $\mathbf{Q_{\bot}C^{T}Q^{T}}$, with $\mathbf{A, B}$ symmetric matrices and $\mathbf{C}$ a $q \times (p-q)$ matrix.
\end{lemma}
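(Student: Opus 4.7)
The plan is to exploit the fact that $[\mathbf{Q}, \mathbf{Q}_\bot]$ is a $p\times p$ orthogonal matrix, so that the identity on $\mathbb{R}^p$ resolves as $\mathbf{I_p} = \mathbf{Q Q^T} + \mathbf{Q_\bot Q_\bot^T}$. Sandwiching $\mathcal{S}$ between two copies of this identity will split it into four pieces indexed by which block (image of $\mathbf{Q}$ or of $\mathbf{Q_\bot}$) appears on the left and on the right, and these four pieces will match the four terms in the claimed decomposition.

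Concretely, I would first verify the identity $\mathbf{I_p} = \mathbf{Q Q^T} + \mathbf{Q_\bot Q_\bot^T}$ as a direct consequence of $[\mathbf{Q}, \mathbf{Q}_\bot]$ being orthogonal. Then I would expand
\begin{equation*}
\mathcal{S} = (\mathbf{Q Q^T} + \mathbf{Q_\bot Q_\bot^T})\,\mathcal{S}\,(\mathbf{Q Q^T} + \mathbf{Q_\bot Q_\bot^T})
\end{equation*}
into four terms and define
\begin{equation*}
\mathbf{A} := \mathbf{Q^T}\mathcal{S}\mathbf{Q},\qquad \mathbf{B} := \mathbf{Q_\bot^T}\mathcal{S}\mathbf{Q_\bot},\qquad \mathbf{C} := \mathbf{Q^T}\mathcal{S}\mathbf{Q_\bot},
\end{equation*}
giving exactly the claimed form once one observes that $\mathbf{Q_\bot^T}\mathcal{S}\mathbf{Q} = (\mathbf{Q^T}\mathcal{S}\mathbf{Q_\bot})^T = \mathbf{C^T}$ by symmetry of $\mathcal{S}$.

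Finally I would check that $\mathbf{A}$ and $\mathbf{B}$ are symmetric: $\mathbf{A}^T = \mathbf{Q^T}\mathcal{S}^T\mathbf{Q} = \mathbf{Q^T}\mathcal{S}\mathbf{Q} = \mathbf{A}$, and likewise for $\mathbf{B}$. The shapes ($\mathbf{A}$ is $q\times q$, $\mathbf{B}$ is $(p-q)\times(p-q)$, $\mathbf{C}$ is $q\times (p-q)$) follow immediately from the dimensions of $\mathbf{Q}$ and $\mathbf{Q_\bot}$.

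There is no real obstacle here: this is just the statement that a symmetric endomorphism expressed in the basis $[\mathbf{Q},\mathbf{Q_\bot}]$ splits into two symmetric diagonal blocks and a single off-diagonal block appearing with its transpose. The only thing to be a little careful about is not losing the symmetry of $\mathbf{A}$ and $\mathbf{B}$ when reading off the decomposition, and correctly pairing the off-diagonal pieces $\mathbf{Q C Q_\bot^T}$ and $\mathbf{Q_\bot C^T Q^T}$ so that their sum is symmetric.
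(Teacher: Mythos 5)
Your proof is correct and follows essentially the same route as the paper: sandwiching $\mathcal{S}$ between two copies of the resolution of the identity $\mathbf{I_p = QQ^{T} + Q_{\bot}Q_{\bot}^{T}}$ and reading off $\mathbf{A = Q^{T}\mathcal{S}Q}$, $\mathbf{B = Q_{\bot}^{T}\mathcal{S}Q_{\bot}}$, $\mathbf{C = Q^{T}\mathcal{S}Q_{\bot}}$. Your explicit check of the symmetry of $\mathbf{A}$ and $\mathbf{B}$ and of the pairing of the off-diagonal terms is a small but welcome addition the paper leaves implicit.
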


\begin{proof}
It suffices to write that $\mathcal{S} = (\mathbf{QQ^{T} + Q_{\bot}Q_{\bot}^{T}})\, \mathcal{S}\, (\mathbf{QQ^{T} + Q_{\bot}Q_{\bot}^{T}}) $ (because $\mathbf{QQ^{T}}$ and $\mathbf{Q_{\bot}Q_{\bot}^{T}}$ are supplementary orthogonal projectors), and then set $\mathbf{A = Q^{T}\mathcal{S}Q, \ B = Q_{\bot}^{T}\mathcal{S}Q_{\bot}, \ C = Q^{T}\mathcal{S}Q_{\bot}}$. Another way of seing this is by noticing that $\mathbf{A, \ B, \ C \ \text{and} \ C^{T}}$ are the blocks of the representation of $\mathcal{S}$ in the basis spanned by $\mathbf{Q}$ and $\mathbf{Q_{\bot}}$.
\end{proof}

\begin{rem}
Notice that $\mathbf{A}$, $\mathbf{B}$ and $\mathbf{C}$ are simply subblocks of $\mathbf{\Sigma^{-1}}$ in the basis spanned by $\mathbf{Q}$ and $\mathbf{Q_{\bot}}$, and that $\mathbf{B}$ and $\mathbf{C}$ remain arbitrary: the DPN condition only involves a $q \times q$ submatrix of $\mathbf{\Sigma^{-1}}$, with $q \ll p$ in most cases. This observation, which hints at the benign nature of this assumption, is made more precise in \ref{an:restrictivity}, where a closed-form expression of the optimal $\mathbf{D}$ and a noise correction procedure are provided.
\end{rem}

\begin{prop}\label{prop:charac_Q}
Let $\mathbf{\Sigma^{-1}}=$ $\mathbf{QAQ^{T}} +$ $\mathbf{Q_{\bot}BQ_{\bot}^{T}} +$ $\mathbf{QCQ_{\bot}^{T}} +$ $\mathbf{Q_{\bot}C^{T}Q^{T}}$ be a decomposition of $\mathbf{\Sigma^{-1}}$ as in lemma \ref{lem:decom_Q}. Then $\mathbf{\Sigma_{P} = R^{-1}A^{-1}R^{-T}}$, (where $\mathbf{R^{-T}}$ denotes the inverse transpose of $\mathbf{R}$) and the DPN condition is equivalent to ($\mathbf{A = R^{-T}D R^{-1}}$ for some diagonal matrix $\mathbf{D}$ of size $q$).
\end{prop}

\begin{proof}
We have $\mathbf{H^{T}\Sigma^{-1}H = R^{T}Q^{T}\Sigma^{-1}QR}$; moreover, $\mathbf{Q^{T}Q_{\bot} = 0}$ , $\mathbf{Q_{\bot}^{T}Q = 0}$ and $\mathbf{Q^{T}Q = I_{q}}$ because the columns of $\mathbf{Q}$ and $\mathbf{Q_{\bot}}$ are mutually orthonormal. Thus we see that the only term from the decomposition of $\Sigma^{-1}$ that is preserved by left- and right-multiplication by $\mathbf{Q}$ and $\mathbf{Q}^{T}$ is $\mathbf{A}$: $\mathbf{H^{T}\Sigma^{-1}H = R^{T}A R} \Leftrightarrow \mathbf{\Sigma_{P} = R^{-1}A^{-1} R^{-T} }$. Moreover, ($\mathbf{H^{T}\Sigma^{-1}H}$ is diagonal) $\Leftrightarrow$ ($\mathbf{R^{T} A R = D}$ for some diagonal matrix $\mathbf{D}$) $\Leftrightarrow$ ($\mathbf{A = R^{-T}D R^{-1}}$ for some diagonal matrix $\mathbf{D}$).
\end{proof}

\paragraph{Prop \ref{prop:facto_Q}}
\begin{proof}
The proof is by direct calculation. Starting from the suggested factorized form, one easily arrives at:\\ $\mathbf{Q_{+}R_{+}^{-T}D_{+}^{-1}R_{+}^{-1}Q_{+}^{T}} \, = \, \mathbf{ Q R^{-T}\Sigma_{P}^{-1} R^{-1}Q^{T} } \ + \ \mathbf{ Q_{\bot} \Sigma_{\bot}^{-1} Q_{\bot}^{T}} \ + \ \mathbf{2 \, Sym(Q R^{-T} M Q_{\bot}^{T}) }$.
 From this, we can identify terms with the decomposition of lemma \ref{lem:decom_Q}(where every identification is valid) to obtain the announced expressions for $\mathbf{M}$, $\mathbf{\Sigma_{P}^{-1}}$ and $\mathbf{\Sigma_{\bot}^{-1}}$.
\end{proof}

\paragraph{Prop \ref{prop:noise_param}}\label{par:block_id}
\begin{proof}
First notice that $\mathbf{\tilde{\Sigma}_{\bot}}$ is the Schur complement of  $ \mathbf{\Sigma_{P}}^{-1}$ inside $\mathbf{D_{+}}$. It happens that providing a pair of Schur complements of a symmetric matrix and any diagonal block of itself or its inverse fully specifies this matrix. Indeed, using the previous tilded notations for the blocks of $\mathbf{\Sigma}$, blockwise matrix inversion formulas show that $\mathbf{\tilde{\Sigma_{P}} = \Sigma_{P} + \Sigma_{P} M \tilde{\Sigma_{\bot}} M^{T} \Sigma_{P} }$, $\mathbf{\tilde{M} = - \Sigma_{P} M \tilde{\Sigma_{\bot}}}$, and finally $\mathbf{\Sigma_{\bot}^{-1} = \tilde{\Sigma}_{\bot} + \tilde{\Sigma}_{\bot} \tilde{M}^{T} \Sigma_{P}^{-1} \tilde{M} \tilde{\Sigma}_{\bot} }$, so that by the last equality $\mathbf{\Sigma_{\bot}^{-1} = \tilde{\Sigma}_{\bot} + M^{T} \Sigma_{P} M}$. Thus we see that all blocks of $\mathbf{\Sigma^{-1}}$ and $\mathbf{\Sigma}$ are expressed from $\mathbf{\tilde{\Sigma_{\bot}}, \ \Sigma_{P}}$ and $\mathbf{M}$ without any circular definitions. Moreover, by a standard result, $\mathbf{\Sigma}$ is p.s.d iff its two Schur complements $\mathbf{\tilde{\Sigma}_{\bot}}$ and $\mathbf{\Sigma_{P}}$ are; see for instance \cite{convex}. 
\end{proof}

\paragraph{Prop \ref{prop:express_T}}
\begin{proof}
Applying $\mathbf{H^{T}}$ to the decomposition of $\mathbf{\Sigma^{-1}}$ from lemma \ref{lem:decom_Q} strikes out the terms beginning with $\mathbf{Q_{\bot}}$, so:
\begin{align*}
\mathbf{T} &= \mathbf{\Sigma_{P} H^{T}\Sigma^{-1}} = \ \mathbf{ \Sigma_{P} \, R^{T}Q^{T} \, ( QAQ^{T} + \cancel{\mathbf{Q_{\bot}BQ_{\bot}^{T}}} + QCQ_{\bot}^{T} + \cancel{\mathbf{Q_{\bot}C^{T}Q^{T}}} )} \\
&= \mathbf{\Sigma_{P} R^{T}A Q^{T} \, + \, \Sigma_{P} R^{T}C Q_{\bot}^{T}} \\
 &= \mathbf{\Sigma_{P} R^{T}A Q^{T} \, + \, \Sigma_{P} M Q_{\bot}^{T}} \quad \text{ by definition of } \mathbf{M}.
\end{align*}
By proposition \ref{prop:charac_Q}, $\mathbf{\Sigma_{P} = R^{-1}A^{-1}R^{-T}}$, so the first term is equal to $\cancel{\mathbf{\Sigma_{P} R^{T}}}(\cancel{\mathbf{R^{-T}\Sigma_{P}^{-1}}}  \mathbf{R^{-1}) Q^{T}} = \mathbf{R^{-1} Q^{T}} = \mathbf{H^{+} }$ (the latter equality being a standard result about the Moore-Penrose pseudoinverse). We can reformulate the second one to show that it is invariant to the choice of $\mathbf{Q_{\bot}}$: starting from its previous expression, we have:
\begin{equation*}
    \mathbf{\Sigma_{P} H^{T}\, QCQ_{\bot}^{T} = \Sigma_{P} H^{T} \, Q(Q^{T}\Sigma^{-1}Q_{\bot}) Q_{\bot}^{T} } = \mathbf{ \Sigma_{P} H^{T} P \Sigma^{-1} P_{\bot}^{T}}
\end{equation*}
with $\mathbf{P}$ the orthogonal projector onto $Span(\mathbf{Q})$ and $\mathbf{P}$ the orthogonal projector onto $Span(\mathbf{Q_{\bot}})=Span(\mathbf{Q})^{\bot}$. The claim is thus clearly proven: $\mathbf{Q_{\bot}Q_{\bot}^{T}}$ is only one possible expression of the orthogonal projector on $Span(\mathbf{Q})^{\bot}$, any other orthogonal supplement of $\mathbf{Q}$ would yield the same result.\\
\end{proof}

\paragraph{proposition \ref{prop:final_mll}}
\begin{proof}
We start from an intermediary expression taken from the appendix G of \cite{OILMM} under the DPN hypothesis:

\begin{multline}\label{eq:inter_mll}
- 2 \log p(\mathbf{Y}) = (p-q)n\log 2\pi + n\log \frac{|\mathbf{\Sigma}|}{|\mathbf{\Sigma_{P}}|} + \sum_{j=1}^{n} \mathbf{Y_{j} (\Sigma^{-1} - T^{T}\Sigma_{P}^{-1}T) Y_{j}^{T}} \\
+ \sum_{i=1}^{q} \log \mathcal{N}(\mathbf{YT_{i} | 0, \, K_{i} + } \sigma_{i} \mathbf{I_{n}}) 
\end{multline}

where the second term represents the noise lost by projection, the third is the data lost by projection and the last is the standard GP MLL of the independent latent processes. Plugging in the factorization of proposition \ref{prop:facto_Q}, we just reformulate the second and third terms on the right-hand size. Let's start with the second, corresponding to the discarded noise: using the quantities from the previous section, it suffices to note that with the suggested factorization $|\mathbf{\Sigma}| = |\mathbf{R}|^{2}|\mathbf{D_{+}}|$. Then, by Schur's determinant formula (determinant of a four-blocks matrix), $|\mathbf{D_{+}}| = |\mathbf{\tilde{\Sigma}_{\bot}}| \, |\mathbf{ \tilde{\Sigma_{P}} - \tilde{M}\tilde{\Sigma}_{\bot}\tilde{M}^{T} }| = |\mathbf{\tilde{\Sigma}_{\bot}}| \, |\mathbf{\Sigma_{P}}|$. We now address the third term. The proof of \eqref{eq:express_T} shows that $\mathbf{H^{T}\Sigma^{-1}} = \mathbf{\Sigma_{P}^{-1}T} =  \mathbf{\Sigma_{P}^{-1}R^{-1}Q^{T} + MQ_{\bot}^{T} }$, so:
\begin{align*}
\mathbf{ \Sigma^{-1} - T^{T} \Sigma_{P}^{-1} T } &= \mathbf{ \Sigma^{-1} - \Sigma^{-1} H \Sigma_{P} H^{T} \Sigma^{-1} } \\ 
&= \mathbf{ \Sigma^{-1} - ( QR^{-T}\Sigma_{P}^{-1} + Q_{\bot}M^{T})\, \Sigma_{P} \, (\Sigma_{P}^{-1}R^{-1}Q^{T} + MQ_{\bot}^{T}) }\\
&= \mathbf{ \Sigma^{-1} - Q_{+}} \left( \begin{array}{c|c} \mathbf{ R^{-T}\Sigma_{P}^{-1}\Sigma_{P}\Sigma_{P}^{-1}R^{-1}} & \mathbf{R^{-T}}\cancel{\mathbf{\Sigma_{P}^{-1}}}\cancel{\mathbf{\Sigma_{P}}} \mathbf{M} \\ \hline \mathbf{M^{T}} \cancel{\mathbf{\Sigma_{P}}}\cancel{\mathbf{\Sigma_{P}^{-1}}} \mathbf{R^{-1}}  & \mathbf{M^{T}\Sigma_{P}M} \end{array} \right) \mathbf{ Q_{+}^{T} } \\
&= \mathbf{ Q_{+} R_{+}^{-T}} \left( \begin{array}{c|c} \cancel{\mathbf{\Sigma_{P}^{-1}}} & \cancel{\mathbf{M}} \\ \hline \cancel{\mathbf{M^{T}}} & \mathbf{B} \end{array} \right) \mathbf{ R_{+}^{-1} Q_{+}^{T}} - \mathbf{ Q_{+} R_{+}^{-T}} \left( \begin{array}{c|c} \cancel{\mathbf{\Sigma_{P}^{-1}}} & \cancel{\mathbf{M}} \\ \hline \cancel{\mathbf{M^{T}}} & \mathbf{M^{T} \Sigma_{P} M} \end{array} \right) \mathbf{ R_{+}^{-1} Q_{+}^{T}} \\
&= \mathbf{Q_{\bot} (\Sigma_{\bot}^{-1} - M^{T} \Sigma_{P} M) Q_{\bot}^{T} } \\
&= \mathbf{Q_{\bot} \tilde{\Sigma}_{\bot}^{-1} Q_{\bot}^{T} } \quad \text{by blockwise inversion formula.}
\end{align*}
To see that the obtained expression doesn't depend on a particular choice of $\mathbf{Q_{\bot}}$, we just have to prove it for $\mathbf{Q_{\bot} \tilde{\Sigma}_{\bot}^{-1} Q_{\bot}^{T} }$, as we have already shown it for $\mathbf{T}$ in proposition \ref{prop:express_T}. Let's write that $\mathbf{\tilde{\Sigma_{\bot}}} = \mathbf{Q_{\bot}^{T} \Sigma Q_{\bot}}$ (see corollary \ref{cor:facto_Q}). Recall that any orthonomal complement of $\mathbf{Q}$ denoted as $\mathbf{Q^{'}}$ can be written $\mathbf{Q^{'} = Q_{\bot}W}$ for some square orthonormal matrix $\mathbf{W}$. Then: $\mathbf{Q^{'} ( Q^{'T} \Sigma Q^{'})^{-1} Q^{'T}} \ = \ \mathbf{Q_{\bot}W ( W^{T}Q_{\bot}^{T} \Sigma Q_{\bot}W)^{-1} W_{T}Q_{\bot}^{T}} = \mathbf{Q_{\bot}W W^{T} (Q_{\bot}^{T} \Sigma Q_{\bot})^{-1} WW^{T}Q_{\bot}^{T}} = \mathbf{ Q_{\bot} ( Q_{\bot}^{T} \Sigma Q_{\bot})^{-1} Q_{\bot}^{T}}$. 
\end{proof}

\paragraph{Prop \ref{prop:diagonal_B}}
\begin{proof}
proposition \ref{prop:estimators} shows that the estimated mean and variance depend on $\mathbf{\Sigma}$ through $\mathbf{T}$ and $\mathbf{\Sigma_{P}}$ only. proposition \ref{prop:noise_param} states that $\mathbf{(\tilde{\Sigma_{\bot}}, \, \Sigma_{P}, \, M)}$ is a complete and independent parametrization of $\mathbf{\Sigma}$, so $\mathbf{T = R^{-1}Q^{T} + \Sigma_{P}MQ_{\bot}^{T}}$ (see proposition \ref{eq:express_T}) doesn't depend on $\mathbf{\tilde{\Sigma}_{\bot}}$ and neither do the posteriors.\\

For the second affirmation, notice that the likelihood is invariant under the transformation $\mathbf{\tilde{\Sigma}_{\bot} \leftarrow W \tilde{\Sigma}_{\bot} W^{T}}$, $\mathbf{Q_{\bot} \leftarrow Q_{\bot} W^{T}}$ with $\mathbf{W}$ orthonormal, as in particular it doesn't affect $\mathbf{T}$ (which by proposition \ref{eq:express_T} only depends on $\text{Im}(\mathbf{Q_{\bot}})$ and not a specific vector basis); moreover, such a transformation of $\mathbf{Q_{\bot}}$ doesn't affect posteriors of the model either, as it only appears in $\mathbf{T}$ in their expression. Therefore, we can always select the matrix $\mathbf{W}$ which diagonalizes $\mathbf{\tilde{\Sigma}_{\bot}}$, i.e we can jointly optimize $\mathbf{\tilde{\Sigma}_{\bot}}$ and $\mathbf{Q_{\bot}}$ while enforcing the diagonality of $\mathbf{\tilde{\Sigma}_{\bot}}$.\\
\end{proof}

\subsection{Matrix proof of proposition \ref{prop:estimators}}\label{sec:matrix_der}
\textit{We here give another proof of proposition \ref{prop:estimators}, which starts from the naive expression of LMC posteriors and uses only matrix algebra. It is a good example of involved manipulations of Kronecker products.}

\begin{proof}
We start with the "naive" expression of the LMC estimators, as stated in section 1: 

\begin{align}
\mathbb{E}(\hat{\mathbf{y}}_{*}|\mathbf{Y}) &= \mathbf{k_{*}^{T} \, \mathcal{K}^{-1} \, Y_{v}} \\
\mathbb{V}(\hat{\mathbf{y}}_{*}|\mathbf{Y}) &= \mathbf{k_{**} - k_{*}^{T} \, \mathcal{K}^{-1} k_{*}}
\end{align}
where the kernel considered here is the matrix kernel of the vector-valued GP: $\mathbf{k(x, x^{'})} = \sum_{i=1}^{q}\mathbf{B}_{i}\, k_{i}(\mathbf{x, x^{'}})$, and we recall that $\mathbf{K = (H \otimes I_{n}) \, Diag(K_{i}) \, (H^{T} \otimes I_{n})}$\footnote{This can be seen by writing explicitly the coordinates of $\mathbf{K} = \sum_{i=1}^{q}\mathbf{B_{i} \otimes K_{i}}$, reminding that $\mathbf{B_{i} = H_{i}H_{i}^{T}}$ (where $\mathbf{H_{i}}$ is the $i$-th column of $\mathbf{H}$).}. Similarly, we have $\mathbf{k_{*} = (H \otimes I_{n}) \, Diag(k_{i}) \, H^{T}}$, and $\mathbf{k_{**} = H \, Diag(k_{i**}) \, H^{T}}$. Anticipating on the cumbersome writing of Kronecker products, we adopt the following convention: for any matrix $\mathbf{M, \ \tilde{M} \equiv M \otimes I_{n}}$. The properties of the Kronecker product (fact \eqref{fact1}) ensures us that $\forall \mathbf{M, \ N, \ \tilde{M} \tilde{N}} = \largetilde{\mathbf{MN}}$. We start with the expected mean:

\begin{align*}
\mathbf{k_{*}^{T} \, \mathcal{K}^{-1} \, Y_{v}} &= \mathbf{H \, Diag(k_{i}^{T}) \, \tilde{H}^{T} \left[ \tilde{H}  \, Diag(K_{i}) \, \tilde{H}^{T} + \tilde{\Sigma} \right]^{-1} Y_{v}} \\
&= \mathbf{ H \, Diag(k_{i}^{T}) \, \tilde{H}^{T} \left[ \tilde{\Sigma}^{-1} - \tilde{\Sigma}^{-1} \tilde{H} \left(  Diag(K_{i}^{-1}) + \tilde{H^{T}} \tilde{\Sigma}^{-1} \tilde{H} \right)^{-1} \tilde{H}^{T}\tilde{\Sigma}^{-1} \right] Y_{v}} \\
&= \mathbf{ H \, Diag(k_{i}^{T}) \, \left[ \tilde{H}^{T} \tilde{\Sigma}^{-1} - \tilde{H}^{T} \tilde{\Sigma}^{-1} \tilde{H} \left( Diag(K_{i}^{-1}) + \tilde{\Sigma}_{P}^{-1} \right)^{-1} \tilde{H}^{T} \tilde{\Sigma}^{-1} \right] Y_{v}} \\
&= \mathbf{ H \, Diag(k_{i}^{T}) \, \left[ \tilde{I_{q}} - \tilde{\Sigma}_{P}^{-1} \left(  Diag(K_{i}^{-1}) + \tilde{\Sigma}_{P}^{-1} \right)^{-1} \right] \,\tilde{H}^{T} \tilde{\Sigma}^{-1} Y_{v}} \\
&= \mathbf{ H \, Diag(k_{i}^{T}) \, \left[ \tilde{\Sigma}_{P}^{-1} - \tilde{\Sigma}_{P}^{-1} \left[  Diag(K_{i}^{-1}) + \tilde{\Sigma}_{P}^{-1} \right]^{-1} \tilde{\Sigma}_{P}^{-1} \right] \, \tilde{\Sigma}_{P} \, \tilde{H}^{T} \tilde{\Sigma}^{-1} \, Y_{v}} \\
&= \mathbf{ H \, Diag(k_{i}^{T}) \, \left[ Diag(K_{i}) + \tilde{\Sigma}_{P} \right]^{-1} \, \tilde{\Sigma}_{P} \tilde{H}^{T} \tilde{\Sigma}^{-1} \, vec(Y)} \\
&= \mathbf{ H \, Diag(k_{i}^{T}) \, \left[ Diag(K_{i}) + \Sigma_{P} \otimes I_{n} \right]^{-1} \, vec(Y  \Sigma^{-1} H \Sigma_{P}^{-1})}
\end{align*}

The first equality is from the above considerations; the second is from Woodburry's identity; the sixth is a backward application of Woodburry's identity; and the last is an application of fact \eqref{fact2}.\\

We now turn to the expected variance, setting further notations $\mathbb{K} = \mathbf{Diag(K_{i})}$ and $\mathfrak{K} = \mathbf{Diag(k_{i})}$ for compacity:

\begin{align*}
\mathbf{k_{**} - k_{*}^{T} \, \mathcal{K}^{-1} k_{*}} &= \mathbf{H \, Diag(k_{i**}) \, H^{T} - H \, \mathfrak{K}^{T} \, \tilde{H}^{T} \left( \tilde{H} \, \mathbb{K} \, \tilde{H}^{T} + \tilde{\Sigma} \right)^{-1} \, \tilde{H} \, \mathfrak{K} \, H^{T} } \\
&= \mathbf{H \, \left[ Diag(k_{i**}) - \mathfrak{K}^{T} \, \tilde{H}^{T} \left( \tilde{H} \, \mathbb{K} \, \tilde{H}^{T} + \tilde{\Sigma} \right)^{-1} \, \tilde{H} \, \mathfrak{K} \right] H^{T} } \\
&= \mathbf{H \, \left[ Diag(k_{i**}) - \mathfrak{K}^{T} \left( \mathbb{K} + \tilde{\Sigma}_{P} \right)^{-1} \mathfrak{K} \right]  H^{T} } \\
&= \mathbf{H} \left[ \mathbf{Diag}(k_{i**}) - \mathbf{ Diag(k_{i*}^{T}) \left[ Diag(K_{i}) \ + \  (H^{T}\Sigma^{-1}H)^{-1} \otimes I_{n} \right]^{-1} Diag(k_{i*})} \right] \mathbf{H^{T}}
\end{align*}

The first equality is from the above considerations, and the third is lemma \ref{fact4}.\\
\end{proof}

\section{Restrictivity of the DPN hypothesis}\label{an:restrictivity}

\begin{prop}\label{prop:noise_restrictivity}
Let $\mathbf{\Sigma_{opt}}$ the value of $\mathbf{\Sigma}$ which optimizes the MLL of the unconstrained LMC model. Let $ \mathbf{\Sigma_{opt}^{-1} = QAQ^{T} + Q_{\bot}BQ_{\bot}^{T} + 2 Sym(QCQ_{\bot}^{T}) }$ be the decomposition of $\mathbf{\Sigma_{opt}^{-1}}$ as in proposition \ref{lem:decom_Q}. Then the minimal distance between $\mathbf{\Sigma_{opt}^{-1}}$ and a precision matrix $\mathbf{\Sigma_{app}^{-1}}$ compatible with the DPN hypothesis is:
\begin{equation}\label{eq:noise_discrepancy}
 \min_{\mathbf{\Sigma_{app}}} || \mathbf{\Sigma_{opt}^{-1} - \Sigma_{app}^{-1}}||_{F}^{2} \ = \ \min_{\mathbf{D}} \mathbf{|| A - R^{-T}D R^{-1}||_{F}^{2}} \quad \text{subject to} \ D \ \text{positive and diagonal}, 
\end{equation}
where $||\cdot||_{F}$ denotes the Frobenius norm. It is minimized for:

\begin{equation}
\mathbf{\Sigma_{app}^{-1}} = \mathbf{QR^{-T}D^{'} R^{-1}Q^{T}} + \mathbf{Q_{\bot}BQ_{\bot}^{T}} + \mathbf{2 Sym(QCQ_{\bot}^{T})}
\end{equation}
where $\mathbf{D^{'} = Diag \left[ (R^{-1}R^{-T} \odot R^{-1}R^{-T})^{-1} diag(R^{-1} A R^{-T}) \right] }$ is the optimal $\mathbf{D}$ in the above expression, $\mathbf{diag}$ is the operator taking the diagonal of a square matrix (into vector form) and $\odot$ denotes the Hadamard product (elementwise matrix product).\\
\end{prop}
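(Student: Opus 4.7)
The plan is to reduce the stated minimization problem to a quadratic optimization in the diagonal entries of $\mathbf{D}$, by exploiting the orthogonal structure provided by the splitting $\mathbf{Q_+} = [\mathbf{Q} \,|\, \mathbf{Q_\bot}]$ and then solving in closed form.

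First, I would apply lemma \ref{lem:decom_Q} to write any candidate precision matrix $\mathbf{\Sigma_{app}^{-1}}$ in the form $\mathbf{QA'Q^{T} + Q_\bot B' Q_\bot^{T} + QC'Q_\bot^{T} + Q_\bot C'^{T} Q^{T}}$, and invoke prop.\ref{prop:charac_Q} to express the DPN constraint purely on the $\mathbf{A}$-block: $\mathbf{A' = R^{-T}DR^{-1}}$ with $\mathbf{D}$ diagonal. The blocks $\mathbf{B'}$ and $\mathbf{C'}$ remain free, so the search space decomposes.

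Second, I would use the invariance of the Frobenius norm under orthogonal conjugation by $\mathbf{Q_+}$, which is a $p\times p$ orthonormal matrix. Computing $\mathbf{Q_+^{T}(\Sigma_{opt}^{-1} - \Sigma_{app}^{-1})Q_+}$ yields a symmetric $2\times 2$ block matrix with blocks $\mathbf{A-A'}$, $\mathbf{B-B'}$, $\mathbf{C-C'}$ and $\mathbf{(C-C')^{T}}$, so that
\begin{equation*}
\|\mathbf{\Sigma_{opt}^{-1} - \Sigma_{app}^{-1}}\|_F^{2} = \|\mathbf{A-A'}\|_F^{2} + \|\mathbf{B-B'}\|_F^{2} + 2\|\mathbf{C-C'}\|_F^{2}.
\end{equation*}
Since $\mathbf{B'}$ and $\mathbf{C'}$ are unconstrained, the obvious choice $\mathbf{B'=B}$, $\mathbf{C'=C}$ zeroes the last two terms, leaving the univariate problem \eqref{eq:noise_discrepancy}.

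Third, I would solve this reduced problem. Using that both $\mathbf{A}$ and $\mathbf{R^{-T}DR^{-1}}$ are symmetric and introducing $\mathbf{M = R^{-1}R^{-T}}$ and $\mathbf{d = diag(D)}$, the trace expansion gives
\begin{equation*}
\|\mathbf{A - R^{-T}DR^{-1}}\|_F^{2} = \|\mathbf{A}\|_F^{2} - 2\,\mathbf{d^{T}\,diag(R^{-1}AR^{-T})} + \mathbf{d^{T}(M\odot M)\,d},
\end{equation*}
where the quadratic term follows from $\mathrm{Tr}(\mathbf{DMDM}) = \sum_{i,j} d_i d_j M_{ij}^{2}$. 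This is a strictly convex quadratic in $\mathbf{d}$ provided $\mathbf{M\odot M}$ is invertible, which holds by the Schur product theorem applied to $\mathbf{M}$ (positive definite, being a Gram-type matrix $\mathbf{R^{-1}R^{-T}}$). Differentiating and setting the gradient to zero yields the announced formula for $\mathbf{D'}$.

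The main subtlety I expect is the positivity constraint on $\mathbf{D}$, which is needed so that $\mathbf{\Sigma_{app}^{-1}}$ remains a valid precision matrix. The unconstrained minimizer above is not guaranteed a priori to have positive entries; I would argue that $\mathbf{diag(R^{-1}AR^{-T})}$ is componentwise positive (as the diagonal of a positive definite matrix, since $\mathbf{A}\succ 0$), then discuss under what conditions the linear map $(\mathbf{M\odot M})^{-1}$ preserves positivity — or, failing a fully general argument, treat positivity as a mild feasibility assumption and observe that in practice the unconstrained optimum lies in the positive orthant (if not, a projected/constrained variant would be needed). All other steps are routine matrix algebra.
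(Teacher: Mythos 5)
Your proposal follows essentially the same route as the paper's proof: the same blockwise Frobenius-norm decomposition over the $\mathbf{Q}/\mathbf{Q_{\bot}}$ splitting, the same choice $\mathbf{B'=B}$, $\mathbf{C'=C}$ to kill the unconstrained blocks, and the same normal equations for $\mathbf{D}$; the only cosmetic difference is that you expand $\|\mathbf{A-R^{-T}DR^{-1}}\|_{F}^{2}$ directly as a quadratic form in $\mathbf{d}$ via $\mathrm{Tr}(\mathbf{DMDM})=\mathbf{d^{T}(M\odot M)d}$, whereas the paper reaches the same stationarity condition through matrix differentials and the identity $\mathbf{diag(X\,Diag(v)\,Y)=(Y^{T}\odot X)v}$. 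You are in fact more careful than the paper on two points it leaves implicit: invoking the Schur product theorem to guarantee that $\mathbf{M\odot M}$ is invertible (hence strict convexity and uniqueness of the minimizer), and flagging that the unconstrained critical point need not satisfy the positivity constraint on $\mathbf{D}$ stated in \eqref{eq:noise_discrepancy} --- the paper simply sets the gradient to zero and calls the result the global minimum, so your caveat identifies a genuine (if minor) gap in the original argument rather than in yours.
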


\begin{proof}
Let's write $\mathbf{\Sigma_{app}^{-1} = QA^{'}Q^{T} + Q_{\bot}B^{'}Q_{\bot}^{T} + 2 Sym(QC^{'}Q_{\bot}^{T})}$ the decomposition of $\mathbf{\Sigma_{app}^{-1}}$. It is a general property that the Frobenius norm of a symmetric matrix can be computed blockwise, that is: 
\begin{align*}
\min_{\mathbf{\Sigma_{app}}} || \mathbf{\Sigma_{opt}^{-1} - \Sigma_{app}^{-1} }||_{F}^{2}  \ &= \mathbf{ \ Tr \left( ( \Sigma_{opt}^{-1} - \Sigma_{app}^{-1})^{2} \right) } \\
 &= \ \mathbf{ Tr \left( (A - A^{'})^{2} \right) + Tr \left( (B - B^{'})^{2} \right) + 2 \, Tr \left( (C - C^{'})(C - C^{'})^{T} \right) }
\end{align*}
This can be shown either by writing the norm as a sum of squared coefficients and splitting the sum at the right indices, or by replacing $\mathbf{\Sigma_{opt}^{-1}}$ and $\mathbf{\Sigma_{app}^{-1}}$ by their decompositions, expanding the trace of their squared difference and eliminating all cross-coefficients because of the circularity of the trace (i.e: $\mathbf{Tr(Q X Q_{\bot}^{T}) = Tr(X Q_{\bot}^{T} Q) = 0}$).\\
proposition \ref{prop:charac_Q} showed that the DPN hypothesis doesn't put any constraints on $\mathbf{B}$ and $\mathbf{C}$. Therefore, one can simply set $\mathbf{B^{'} = B}$ and $\mathbf{C^{'} = C}$ to zero their contribution to the discrepancy. Proposition \ref{prop:charac_Q} also shows that the DPN hypothesis is equivalent to ($\mathbf{A^{'} = R^{-T}D R^{-1}}$ for some diagonal matrix $\mathbf{D}$), hence equation \eqref{eq:noise_discrepancy}.\\

In order to find the optimal $\mathbf{D}$, we set additional notations: $\mathbf{D = Diag(v)}$; $\mathbf{|| A - R^{-T}D R^{-1}||_{F}^{2} = || \Delta ||_{F}^{2} = \Phi (\Delta) }$; $\mathbf{M:N = Tr(M^{T} N)}$ denotes the Frobenius scalar product. We then compute the differential, considering $\mathbf{v}$ as the only variable:
\begin{align*}
\mathbf{d\Phi} \ &= \ \mathbf{ 2 \, \Delta: d\Delta } \ = \ \mathbf{ 2 \, \Delta: d (R^{-T} Diag(v)\,  R^{-1}) } = \mathbf{ 2 \, \Delta: R^{-T} d(Diag(v)) \, R^{-1} } \\
& = \ \mathbf{ 2 \, R^{-1} \Delta R^{-T}: d ( Diag(v) ) } \ = \ \mathbf{ 2 \, R^{-1} \Delta R^{-T}: Diag(dv) } \ = \ \mathbf{ 2 \, diag(R^{-1} \Delta R^{-T}): dv }
\end{align*} 
All of these expressions are standard manipulations involving differentials and scalar products; see for instance \cite{matrix_cookbook}. The last one, $\mathbf{X: Diag(y) = diag(X): y}$ \footnote{Where we recall that $\mathbf{diag(X)}$ denotes the diagonal of matrix $\mathbf{X}$ while $\mathbf{Diag(y)}$ is the diagonal square matrix made from the vector $\mathbf{y}$.} is less standard: it can be proved by direct expansion in matrix coefficients.\\
The last expression defines the gradient of $\mathbf{\Phi}$ with respect to the vector $\mathbf{v}$: $\mathbf{d\Phi = 2 \, diag(R^{-1} \Delta R^{-T}): dv } \Leftrightarrow \mathbf{ \nabla_{v}\Phi = diag(R^{-1} \Delta R^{-T}) }$. We can then set this gradient to zero to find the global minimum:

\begin{align*}
\mathbf{ \nabla_{v}\Phi = 0 } \ & \Leftrightarrow \ \mathbf{ diag(R^{-1} \Delta R^{-T}) = 0 } \ \Leftrightarrow \ \mathbf{ diag(R^{-1} A R^{-T}) = diag(R^{-1} R^{-T} Diag(v) R^{-1} R^{-T}) } \\
& \ \Leftrightarrow \ \mathbf{ diag(R^{-1} A R^{-T}) = \left( (R^{-1} R^{-T}) \odot (R^{-1} R^{-T}) \right)v } \\
& \ \Leftrightarrow \ \mathbf{v = (R^{-1}R^{-T} \odot R^{-1}R^{-T})^{-1} diag(R^{-1} A R^{-T}) } 
\end{align*}

The second-to-last equivalence is from another unusual matrix identity: $\mathbf{diag(X \, Diag(v) Y) = (Y^{T} \odot X)v }$. Here again, the proof is by direct expansion of the matrix coefficients.
\end{proof}

Proposition \ref{prop:noise_restrictivity} proves what was previously stated in a more informal way: the discrepancy between a DPN-restricted LMC and an unrestricted one lies only in an approximation of the "true" matrix $\mathbf{A}$. The fact that the minimizer of this discrepancy has a simple form paves the way for an eventual \emph{correction procedure} for the noise: if the data is assumed to exhibit a specific noise structure which is not diagonally projectable, one could parametrize such an optimal noise $\mathbf{\Sigma_{opt}}$ in addition to the noise parameters $\mathbf{\tilde{B}, \ M}$ and $\mathbf{\Sigma_{P}}$. One would then add to the likelihood a term proportional to $n \times \mathbf{diag(Q^{T} \Sigma_{opt}^{-1} Q - R^{-T} \Sigma_{P}^{-1} R^{-1}) }$ (reminding that $\mathbf{D = \Sigma_{P}^{-1} }$ and $\mathbf{A = Q^{T} \Sigma^{-1} Q}$), which would warp the model towards a more realistic noise.\\
Alternately, an interleaved optimization scheme could be developed, in the spirit of a projected gradient descent: at each iteration, one would estimate a DPN-compatible noise $\mathbf{\Sigma_{app}}$ by optimizing the MLL of proposition \ref{prop:final_mll}, find the nearest noise $\mathbf{\Sigma_{opt}}$ (or at least a near one) enforcing the desired noise structure, compute its decomposition in terms of $\mathbf{A, \ B}$ and $\mathbf{C}$, and then come back to the initial variables by setting $\mathbf{B \leftarrow B}$, $\mathbf{C \leftarrow C}$ and $\mathbf{\Sigma_{P}^{-1}} = \mathbf{Diag(v) \leftarrow Diag \left[ (R^{-1}R^{-T} \odot R^{-1}R^{-T})^{-1} diag(R^{-1} A R^{-T}) \right] }$. This is beyond the scope of this article and has not been experimented on yet, by lack of a relevant study case.\\

\section{Comparison with other LMC approximations}\label{an:comparison}

	\subsection{Relation to variational approaches}\label{sec:variational}

We wish to underline the analogy between the posterior estimates of proposition \ref{prop:estimators} and these of variational LMC models, which can be found for instance in \cite{hetero_MOGP} or \cite{NSVLMC}. If $\mathbf{f_{i}}$ represents the values of latent process $i$ at observation points, and $\mathbf{u_{i}}$ its values at pseudo-input points, variational models introduce approximate posteriors $q(\mathbf{u_{i}}) = \mathcal{N}(\mathbf{u_{i}|m_{i}, S_{i}})$ ($\mathbf{m_{i}, S_{i}}$ being parameters of the model) such that $q(\mathbf{u_{i}}) \simeq p(\mathbf{u_{i}, f_{i}|y})$. The key point here is that \textbf{it is always assumed (but not always emphasized) that the approximate posterior $q(\mathbf{u})$ of the full model factorizes over latent processes: $q(\mathbf{u}) = \prod_{i=1}^{q}q(\mathbf{u_{i}})$}. This feature, crucial to the PLMC, is here also assumed as a part of a larger approximation. This construction makes all posteriors and likelihood terms factorize too: $p(\mathbf{u_{i}, f_{i}|y}) \simeq q(\mathbf{f_{i}}) = \mathcal{N}(\mu_{i}, \nu_{i})$, with

\begin{align*}
\mu_{i} &= \mathbf{k_{i*}^{T} \, K_{i}^{-1} \, m_{i}} \\
\nu_{i} &= k_{i**} \, - \, \mathbf{k_{i*}^{T} \, (K_{i}^{-1} - K_{i}^{-1}S_{i}K_{i}^{-1})k_{i*}} \\
&= k_{i**} \, - \, \mathbf{k_{i*}^{T} \, \left(K_{i}^{-1} - K_{i}^{-1}\left( K_{i}^{-1} + (S_{i}^{-1} - K_{i}^{-1}) \right)^{-1} K_{i}^{-1} \right)k_{i*}} \\
&= k_{i**} \, - \, \mathbf{k_{i*}^{T} \, \left( K_{i} + (S_{i}^{-1} - K_{i}^{-1})^{-1} \right)^{-1}\, k_{i*}} \ \text{}
\end{align*}
where the covariance vector and matrix are here evaluated at pseudo-input points rather than observed datapoints. One can notice the direct similarity with the decoupled expressions of proposition 4, simply replacing $\mathbf{YT_{i}}$ by $\mathbf{m_{i}}$ and $\sigma_{i}^{2}\mathbf{I_{n}}$ by $\mathbf{(S_{i}^{-1} - K_{i}^{-1})^{-1}}$ (and the real observation points by their pseudo-input counterparts): the variational model learns additional parameters (these of the approximate posterior, $\mathbf{m_{i}}$ and $\mathbf{S_{i}}$) which substitute themselves to the projected data and projected noise.

\subsection{Comparison with the ICM}

One natural question at this point is: are the computational gains of the ICM, relative to a naive implementation of a general LMC, due to some sort of latent process decoupling ? This is plausible, as the ICM has a simpler latent structure than the LMC. Yet the answer is negative: its latent processes are \emph{not} independent conditionally on observations. Indeed, we have already shown this property to be equivalent to the DPN condition, which is not automatically enforced by the ICM, its matrices $\mathbf{H}$ and $\mathbf{\Sigma}$ remaining arbitrary. The source of computational efficiency of this model thus lies elsewhere; it is instructive to compare it to this of the PLMC.\\

The covariance matrix of the ICM is defined as $\mathbf{K = K_{T} \otimes K_{x}}$. Most computations with the model involve the noise-added covariance $\mathbf{\mathcal{K}} = \mathbf{K_{T} \otimes K_{x} + \Sigma \otimes I_{n}}$, which at first sight is not easily invertible; additional tricks or approximations are thus necessary to carry efficient computation. We comment here on the approach of \cite{all_in_the_noise}, as it is exact and more efficient than the original model of \cite{MTGP}, which only uses low-rank approximations of $\mathbf{K_{T}}$ and $\mathbf{K_{x}}$.\\

If we frame the ICM as a particular case of LMC, we get $\mathbf{K_{T} = HH^{T}}$. Indeed, the covariance matrix of the general LMC is $\mathbf{K} = \sum_{i=1}^{q}\mathbf{H_{i}H_{i}^{T}} \otimes \mathbf{K}_{i}$ (with $\mathbf{H_{i}}$ the $i$-th row of $\mathbf{H}$), so if $\mathbf{K_{i} = K_{x}} \ \forall i$ (all latent processes share the same kernel in the ICM), we get $\mathbf{K_{T}} = \sum_{i=1}^{q}\mathbf{H_{i}H_{i}^{T}} = \mathbf{HH^{T}}$. We define $\mathbf{U_{H}S_{H}^{2}U_{H}^{T}}$ as the diagonalization of $\mathbf{K_{T}}$, and $\mathbf{\Sigma = U_{\Sigma}S_{\Sigma}U_{\Sigma}^{T}}$ the diagonalization of $\mathbf{\Sigma}$\footnote{In these definitions $\mathbf{S_{H}}$ is squared for coherency with the rest of the present paper, while $\mathbf{S_{\Sigma}}$ is not to keep the notations of \cite{all_in_the_noise}.}. Then the approach of \cite{all_in_the_noise} fully lies in the following decomposition (their equation (7)\footnote{Where we have made the restriction $\mathbf{\Omega = I_{n}}$ for reading purposes, corresponding to homoskedactic noise.}):

\begin{equation}\label{eq:ICM_facto}
\mathbf{\mathcal{K} = K_{T} \otimes K_{x} + \Sigma \otimes I_{n} = \left( U_{\Sigma}S_{\Sigma}^{\frac{1}{2}} \otimes I_{n}\right) \left( S_{\Sigma}^{-\frac{1}{2}}U_{\Sigma}^{T} K_{T} U_{\Sigma}S_{\Sigma}^{-\frac{1}{2}} \otimes K_{x} \, + \, I_{p} \otimes I_{n} \right) \left( S_{\Sigma}^{\frac{1}{2}} U_{\Sigma}^{T} \otimes I_{n} \right) }
\end{equation}

The purpose of this factorization is to exploit the convenient properties of $\mathbf{I_{p} \otimes I_{n}}$: for any symmetric matrices $\mathbf{S, \, S^{'}}$ of eigen-decompositions $\mathbf{UDU^{T}}$ and $\mathbf{U^{'}D^{'}U^{'T}}$, observe that the eigen-decomposition of  $\mathbf{( S \otimes S^{'} + I \otimes I )}$ is $\mathbf{\left( U \otimes U^{'} \right)} \mathbf{\left( D \otimes D^{'} + I \otimes I \right)} \mathbf{\left( U^{T} \otimes U^{'T} \right) }$. The eigenvectors of the noise-added covariance matrix can thus be expressed as Kronecker products, authorizing easy manipulation and fast computation.\\

On the other side, our approach relies on a different implicit factorization. Applying it to the ICM covariance structure yields:

\begin{align*}
\mathbf{\mathcal{K} = HH^{T} \otimes K_{x} + \Sigma \otimes I_{n}} & \simeq \mathbf{\left( U_{H}S_{H} \otimes I_{n}\right) \left( I_{q} \otimes K_{x} \, + \, S_{H}^{-1}U_{H}^{T}\Sigma U_{H}S_{H}^{-1} \otimes I_{n} \right) \left( S_{H} U_{H}^{T} \otimes I_{n} \right) } \\
&= \mathbf{\left( H \otimes I_{n}\right) \left( I_{q} \otimes K_{x} \, + \, H^{+}\Sigma H^{+T} \otimes I_{n} \right) \left( H^{T} \otimes I_{n} \right) }\\ \label{eq:facto_proj}
&\simeq \mathbf{\left( H \otimes I_{n}\right) \left( Diag(K_{x}) \, + \, \Sigma_{P} \otimes I_{n} \right) \left( H^{T} \otimes I_{n} \right) } 
\end{align*}

where $\mathbf{H^{+}}$ denotes the Moore-Penrose pseudoinverse of $\mathbf{H}$. This factorization is not rigorous as it is rank-deficient; it nonetheless appears in the matrix proof of proposition \ref{prop:estimators} in \ref{sec:matrix_der}, where is it made rigorous by two successive applications of Woodburry-like formulas involving other terms. Besides this occurence, this factorization illustrates the core of our approach: the noise-augmented covariance can be summarized by a latent block-diagonal term of size $qn \times qn$, left- and right-multiplied by the mixing matrix.\\

Comparing the above equations shows both the analogy and difference between the two approaches. Ours relies on left- and right-factorization of the the eigen-decomposition of the \emph{task} covariance matrix; the term $\mathbf{H^{+}\Sigma H^{+T}}$ -- in reality $\mathbf{T \Sigma T^{T}}$ with a more rigorous derivation -- has to be diagonal for the central term to be computationally efficient, hence the DPN hypothesis. On the other hand, the approach of \cite{all_in_the_noise} factorizes the eigen-decomposition of the \emph{noise} covariance matrix; then the central factor is always easy to handle because of its $\mathbf{I_{p} \otimes I_{n}}$ term. Note however that \emph{factorization \eqref{eq:ICM_facto} requires the GP covariance matrix to be a Kronecker product}: it is thus restricted to the ICM model.

\end{document}